\documentclass[a4paper]{article}
\usepackage[round]{natbib}
\usepackage{hyperref}
\usepackage{graphicx}
\usepackage{booktabs}
\usepackage{url}
\usepackage{algorithm}
\usepackage[noend]{algpseudocode}
\usepackage[bm,algpseudocodecmds,subfigurepackage,nocitepackage,microtype]{pauli_new}
\usepackage{xspace}
\usepackage{color}
\usepackage{footmisc}
\usepackage{rotating}
\usepackage{multirow}

\usepackage{mathtools}
\DeclarePairedDelimiter{\ceil}{\lceil}{\rceil}

\usepackage{defines}



\graphicspath{{figs/}}

\expandafter\def\expandafter\UrlBreaks\expandafter{\UrlBreaks
  \do\a\do\b\do\c\do\d\do\e\do\f\do\g\do\h\do\i\do\j%
  \do\k\do\l\do\m\do\n\do\o\do\p\do\q\do\r\do\s\do\t%
  \do\u\do\v\do\w\do\x\do\y\do\z\do\A\do\B\do\C\do\D%
  \do\E\do\F\do\G\do\H\do\I\do\J\do\K\do\L\do\M\do\N%
  \do\O\do\P\do\Q\do\R\do\S\do\T\do\U\do\V\do\W\do\X%
    \do\Y\do\Z}
\newcommand\lword[1]{\leavevmode\nobreak\hskip0pt plus\linewidth\penalty50\hskip0pt plus-\linewidth\nobreak\textbf{#1}}


\begin{document}

\title{Algorithms for Approximate Subtropical Matrix Factorization}

\author{
  Sanjar Karaev and Pauli Miettinen \\
  Max-Planck-Institut f\"ur Informatik\\
  Saarland Informatics Campus\\
  Saarbr\"ucken, Germany\\
  \texttt{\{skaraev,pmiettin\}@mpi-inf.mpg.de}
}
\date{}

\maketitle

\begin{abstract}
Matrix factorization methods are important tools in data mining and analysis. They can be used for many tasks, ranging from dimensionality reduction to visualization. In this paper we concentrate on the use of matrix factorizations for finding patterns from the data. Rather than using the standard algebra -- and the summation of the rank-1 components to build the approximation of the original matrix -- we use the subtropical algebra, which is an algebra over the nonnegative real values with the summation replaced by the maximum operator. Subtropical matrix factorizations allow ``winner-takes-it-all'' interpretations of the rank-1 components, revealing different structure than the normal (nonnegative) factorizations. We study the complexity and sparsity of the factorizations, and present a framework for finding low-rank subtropical factorizations. We present two specific algorithms, called Capricorn and Cancer, that are part of our framework. They can be used with data that has been corrupted with different types of noise, and with different error metrics, including the sum-of-absolute differences, Frobenius norm, and Jensen--Shannon divergence. Our experiments show that the algorithms perform well on data that has subtropical structure, and that they can find factorizations that are both sparse and easy to interpret.


\end{abstract}

\section{Introduction}
\label{sec:introduction}

Finding simple patterns that can be used to describe the data is one of the main problems in data mining. The data mining literature knows many different techniques for this general task, but one of the most common pattern finding technique rarely gets classified as such. Matrix factorizations (or decompositions, these two terms are used interchangeably in this paper) represent the given input matrix $\mA$ as a product of two (or more) factor matrices, $\mA\approx \mB\mC$. This standard formulation of matrix factorizations makes their pattern mining nature less obvious, but let us write the matrix product $\mB\mC$ as a sum of rank-1 matrices, $\mB\mC = \mF_1 + \mF_2 + \cdots + \mF_k$, where $\mF_i$ is the outer product of the $i$th column of $\mB$ and the $i$th row of $\mC$. Now it becomes clear that the rank-1 matrices $\mF_i$ are the ``simple patterns'' and the matrix factorization is finding $k$ such patterns whose sum is a good approximation of the original data matrix.

This so-called ``component interpretation''~\citep{skillicorn07understanding} is more appealing with some factorizations than with others. For example, the classical singular value decomposition (SVD) does not easily admit such an interpretation, as the components are not easy to interpret without knowing the earlier components. On the other hand, the motivation for the nonnegative matrix factorization (NMF) often comes from the component interpretation, as can be seen, for example, in the famous ``parts of faces'' figures of \citet{lee_seung}.
The ``parts-of-whole'' interpretation is in the hearth of NMF: every rank-1 component adds something to the overall decomposition, and never removes anything. This aids with the interpretation of the components, and is also often claimed to yield sparse factors, although this latter point is more contentious \citep{hoyer04non-negative}. 

Perhaps the reason why matrix factorization methods are not often considered as pattern mining methods is that the rank-1 matrices are summed together to build the full data. Hence, it is rare for any rank-1 component to explain any part of the input matrix alone. But the use of summation as a way to aggregate the rank-1 components can be considered to be ``merely'' a consequence of the fact that we are using the standard algebra. If we change the algebra -- in particular, if we change how we define the summation -- we change the operator used for the aggregation. In this work, we propose to use the \emph{maximum} operator to define the summation over the nonnegative matrices, giving us what is known as the \emph{subtropical algebra}. As the aggregation of the rank-1 factors is now the element-wise maximum, we obtain what we call the ``winner-takes-it-all'' interpretation: the final value of each element in the approximation is defined only by the largest value in the corresponding element in the rank-1 matrices.

Not only does the subtropical algebra give us the intriguing winner-takes-it-all interpretation, it also provides guarantees about the sparsity of the factors, as we will show in Section~\ref{sec:sparsity}. Furthermore, the different algebra means that we are finding different factorizations compared to NMF (or SVD). The emphasis here is on the word \emph{different}: the factorizations can be better or worse in terms of the reconstruction error -- we will discuss this in Section~\ref{sec:other_alg} -- but the patterns they find are usually different to those found by NMF. Unfortunately, the related optimization problems are \NP-hard (see Section~\ref{sec:comput_complex}). In Section~\ref{sec:algorithms}, we will develop a general framework, called \Equator, for finding approximate, low-rank subtropical decompositions, and we will present two instances of this framework, tailored towards different types of data and noise, called \Capricorn and \Cancer.\!\footnote{This work is a combined and extended version of our preliminary papers that described these algorithms \citep{karaev16cancer,karaev16capricorn}.} \Capricorn assumes integer data with noise that randomly flips the value to some other integer, whereas \Cancer assumes continuous-valued data with standard Gaussian noise. 

Our experiments (see Section~\ref{sec:experiments}) show that both \Capricorn and \Cancer work well on datasets that have the kind of noise they are designed for, and they outperform SVD and different NMF methods when data has subtropical structure. On real-world data, \Cancer is usually the better of the two, although in terms of reconstruction error, neither of the methods can challenge SVD. On the other hand, both \Cancer and \Capricorn return interpretable results that show different aspects of the data compared to factorizations made under the standard algebra.


\section{Notation and Basic Definitions}
\label{sec:notation}

\paragraph{Basic notation.}
Throughout this paper,  we will denote a matrix by upper-case boldface letters ($\matr{A}$), and vectors by lower-case boldface letters ($\vec{a}$). The $i$th row of matrix $\matr{A}$ is denoted by $\matr{A}_{i}$ and the $j$th column by $\matr{A}^{j}$. The matrix $\mA$ with the $i$th column removed is denoted by $\mA^{-i}$, and $\mA_{-i}$ is the respective notation for $\mA$ with a removed row. 
Most matrices and vectors in this paper are restricted to the nonnegative real numbers $\Region = [0,\infty)$.

We use the shorthand $[n]$ to denote the set $\{1, 2, \ldots, n\}$.





\paragraph{Algebras.}
In this paper we consider matrix factorization over so called \emph{max-times} (or \emph{subtropical}) \emph{algebra}. It differs from the standard algebra of real numbers in that addition is replaced with the operation of taking the maximum. Also the domain is restricted to the set of nonnegative real numbers. 

\begin{definition} 
  \label{def:max-times}
  The \emph{max-times} (or \emph{subtropical}) algebra is a set $\Region$ of nonnegative real numbers together with operations $a \maxadd b = \max \lbrace a, b\rbrace$ (addition) and $a \maxmult b = ab$ (multiplication) defined for any $a, b \in \Region$. The identity element for addition is $0$ and for multiplication it is $1$.
\end{definition}
In the future we will use the notation $a\maxadd b$ and $\max \lbrace a, b\rbrace$  and the names \emph{max-times} and \emph{subtropical} interchangeably. It is straightforward to see that the max-times algebra is a \emph{dioid}, that is, a semiring with idempotent addition ($a \maxadd a = a$). It is important to note that subtropical algebra is anti-negative, that is, there is no subtraction operation.

A very closely related algebraic structure is the \emph{max-plus} (\emph{tropical}) algebra \citep[see e.g.][]{akian07max-plus}. 
\begin{definition}
  \label{def:max-plus}
  The \emph{max-plus} (or \emph{tropical}) algebra is defined over the set of extended real numbers $\R \cup \{-\infty\}$ with operations $a \tropadd b = \max \lbrace a, b\rbrace$ (addition) and $a \tropmult b = a+b$ (multiplication). The identity elements for addition and multiplication are $-\infty$ and $0$, respectively.
\end{definition}

The tropical and subtropical algebras are isomorphic \citep{blondel2000approximating}, which can be seen by taking the logarithm of the subtropical algebra or the exponent of the tropical algebra (with the conventions that $\log 0 = -\infty$ and $\exp(-\infty) = 0$).
Thus, most of the results we prove for subtropical algebra can be extended to their tropical analogues, although caution should be used when dealing with approximate matrix factorizations. The latter is because, as we will see in Theorem~\ref{thm:max_plus_bound}, the \emph{reconstruction error} of an approximate matrix factorization under the two different algebras does not transfer directly. 


\paragraph{Matrix products and ranks.}
The matrix product over the subtropical algebra is defined in the natural way:

\begin{definition} \label{def:mprod}
  The \emph{max-times matrix product} of  two matrices $\matr{B} \in \Region^{n \times k}$ and $\matr{C} \in \Region^{k \times m}$ is defined as
  \begin{equation}
    \label{eq:mprod}
(\matr{B} \maxprod \matr{C})_{ij} = \max_{s = 1}^k \matr{B}_{is} \matr{C}_{sj} \;.
  \end{equation}
\end{definition}

We will also need the matrix product over the \emph{tropical} algebra.

\begin{definition} \label{def:tropical:mprod}
  For two matrices $\matr{B} \in (\R \cup \{-\infty\})^{n \times k}$ and $\matr{C} \in (\R \cup \{-\infty\})^{k \times m}$, their \emph{tropical matrix product}  is defined as
  \begin{equation}
    \label{eq:mprod}
(\matr{B} \tropprod \matr{C})_{ij} = \max_{s = 1}^k \lbrace \matr{B}_{is} + \matr{C}_{sj}\rbrace \;.
  \end{equation}
\end{definition}

The \emph{matrix rank} over the subtropical algebra can be defined in many ways, depending on which definition of the normal matrix rank is taken as the starting point. We will discuss different subtropical ranks in detail in Section~\ref{sec:subtropical_ranks}. Here we give the main definition of the rank we are using throughout this paper, the so-called \emph{Schein} (or \emph{Barvinok}) \emph{rank} of a matrix.

\begin{definition}
  \label{def:mrank}
  The \emph{max-times (Schein or Barvinok) rank} of a matrix $\matr{A}\in\R_+^{n\times m}$ is the least integer $k$ such that $\matr{A}$ can be expressed as an element-wise maximum of $k$ rank-1 matrices, $\matr{A} = \matr{F}_1 \maxadd \matr{F}_2 \maxadd\cdots\maxadd\matr{F}_k$. Matrix $\mF\in \Region^{n\times m}$ has subtropical (Schein/Barvinok) rank of 1 if there exist column vectors $\vx\in\Region^n$ and $\vy\in\Region^m$ such that $\mF = \vx\vy^T$. Matrices with subtropical Schein (or Barvinok) rank of 1 are called \emph{blocks}.
\end{definition}

When it is clear from the context, we will use the term \emph{rank} (or \emph{subtropical rank}) without other qualifiers to denote the subtropical Schein/Barvinok rank.

\paragraph{Special matrices.}
The final concepts we need in this paper are \emph{pattern matrices} and \emph{dominating matrices}.

\begin{definition}
  \label{def:pattern}
  A \emph{pattern} of a matrix $\matr{A}\in\R^{n\times m}$ is an \by{n}{m} binary matrix $\matr{P}$ such that $\matr{P}_{ij} = 0$ if and only if $\matr{A}_{ij} = 0$, and otherwise $\matr{P}_{ij} = 1$.  We denote the pattern of $\mA$ by $\pattern(\mA)$.
\end{definition}

\begin{definition} \label{def:dominate}
  Let $\matr{A}$ and $\matr{X}$ be matrices of the same size, and let $\Gamma$ be a subset of their indices.  Then if for all indices $(i, j) \in \Gamma$, $\matr{X}_{ij} \ge \matr{A}_{ij}$, we say that \emph{$\matr{X}$ dominates $\matr{A}$ within  $\Gamma$}. If $\Gamma$ spans the entire size of $\matr{A}$ and $\matr{X}$, we simply say that $\matr{X}$ \emph{dominates} $\matr{A}$. Correspondingly, $\matr{A}$ is said to be \emph{dominated by} $\matr{X}$.
  \end{definition}

  \paragraph{Main problem definition.}
Now that we have sufficient notation, we can formally introduce the main problem considered in the paper.
\begin{problem}[Approximate subtropical rank-$k$ matrix factorization]
  \label{problem:mdecomp}
  Given a matrix $\matr{A} \in \Region^{n \times m}$ and an integer $k>0$, find factor matrices $\matr{B} \in \Region^{n \times k}$ and $\matr{C} \in \Region^{k \times m}$ minimizing
\begin{equation}
  \label{eq:mdecomp}
  E(\matr{A}, \matr{B}, \matr{C}) = 
  \norm{\matr{A} - \matr{B} \maxprod \matr{C}}\;.
\end{equation}
\end{problem}
Here we have deliberately not specified any particular norm. Depending on the circumstanses, different matrix norms can be used, but in this paper we will consider the two most natural choices -- the Frobenius and $L_1$ norms.


\section{Theory}
\label{sec:theory}

Our main contributions in this paper are the algorithms for the subtropical matrix factorization. But before we present them, it is important to understand the theoretical aspects of subtropical factorizations. We will start by studying the computational complexity of Problem~\ref{problem:mdecomp}. After that, we will show that the dominated subtropical factorizations of sparse matrices are sparse. Finally, we compare the subtropical factorizations to factorizations over other algebras, and discuss different ways to define the subtropical rank, and the relationships between these ranks. 

\subsection{Computational complexity}
\label{sec:comput_complex}

The computational complexity of different matrix factorization problems varies. For example, SVD can be computed in polynomial time \citep{golub}, while NMF is \NP-hard \citep{vavasis09complexity}. Unfortunately, the subtropical factorization is also \NP-hard.

\begin{theorem}\label{thm:npcomplete}
Computing the max-times matrix rank is an \NP-hard problem, even for binary matrices.
\end{theorem}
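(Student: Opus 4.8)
The plan is to reduce the problem to computing the \emph{Boolean rank} of a binary matrix, i.e.\ the least $k$ for which $\mA_{ij}=\bigvee_{s=1}^{k}(\mB_{is}\wedge\mC_{sj})$ holds with $\mB,\mC$ binary. Computing the Boolean rank is a classical $\NP$-hard problem (it is equivalent to minimum biclique cover of the bipartite graph of the $1$-entries, or to the set-basis problem). So it suffices to show that, \emph{on binary inputs}, the subtropical (Schein/Barvinok) rank and the Boolean rank coincide; the hardness then transfers verbatim, including the restriction to binary matrices.

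First I would record the correspondence between rank-$1$ blocks and Boolean rank-$1$ matrices. A block is $\mF=\vx\vy^{T}$ with $\vx,\vy\in\Region$, and by anti-negativity its entry $\mF_{ij}=\vx_i\vy_j$ is positive exactly when $\vx_i>0$ and $\vy_j>0$. Hence $\pattern(\mF)=\pattern(\vx)\pattern(\vy)^{T}$ is precisely a Boolean rank-$1$ matrix (a combinatorial rectangle). This is the bridge between the two notions of rank.

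The easy direction is $\text{max-times rank}\le\text{Boolean rank}$. Given a size-$k$ Boolean decomposition of a binary $\mA$, I would note that on $\{0,1\}$ values the maximum agrees with logical OR and ordinary product with logical AND; therefore the very same $\{0,1\}$ factors satisfy $\mA=\mB\maxprod\mC$, yielding a subtropical decomposition of the same rank.

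The substantive direction, where I expect the only real care to be needed, is the reverse inequality. Suppose $\mA=\mF_1\maxadd\cdots\maxadd\mF_k$ with blocks $\mF_s$ of nonnegative real entries. Because the algebra is anti-negative there is no cancellation, so the pattern of an element-wise maximum equals the OR of the patterns: wherever $\mA_{ij}=1$ the maximum is positive, so some $\mF_s$ is positive there, while wherever $\mA_{ij}=0$ every $\mF_s$ must vanish. Since $\mA$ is binary, $\mA=\pattern(\mA)=\bigvee_{s=1}^{k}\pattern(\mF_s)$, and each $\pattern(\mF_s)$ is Boolean rank-$1$, giving a Boolean decomposition of size $k$. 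Combining the two inequalities gives equality of the ranks on binary matrices, and $\NP$-hardness follows. The delicate point to verify is exactly that passing to patterns neither increases the number of factors nor merely \emph{dominates} $\mA$ but reproduces it entry-for-entry---which is where anti-negativity (the absence of subtraction) is indispensable.
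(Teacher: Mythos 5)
Your proof is correct, but it takes a genuinely different route from the paper's own proof of Theorem~\ref{thm:npcomplete}. The paper obtains the theorem as a direct consequence of a result of \citet{kim2005factorization} (Theorem~\ref{thm:trop_rank_nphard}): computing the max-plus (tropical) rank is \NP-hard even for matrices with entries in $\{-\infty, 0\}$; under the exponential isomorphism between the tropical and subtropical algebras, such matrices become exactly the binary matrices and tropical rank becomes max-times rank, so the hardness transfers at once. You instead reduce from Boolean rank (the set-basis / minimum biclique cover problem) and prove that Boolean and subtropical ranks coincide on binary matrices, including the careful pattern argument in the harder direction, where anti-negativity forces every block to vanish on the zero entries of $\mA$ so that passing to patterns reproduces $\mA$ exactly with the same number of factors. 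This equivalence is precisely the content of the paper's Lemma~\ref{lemma:brank_vs_strank}, and the paper explicitly remarks right after that lemma that it ``furnishes us with another proof of Theorem~\ref{thm:npcomplete}'', since computing the Boolean rank is \NP-complete \citep{miettinen09matrix}; so your argument coincides with the paper's acknowledged alternative proof rather than its primary one. What your route buys is self-containedness modulo a classical combinatorial hardness result, plus a reusable structural fact (the paper later exploits the same lemma to bound Boolean ranks via tropical ranks); what the paper's primary route buys is brevity -- essentially a one-line proof -- at the cost of invoking a deeper external theorem about tropical matrices.
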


The theorem is a direct consequence of the following theorem by \citet{kim2005factorization}:
\begin{theorem}[\citealp{kim2005factorization}]
  \label{thm:trop_rank_nphard}
  Computing the max-plus (tropical) matrix rank is \NP-hard, even for matrices that take values only from $\{-\infty, 0\}$. 
\end{theorem}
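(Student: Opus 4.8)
The plan is to reduce the computation of the tropical rank over $\{-\infty, 0\}$ to the computation of the \emph{Boolean rank}, which is a classical \NP-hard problem (it is equivalent to the minimum biclique cover, or set basis, problem). First I would fix the dictionary between the two algebras: interpret a matrix $\mA$ over $\{-\infty, 0\}$ as a Boolean matrix via the correspondence $0 \leftrightarrow \mathrm{true}$ and $-\infty \leftrightarrow \mathrm{false}$, so that the support of $\mA$ is exactly the set of $\mathrm{true}$ entries. The goal is to show that the tropical (Schein/Barvinok) rank of $\mA$ over $\{-\infty, 0\}$ equals the Boolean rank of the corresponding Boolean matrix, after which the reduction is immediate.

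The heart of the argument is to relate tropical rank-$1$ blocks to Boolean rank-$1$ blocks. A tropical rank-$1$ matrix whose entries lie in $\{-\infty, 0\}$ is an outer product $\vx \tropprod \vy^T$ with $\vx \in \{-\infty,0\}^n$ and $\vy \in \{-\infty,0\}^m$, and then
\begin{equation*}
(\vx \tropprod \vy^T)_{ij} = \vx_i + \vy_j = 0 \iff \vx_i = 0 \text{ and } \vy_j = 0 ,
\end{equation*}
so its support is exactly a combinatorial rectangle (an all-$\mathrm{true}$ submatrix, i.e.\ a biclique). Since tropical addition is $\max$, which corresponds to Boolean OR on $\{-\infty,0\}$, a tropical factorization with $\{-\infty,0\}$-valued factors is precisely a cover of the $\mathrm{true}$ entries of $\mA$ by $k$ such rectangles — that is, exactly a Boolean rank-$k$ factorization.

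The step I expect to be the main obstacle is ruling out that arbitrary real-valued factors could do better than $\{-\infty,0\}$-valued ones, since a priori a tropical factorization of $\mA$ may use finite nonzero values such as $+5$ and $-5$ that cancel. To close this gap I would prove a thresholding lemma: given any tropical factorization $\mA_{ij} = \max_s (\vx^{(s)}_i + \vy^{(s)}_j)$ with $\mA_{ij} \in \{-\infty,0\}$, replace every finite entry of each $\vx^{(s)}, \vy^{(s)}$ by $0$ and keep every $-\infty$ entry as $-\infty$. For any $(i,j)$ with $\mA_{ij} = -\infty$ every term satisfies $\vx^{(s)}_i + \vy^{(s)}_j = -\infty$, so at least one factor in each term is $-\infty$ and the thresholded term stays $-\infty$; for any $(i,j)$ with $\mA_{ij} = 0$ some term equals $0$ with both factors finite, so the thresholded term equals $0$, while every thresholded entry is $\le 0$. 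Hence the thresholded factors are $\{-\infty,0\}$-valued, reproduce $\mA$ exactly, and use no more than $k$ blocks, proving the tropical rank is unchanged by the restriction.

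Finally I would assemble the reduction: the decision problem ``is the tropical rank of $\mA$ at most $k$?'' lies in \NP, since a factorization is a polynomial-size certificate checkable in polynomial time, and by the equivalence above it is exactly the Boolean-rank decision problem, which is \NP-complete. Since the bijection between $\{-\infty,0\}$ matrices and Boolean matrices is computable in polynomial time, computing the tropical matrix rank is therefore \NP-hard even on $\{-\infty,0\}$ inputs.
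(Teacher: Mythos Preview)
Your argument is correct. The thresholding lemma is the only nontrivial step, and your case analysis handles it cleanly: for $\mA_{ij}=-\infty$ each summand already has a $-\infty$ factor, and for $\mA_{ij}=0$ the maximizing summand has two finite factors which both threshold to $0$, while all thresholded summands are bounded above by $0$. This yields the desired equality of the tropical Schein/Barvinok rank over $\{-\infty,0\}$ with the Boolean rank, and the reduction from the \NP-hard Boolean rank problem follows. One small remark: the \NP-membership aside is not needed for the theorem as stated (only \NP-hardness is claimed), and in general it requires an argument that certificates have polynomial bit-size --- which your thresholding lemma in fact supplies, since one may always take the factors over $\{-\infty,0\}$.

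As for comparison with the paper: the paper does not prove this theorem at all --- it is quoted as a result of \citet{kim2005factorization} and used as a black box to obtain the \NP-hardness of the subtropical rank via the tropical/subtropical isomorphism. However, the paper does contain essentially your argument in a different place: Lemma~\ref{lemma:brank_vs_strank} shows that for binary matrices the subtropical rank equals the Boolean rank (the ``pattern'' direction there plays the role of your thresholding step), and the paper explicitly remarks that this furnishes an alternative proof of Theorem~\ref{thm:npcomplete}. Transported through the $\exp/\log$ isomorphism, that lemma is exactly your equivalence between tropical rank on $\{-\infty,0\}$ and Boolean rank. So your route is not the paper's proof of the stated theorem (there is none), but it coincides with the argument the paper deploys, in subtropical dress, a few paragraphs later.
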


While computing the rank deals with exact decompositions, its hardness automatically makes any approximation algorithm with provable multiplicative guarantees unlikely to exist, as the following corollary shows.

\begin{corollary}
\label{corollary:approxnphard}
It is \NP-hard to approximate Problem~\ref{problem:mdecomp} to within any polynomially computable factor.
\end{corollary}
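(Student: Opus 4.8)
The plan is to use a standard gap-introducing reduction that leverages the hardness of exact rank computation (Theorem~\ref{thm:npcomplete}). The key observation is that the optimum of Problem~\ref{problem:mdecomp} is zero exactly when $\mA$ admits an exact rank-$k$ subtropical factorization. Concretely, $\min_{\mB,\mC} E(\mA,\mB,\mC) = 0$ if and only if there exist $\mB\in\Region^{n\times k}$ and $\mC\in\Region^{k\times m}$ with $\mA = \mB\maxprod\mC$, which by Definition~\ref{def:mrank} holds precisely when the subtropical rank of $\mA$ is at most $k$. This ties the optimal value of the approximation problem directly to the (hard) rank.

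Suppose, for contradiction, that there is a polynomial-time algorithm that approximates Problem~\ref{problem:mdecomp} to within some polynomially computable factor $\rho = \rho(n,m)$: on input $(\mA,k)$ it returns factors $\mB,\mC$ with $E(\mA,\mB,\mC)\le \rho\cdot\mathrm{OPT}$, where $\mathrm{OPT}$ is the optimum of~\eqref{eq:mdecomp}. The crucial point is that a multiplicative guarantee is vacuous at the value zero yet still forces exactness there: whenever $\mathrm{OPT}=0$ we have $E(\mA,\mB,\mC)\le\rho\cdot 0 = 0$, so the returned factorization reconstructs $\mA$ exactly, no matter how large $\rho$ is; conversely, if $\mathrm{OPT}>0$ the returned error is strictly positive.

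Given a binary matrix $\mA$, I would then recover its max-times rank by invoking this approximation algorithm for $k = 1, 2, \ldots, \min\{n,m\}$ (a trivial upper bound on the rank, since $\mA = \mB\maxprod\mC$ with an identity factor) and, for each $k$, computing $E(\mA,\mB,\mC)$ for the returned factors, which is polynomial because the max-times product and the norm are polynomially evaluable. By the dichotomy above, the smallest $k$ yielding zero error is exactly the subtropical rank of $\mA$. This gives a polynomial-time algorithm for the max-times rank of a binary matrix, contradicting Theorem~\ref{thm:npcomplete}.

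There is no serious technical obstacle here; the only point requiring emphasis is the zero-versus-positive gap, which is precisely what makes the hardness survive against an arbitrary—even superpolynomially large—approximation factor. The factor is irrelevant exactly because the optimum can be zero, and no multiplicative factor can convert a zero optimum into a positive output. I would also note that only the \emph{computability} of $\rho$, not its magnitude, matters for the reduction to run in polynomial time, so the argument rules out approximation within any polynomially computable factor at once.
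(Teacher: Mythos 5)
Your proposal is correct and follows essentially the same route as the paper's own proof: both rest on the observation that a multiplicative guarantee forces error $\rho\cdot 0=0$ whenever an exact rank-$k$ max-times decomposition exists, so any such approximation algorithm would decide (and, by iterating over $k$, compute) the max-times rank, contradicting Theorem~\ref{thm:npcomplete}. Your write-up merely makes explicit the details the paper leaves implicit (the iteration over $k$, the trivial rank upper bound, and the polynomial-time evaluation of the error).
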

\begin{proof}
Any algorithm that can approximate Problem~\ref{problem:mdecomp} to within a factor $\alpha$ must find a decomposition of error $\alpha\cdot 0 = 0$ if the input matrix has exact max-times rank-$k$ decomposition. As this implies solving the max-times rank, per Theorem~\ref{thm:npcomplete} it is only possible if \Poly=\NP. 
\end{proof}

\subsection{Sparsity of the factors}
\label{sec:sparsity}

It is often desirable to obtain sparse factor matrices if the original data is sparse, as well, and the sparsity of its factors is frequently mentioned as one of the benefits of using NMF~\citep[see, e.g.][]{hoyer04non-negative}. In general, however, the factors obtained by NMF might not be sparse, but if we restrict ourselves to \emph{dominated} decompositions, \citet{gillis10using} showed that the sparsity of the factors cannot be less than the sparsity of the original matrix. 

The proof of \citet{gillis10using} relies on the anti-negativity, and hence their proof is easy to adapt to max-times setting. Let the \emph{sparsity} of an \by{n}{m} matrix $\matr{A}$, $s(\matr{A})$, be defined as
\begin{equation}\label{fracnonzero}
s(\matr{A}) = \frac{nm - \nnz{\matr{A}}}{nm}\;,
\end{equation}
where $\nnz{\matr{A}}$ is the number of nonzero elements in $\matr{A}$. Now we have

\begin{theorem}\label{thm:sparsity} Let matrices $\matr{B} \in \Region^{n \times k}$ and $\matr{C} \in \Region^{k \times m}$ be such that their max-times product is dominated by an \by{n}{m} matrix $\matr{A}$. Then the following estimate holds
\begin{equation}\label{nzeros}
s(\matr{B}) + s(\matr{C} ) \ge s(\matr{A}) \;.
\end{equation}
\end{theorem}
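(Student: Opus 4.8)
The plan is to first use the domination hypothesis to reduce to the exact-product case, and then prove the resulting sparsity inequality by a counting argument over the $k$ intermediate indices, in the spirit of the anti-negative argument of \citet{gillis10using}. For the reduction, observe that by anti-negativity the product entry $(\matr{B}\maxprod\matr{C})_{ij}=\max_s\matr{B}_{is}\matr{C}_{sj}$ is nonzero exactly when some term $\matr{B}_{is}\matr{C}_{sj}$ is nonzero. Since $\matr{A}$ dominates $\matr{B}\maxprod\matr{C}$, every such nonzero entry forces $\matr{A}_{ij}\neq 0$, so $\pattern(\matr{B}\maxprod\matr{C})$ lies entrywise below $\pattern(\matr{A})$, giving $\nnz{\matr{B}\maxprod\matr{C}}\le\nnz{\matr{A}}$. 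As $s(\cdot)$ is a decreasing function of the number of nonzeros, this yields $s(\matr{B}\maxprod\matr{C})\ge s(\matr{A})$, so it suffices to prove the claim with $\matr{A}$ replaced by $\matr{B}\maxprod\matr{C}$.

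Writing $z(\matr{M})$ for the number of zero entries of $\matr{M}$, so that $s(\matr{M})$ equals $z(\matr{M})$ divided by the size of $\matr{M}$, I would clear the denominators $nk$, $km$, and $nm$ to show that the reduced inequality $s(\matr{B})+s(\matr{C})\ge s(\matr{B}\maxprod\matr{C})$ is equivalent to
\begin{equation*}
m\,z(\matr{B})+n\,z(\matr{C})\ge k\,z(\matr{B}\maxprod\matr{C})\;.
\end{equation*}
This is the form I would establish directly.

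For each intermediate index $s\in[k]$ let $p_s$ be the number of zeros in the $s$th column of $\matr{B}$ and $q_s$ the number of zeros in the $s$th row of $\matr{C}$, so that $z(\matr{B})=\sum_s p_s$ and $z(\matr{C})=\sum_s q_s$. By anti-negativity, $(\matr{B}\maxprod\matr{C})_{ij}=0$ holds if and only if for \emph{every} $s$ we have $\matr{B}_{is}=0$ or $\matr{C}_{sj}=0$; in particular a zero of the product must be ``killed'' by each single index $s$. The number of pairs $(i,j)$ killed by a fixed $s$ alone is $Z_s=nm-(n-p_s)(m-q_s)$, the complement of the support of the $s$th rank-$1$ block. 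Since the zero set of the product is the intersection over $s$ of these per-index kill sets, we have $z(\matr{B}\maxprod\matr{C})\le Z_s$ for every $s$, and summing over the $k$ indices gives $k\,z(\matr{B}\maxprod\matr{C})\le\sum_{s=1}^k Z_s$.

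It then remains to bound the right-hand side. Expanding the product gives the identity $Z_s=m\,p_s+n\,q_s-p_s q_s$, and since $p_s q_s\ge 0$ we get $Z_s\le m\,p_s+n\,q_s$; summing over $s$ produces $\sum_s Z_s\le m\,z(\matr{B})+n\,z(\matr{C})$, which closes the chain. The only genuinely non-routine step is the counting observation of the previous paragraph: recognizing that, because of anti-negativity, a zero of the subtropical product must simultaneously be a zero of \emph{every} rank-$1$ block, so that $z(\matr{B}\maxprod\matr{C})$ is dominated by each $Z_s$ and hence by their average. Everything else is bookkeeping and an elementary expansion.
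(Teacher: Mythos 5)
Your proof is correct and follows essentially the same route as the paper's: both rest on the rank-1 observation that the support of a block $\matr{B}^s\matr{C}_s$ is the Cartesian product of the supports of $\matr{B}^s$ and $\matr{C}_s$ (your $Z_s = nm-(n-p_s)(m-q_s)$ is exactly the paper's identity $\nnz{\vec{b}\vec{c}^T} = \nnz{\vec{b}}\,\nnz{\vec{c}}$ in complemented form), and both then average the resulting per-block inequality over the $k$ blocks. The only cosmetic differences are that you invoke domination by $\matr{A}$ once, through the full product, whereas the paper applies it block-by-block, and that you work with cleared denominators (zero counts) rather than normalized sparsities.
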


\begin{proof}
  The proof follows that of \citet{gillis10using}.
We first prove \eqref{nzeros} for $k = 1$. Let $\vec{b} \in \Region^n$ and $\vec{c} \in \Region^m$ be such that $\vec{b}_i \vec{c}^T_j \le \matr{A}_{ij}$ for all $1 \le i \le n$, $ 1 \le j \le m$. Since $(\vec{b}\vec{c}^T)_{ij}>0$ if and only if $\vec{b}_i > 0$ and $\vec{c}_j > 0$, we have
\begin{equation}\label{bwnnonzero}
\nnz{\vec{b}\vec{c}^T} = \nnz{\vec{b}}\,\nnz{\vec{c}}\;.
\end{equation}  
By \eqref{fracnonzero} we have $\nnz{\vec{b}\vec{c}^T} = nm(1-s(\vec{b}\vec{c}^T))$, $\nnz{\vec{b}} = n(1-s(\vec{b}))$ and $\nnz{\vec{c}} = m(1-s(\vec{c}))$. Plugging these expressions into \eqref{bwnnonzero} we obtain $(1 - s(\vec{b}\vec{c}^T)) = (1-s(\vec{b}))(1-s(\vec{c}))$. Hence, the sparsity in a rank-$1$ dominated approximation of $\matr{A}$ is
\begin{equation}\label{intermediate1}
s(\vec{b}) + s(\vec{c}) \ge s(\vec{b}\vec{c}^T)\;.
\end{equation}
From \eqref{intermediate1} and the fact that the number of nonzero elements in $\vec{b}\vec{c}^T$ is no greater than in $\matr{A}$, it follows that
\begin{equation}\label{inermediate2}
s(\vec{b}) + s(\vec{c}) \ge s(\matr{A}) \;.
\end{equation}
Now let $\matr{B} \in \Region^{n \times k}$ and $\matr{C} \in \Region^{k \times m}$ be such that $\matr{B}\maxprod \matr{C}$ is dominated by $\matr{A}$. Then $\matr{B}_{il} \matr{C}_{lj} \le \matr{A}_{ij}$ for all $i \in [n]$, $j \in [m]$, and $l \in [k]$, which means that for each $l\in [k]$,  $\matr{B}^l\matr{C}_l\,$ is dominated by $\matr{A}$. To complete the proof observe that $s(\matr{B}) = k^{-1} \sum_{l = 1}^k \matr{B}^l$ and $s(\matr{C}) = k^{-1} \sum_{l=1}^k \matr{C}_l$ and that for each $l$ estimate \eqref{inermediate2} holds. 
\end{proof}

\subsection{Relation to other algebras}
\label{sec:other_alg}

Let us now study how the max-times algebra relates to other algebras, especially the standard, the Boolean, and the max-plus algebras. For the first two, we compare the ranks, and for the last, the reconstruction error.

Let us start by considering the Boolean rank of a binary matrix. The \emph{Boolean (Schein or Barvinok) rank} is the following problem:
\begin{problem}[Boolean rank]
  Given a matrix $\matr{A}\in\B^{n\times m}$ and an integer $k$, are there matrices $\matr{B}\in\B^{n\times k}$ and $\matr{C}\in\B^{k\times m}$ such that $\matr{A} = \matr{B}\bprod\matr{C}$, where $\bprod$ is the \emph{Boolean matrix product},
\[
(\matr{B}\bprod\matr{C})_{ij} = \bigvee_{l=1}^k \matr{B}_{il}\matr{C}_{lj}\; .
\]
\end{problem}

\begin{lemma}
  \label{lemma:brank_vs_strank}
  If $\mA$ is a binary matrix, then its Boolean and subtropical ranks are the same.
\end{lemma}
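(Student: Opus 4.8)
The plan is to establish the equality by proving two inequalities: that the subtropical rank of $\mA$ is at most its Boolean rank, and conversely that the Boolean rank is at most the subtropical rank. The first direction is essentially immediate, while the second requires passing from real-valued factors to binary ones.

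For the inequality subtropical rank $\le$ Boolean rank, I would observe that on binary inputs the two matrix products coincide. Suppose $\mA = \mB\bprod\mC$ is a Boolean rank-$k$ factorization with $\mB\in\B^{n\times k}$ and $\mC\in\B^{k\times m}$. For $0/1$ values the logical OR equals the maximum and the logical AND equals the ordinary product, so entrywise $\bigvee_{l}\mB_{il}\mC_{lj} = \max_{l}\mB_{il}\mC_{lj}$. Hence the very same binary factors satisfy $\mA = \mB\maxprod\mC$, giving a subtropical factorization of rank $k$, and therefore the subtropical rank is at most the Boolean rank.

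For the reverse inequality I would start from a subtropical factorization $\mA = \mB\maxprod\mC$ of rank $k$, where now $\mB\in\Region^{n\times k}$ and $\mC\in\Region^{k\times m}$ may carry arbitrary nonnegative entries, and pass to the patterns $\mB' = \pattern(\mB)$ and $\mC' = \pattern(\mC)$. I claim $\mA = \mB'\bprod\mC'$. The key observation is that a single term $\mB'_{il}\mC'_{lj}$ equals $1$ precisely when $\mB_{il}>0$ and $\mC_{lj}>0$, i.e. precisely when $\mB_{il}\mC_{lj}>0$. Taking the disjunction over $l$ gives $(\mB'\bprod\mC')_{ij}=1$ iff some product $\mB_{il}\mC_{lj}$ is positive, which by anti-negativity of the subtropical algebra happens iff $\max_{l}\mB_{il}\mC_{lj}=\mA_{ij}>0$, i.e. iff $\mA_{ij}=1$. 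Thus $\mB'\bprod\mC' = \mA$ with $k$ binary columns and rows, so the Boolean rank is at most the subtropical rank.

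Combining the two inequalities yields equality. The only delicate point, and the main obstacle, is the reverse direction, where the real-valued subtropical factors must be reduced to binary ones without altering the product. The argument hinges on the anti-negativity of the subtropical algebra: since there is no subtraction, the support (pattern) of a max-times product is determined entirely by the supports of the factors, and no cancellation can occur that would cause the pattern of $\mB'\bprod\mC'$ to differ from that of $\mB\maxprod\mC$.
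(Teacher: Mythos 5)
Your proof is correct and follows essentially the same route as the paper: the easy direction observes that Boolean and max-times products coincide on binary factors, and the reverse direction passes to the patterns $\pattern(\mB)$, $\pattern(\mC)$ and uses anti-negativity to show their Boolean product reproduces $\mA$. The only cosmetic difference is that the paper phrases the reverse direction as a slightly stronger claim (the Boolean rank of $\pattern(\mA)$ is at most the subtropical rank of $\mA$ for \emph{any} nonnegative $\mA$), but the underlying argument is identical to yours.
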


\begin{proof}
  We will prove the claim by first showing that the Boolean rank of a binary matrix is no less than the subtropical rank, and then showing that it is no larger, either.
  For the first direction, let the Boolean rank of $\mA$ be $k$, and let $\mB$ and $\mC$ be binary matrices such that $\mB$ has $k$ columns and $\mA = \mB\bprod\mC$. It is easy to see that $\mB\bprod\mC = \mB\maxprod\mC$, and hence, the subtropical rank of $\mA$ is no more than $k$.

  For the second direction, we will actually show a slightly stronger claim: Let $\mA\in \Region^{n \times m}$ and let $\pattern(\mA)$ be its pattern. Then the Boolean rank of $\pattern(\mA)$ is never more than the subtropical rank of $\mA$. As $\pattern(\mA) = \mA$ for a binary $\mA$, the claim follows. To prove the claim, let $\mA\in\Region^{n\times m}$ have subtropical rank of $k$ and let $\mB\in\Region^{n\times k}$ and $\mC\in\Region^{k\times m}$ be such that $\mA = \mB\maxprod\mC$. Let $(i,j)$ be such that $\mA_{ij} = 0$. By definition, $\max_{l=1}^k \mB_{il}\mC_{lj} = 0$, and hence
  \begin{equation}
    \label{eq:pattern_vals:0}
    \max_{l=1}^k \pattern(\mB)_{il}\pattern(\mC)_{lj} = \bigvee_{l=1}^k \pattern(\mB)_{il}\pattern(\mC)_{lj} = 0\; .
  \end{equation}
  On the other hand, if $(i,j)$ is such that $\mA_{ij} > 0$, then there exists $l$ such that $\mB_{il}, \mC_{lj}  > 0$ and consequently,
  \begin{equation}
    \label{eq:pattern_vals:1}
    \max_{l=1}^k \pattern(\mB)_{il}\pattern(\mC)_{lj} = \bigvee_{l=1}^k \pattern(\mB)_{il}\pattern(\mC)_{lj} = 1\; .
  \end{equation}

  Combining~\eqref{eq:pattern_vals:0} and \eqref{eq:pattern_vals:1} gives us
  \begin{equation}
    \label{eq:pattern_bprod}
    \pattern(\mA) = \pattern(\mB)\bprod\pattern(\mC)\; ,
  \end{equation}
  showing that the Boolean rank of $\pattern(\mA)$ is at most $k$.
\end{proof}

Notice that Lemma~\ref{lemma:brank_vs_strank} also furnishes us with another proof of Theorem~\ref{thm:npcomplete}, as the computation of the Boolean rank is an \NP-complete problem \citep[see, e.g.][]{miettinen09matrix}. Notice also that while the Boolean rank of the pattern is never more than the subtropical rank of the original matrix, it can be much less. This is easy to see by considering a matrix with no zeroes: it can have arbitrarily large subtropical rank, but it's pattern has Boolean rank 1.

Unfortunately, the Boolean rank does not help us with effectively estimating the subtropical rank, as its computation is an \NP-hard problem. The standard rank is (relatively) easy to compute, but the standard rank and the max-times rank are incommensurable, that is, there are matrices that have smaller max-times rank than standard rank and others that have higher max-times rank than standard rank. Let us consider an example of the first kind,
\[
\begin{pmatrix} 
  1 & 2 & 0 \\
  2 & 4 & 1 \\
  0 & 4 & 2
\end{pmatrix}
=
\begin{pmatrix}
  1 & 0 \\
  2 & 1 \\
  0 & 2
\end{pmatrix}
\maxprod
\begin{pmatrix}
  1 & 2 & 0 \\
  0 & 2 & 1
\end{pmatrix}\; .
\] 
As the decomposition shows, this matrix has max-times rank of $2$, while its normal rank is easily verified to be $3$. Indeed, it is easy to see that the complement of the \by{n}{n} identity matrix $\bar{\matr{I}}_n$, that is, the matrix that has $0$s at the diagonal and $1$s everywhere else, has max-times rank of $O(\log n)$ while its standard rank is $n$ (the result follows from similar results regarding the Boolean rank, see, e.g. \citealp{miettinen09matrix}). 

As we have discussed earlier, max-plus and max-times algebras are isomorphic, and consequently for any matrix $\mA \in \mathbb{R}_+^{n\times m}$ its max-times rank agrees with the max-plus rank of the matrix $\log(\mA)$. Yet, the errors obtained in approximate decompositions do not have to (and usually will not) agree. In what follows we characterize the relationship between max-plus and max-times errors. We denote by $\tropalg$ the extended real line $\R \cup \lbrace -\infty\rbrace$.

\begin{theorem}\label{thm:max_plus_bound}
  Let $\matr{A} \in \overline{\R}^{n \times m}$,   $\matr{B} \in \overline{\R}^{n \times k}$
and $\matr{C} \in \overline{\mathbb{R}}^{k \times m}$. Let $M = \exp\{N\}$, where
\[ 
N =  \max_{\substack{i\in [n]\\  j \in [m]}} \Bigl\{  \max \bigl\{ \matr{A}_{ij}, \max_{1 \le d \le k} \{ \matr{B}_{id} + \matr{C}_{dj} \} \bigr\} \Bigr\} \; .
\]

If an error can be bounded in max-plus algebra as
\begin{equation}\label{max_plus_bound}
\norm{\matr{A} - \matr{B} \tropprod \matr{C}}_F^2 \le \lambda\; ,
\end{equation}
then the following estimate holds with respect to the max-times algebra:
\begin{equation}\label{max_times_bound}
\norm{\exp\{\matr{A}\} - \exp\{\matr{B}\} \maxprod \exp\{\matr{C}\}}_F^2 \le M^2 \lambda\; .
\end{equation}
\end{theorem}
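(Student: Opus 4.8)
The plan is to leverage the algebra isomorphism to turn the statement into an entry-wise comparison between $(\exp\{a\}-\exp\{b\})^2$ and $(a-b)^2$, and then to bound the resulting multiplicative factor uniformly by $M^2$ via the mean value theorem. First I would record the pointwise identity
\[
(\exp\{\matr{B}\} \maxprod \exp\{\matr{C}\})_{ij} = \exp\{(\matr{B} \tropprod \matr{C})_{ij}\}\; ,
\]
valid for every $i,j$. It holds because $\exp$ sends tropical multiplication (ordinary addition) to ordinary multiplication and, being monotone increasing, commutes with the maximum, so $\max_d \exp\{\matr{B}_{id}\}\exp\{\matr{C}_{dj}\} = \exp\{\max_d(\matr{B}_{id}+\matr{C}_{dj})\}$. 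This is exactly the computation underlying the isomorphism mentioned after Definition~\ref{def:max-plus}, now lifted to the level of the matrix product.

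With this identity in hand, I would expand both Frobenius norms as sums over entries and compare them termwise. Fix $(i,j)$, write $a = \matr{A}_{ij}$ and $b = (\matr{B}\tropprod\matr{C})_{ij}$, and suppose for the moment that both are finite. The mean value theorem applied to $\exp$ yields a point $\xi$ lying between $a$ and $b$ with $\exp\{a\} - \exp\{b\} = \exp\{\xi\}(a-b)$, hence $(\exp\{a\}-\exp\{b\})^2 = \exp\{2\xi\}(a-b)^2$. Since $\xi \le \max\{a,b\} \le N$ by the definition of $N$, we obtain $\exp\{2\xi\} \le M^2$, so each max-times summand is at most $M^2$ times the matching max-plus summand. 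Summing over all $(i,j)$ and invoking the hypothesis~\eqref{max_plus_bound} then gives~\eqref{max_times_bound}.

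The step I expect to require the most care is the presence of $-\infty$ entries, since the mean value argument applies only to finite arguments. These can be dispatched separately: if $a = b = -\infty$ then both summands vanish (as $\exp\{-\infty\}=0$) and contribute nothing; if exactly one of $a,b$ equals $-\infty$ then the max-plus summand $(a-b)^2$ is $+\infty$, forcing $\lambda = +\infty$ and rendering~\eqref{max_times_bound} trivially true. Hence the mean value estimate only needs to run on the entries where both values are finite, and the termwise bound above completes the argument.
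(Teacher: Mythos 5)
Your proof is correct and follows essentially the same route as the paper's: both apply the mean value theorem to $\exp$ entrywise, bound the intermediate point by $\max\{\matr{A}_{ij}, (\matr{B}\tropprod\matr{C})_{ij}\} \le N$ to extract the uniform factor $M^2$, and then sum over entries (the paper phrases the termwise comparison through an auxiliary splitting $\lambda = \sum_{ij}\lambda_{ij}$, which is only cosmetically different from your direct summand-by-summand bound). Your explicit verification of the identity $(\exp\{\matr{B}\}\maxprod\exp\{\matr{C}\})_{ij} = \exp\{(\matr{B}\tropprod\matr{C})_{ij}\}$ and your separate treatment of $-\infty$ entries address points the paper leaves implicit, and only strengthen the argument.
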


\begin{proof}
Let $\alpha_{ij} = \max_{d=1}^{ k} \lbrace \matr{B}_{id} + \matr{C}_{dj}\rbrace$.
From \eqref{max_plus_bound} it follows that there exists a set of numbers $\lbrace\lambda_{ij} \ge 0 \setcond i \in [n], j \in [m] \rbrace$ such that for any $i, j$ we have $(A_{ij} - \alpha_{ij})^2 \le \lambda_{ij}$ and $\sum_{ij} \lambda_{ij} = \lambda$. By the mean-value theorem, for every $i$ and $j$ we obtain
\[
  \abs{\exp\{\matr{A}_{ij}\} - \exp\{\alpha_{ij}\} } = \abs{ \matr{A}_{ij} - \alpha_{ij} }  \exp\{ \alpha_{ij}^*\} 
\le  \sqrt{\lambda_{ij}}  \exp\{\alpha_{ij}^*\}\; ,
\]
for some $\min\lbrace \matr{A}_{ij}, \alpha_{ij} \rbrace \le \alpha_{ij}^* \le \max\lbrace \matr{A}_{ij}, \alpha_{ij} \rbrace$. Hence,
\[
\left(\exp\{\matr{A}_{ij}\} - \exp\{\alpha_{ij}\}\right)^2 \le \lambda_{ij} (\exp \{\max \lbrace \matr{A}_{ij}, \alpha_{ij} \rbrace\})^2 \; .
\]
The estimate for the max-times error now follows from the monotonicity of the exponent:
\[
\begin{split}
\norm{\exp\{\matr{A}\} - \exp\{\matr{B}\} \maxprod \exp\{\matr{C}\}}_F^2
&\le \sum_{ij} \left(\exp\{\alpha_{ij}^*\}\right)^2 \lambda_{ij} \\
&\le \sum_{ij} \left(\exp\{\max\lbrace \matr{A}_{ij}, \alpha_{ij} \rbrace\}\right)^2 \lambda_{ij} 
\le M^2 \lambda\; ,
\end{split}
\] 
proving the claim. 
\end{proof}

\subsection{Different subtropical matrix ranks}
\label{sec:subtropical_ranks}

The definition of the subtropical rank we use in this work is the so-called Schein (or Barvinok) rank (see Definition~\ref{def:mrank}). Like in the standard linear algebra, this is not the only possible way to define the (subtropical) rank. Here we will review few other forms of subtropical rank that can allow us to bound the Schein/Barvinok rank of a matrix. Following the literature, we will present the definitions in this section over the tropical algebra. Recall that due to isomorphism, these definitions transfer directly to the subtropical case. Unless otherwise mentioned, the definitions are by \citet{guillon2015ultimate}; we refer the readers interested in more details to their work.

We begin with the tropical equivalent of the subtropical Schein/Barvinok rank:
\begin{definition}
  \label{def:schein_barvinok_rank}
  The \emph{tropical Schein/Barvinok rank} of a matrix $\mA\in\tropalg^{n\times m}$, denoted $\rankSB(\mA)$, is defined to be the least integer $k$ such that there exist matrices $\mB\in\tropalg^{n\times k}$ and $\mC\in\tropalg^{k\times m}$ for which $\mA = \mB\tropprod\mC$. 
\end{definition}

Analogous to the standard case, we can also define the rank as the number of linearly independent rows or columns.
The following definition of linear independence of a family of vectors in a tropical space is due to \citet{gondran1984linear}.
\begin{definition} 
\label{def:gondran:minoux:linear}
  A set of vectors $\vec{x}_1, \dots, \vec{x}_k$ from $\tropalg^n$ is called \emph{linearly dependent} if there exist disjoint sets $I, J \subset \{1,\dots, k\}$ and scalars $\{\lambda_i\}_{i\in I \cup J}$, such that $\lambda_i \ne -\infty$ for all $i$ and
  \begin{equation} \label{lin_depend}
\max_{i\in I} \{\lambda_i + \vec{x}_i\} = \max_{j\in J} \{\lambda_j + \vec{x}_j\} \;.
\end{equation}
Otherwise the vectors $\vec{x}_1, \dots, \vec{x}_k$ are called \emph{linearly independent}.
\end{definition}

This gives rise to the so-called \emph{Gondran--Minoux ranks}: 
\begin{definition}
      \label{def:gondran:minoux:rank}
  The \emph{Gondran--Minoux} row (column) rank of a matrix $\matr{A} \in \tropalg^{n\times m}$ is defined as the maximal $k$ such that $\matr{A}$ has $k$ independent rows (columns). They are denoted by $\rankGMr(\matr{A})$  and $\rankGMc(\matr{A})$  respectively.
\end{definition}

Another way to characterize the rank of the matrix is to consider the space its rows or columns can span.

\begin{definition}
  \label{def:convex} 
  A set $X \subset \tropalg^n$  is called \emph{tropically convex} if for any vectors $\vec{x}, \vec{y} \in X$ and scalars $\lambda, \mu \in \tropalg$, we have $\max\{\lambda + \vec{x}, \mu + \vec{y}\} \in X$.
\end{definition}

\begin{definition} 
  \label{def:convex:hull}
  The \emph{convex hull} $H(\vec{x}_1, \dots \vec{x}_k)$ of a finite set of vectors $\{\vec{x}_i  \}_{i=1}^k \in \tropalg^n$ is defined as follows 
  \[
H(\vec{x}_1, \dots \vec{x}_k) = \left\{ \max_{i=1}^k \{\lambda_i + \vec{x}_i\} \setcond \lambda_i \in \tropalg \right\} \;.
  \]
\end{definition}

\begin{definition}
  \label{def:weak:dim} 
  The \emph{weak dimension} of a finitely generated tropically convex subset of $\tropalg^n$ is the cardinality of its minimal generating set. 
\end{definition}

We can define the rank of the matrix by looking at the weak dimension of the (tropically) convex hull its rows or columns span.

\begin{definition}
  \label{def:row:rank}
The \emph{row rank} and the \emph{column rank} of a matrix $\matr{A} \in \tropalg^{n\times m}$ are defined as the weak dimensions of the convex hulls of the rows and the columns of $\matr{A}$ respectively. They are denoted by $\rankRW(\matr{A})$  and $\rankCL(\matr{A})$.
\end{definition}

None of the above definitions coincide \citep[see][]{akian2009linear}, unlike in the standard algebra. We can, however, have a partial ordering of the ranks:

\begin{theorem} \label{thm:rank:relations}
  \citep{guillon2015ultimate, akian2009linear}
  Let $\matr{A} \in \tropalg^{n\times m}$. Then the the following relations are true for the above definitions of the rank of $\matr{A}$:
  \begin{equation} \label{rank:bounds}
    \left .
      \begin{aligned}
        &\rankGMr(\mA)\\
        &\rankGMc(\mA)
      \end{aligned}\right\}
    \le \rankSB(\mA) \le
    \left\{\!\begin{aligned}
        &\rankRW(\mA)\\
        &\rankCL(\mA)
      \end{aligned}\right . \;.
  \end{equation}
\end{theorem}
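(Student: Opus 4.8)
The plan is to deduce all four inequalities from just the two ``row'' inequalities $\rankGMr(\mA) \le \rankSB(\mA) \le \rankRW(\mA)$. First I would record that the tropical product transposes as $(\mB\tropprod\mC)^T = \mC^T\tropprod\mB^T$, so $\rankSB$ is invariant under transposition; since transposing also swaps rows with columns, $\rankGMc(\mA)=\rankGMr(\mA^T)$ and $\rankCL(\mA)=\rankRW(\mA^T)$. Thus, applying the two row inequalities to $\mA^T$ immediately yields the two column inequalities, and it suffices to treat the row case.

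For the upper bound $\rankSB(\mA)\le\rankRW(\mA)$, I would argue directly from the definition of weak dimension. Let $r=\rankRW(\mA)$ and let $\{\vec g_1,\dots,\vec g_r\}$ be a minimal generating set (a weak basis, which exists and is finite) of the convex hull of the rows of $\mA$. Every row $\mA_i$ lies in this hull, so by Definition~\ref{def:convex:hull} there are scalars $\lambda_{is}\in\tropalg$ with $\mA_i=\max_{s=1}^{r}\{\lambda_{is}+\vec g_s\}$. Setting $\mC_s=\vec g_s$ and $\mB_{is}=\lambda_{is}$ gives $\mA=\mB\tropprod\mC$ with inner dimension $r$, so $\rankSB(\mA)\le r$. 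This direction is essentially the tropical analogue of ``rows lie in the span of the spanning set,'' and is routine.

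The lower bound $\rankGMr(\mA)\le\rankSB(\mA)$ is the substantive part. I would fix a factorization $\mA=\mB\tropprod\mC$ of inner dimension $k=\rankSB(\mA)$, so that each row satisfies $\mA_i=\max_{s=1}^{k}\{\mB_{is}+\mC_s\}$; that is, every row of $\mA$ is a tropical combination of the $k$ rows of $\mC$. The goal is to show that any $k+1$ rows of $\mA$ are Gondran--Minoux dependent, which forces $\rankGMr(\mA)\le k$. Relabel the chosen rows as $0,\dots,k$. The key reduction is a transfer lemma: a dependence among the corresponding rows $\mB_0,\dots,\mB_k$ of $\mB$ lifts to the rows of $\mA$. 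Concretely, if disjoint $I,J$ and finite scalars $\{\lambda_t\}$ satisfy $\max_{t\in I}\{\lambda_t+\mB_t\}=\max_{t\in J}\{\lambda_t+\mB_t\}$, then using distributivity of $+$ over $\max$ one computes, for every column $j$, that $\max_{t\in I}\{\lambda_t+\mA_{tj}\}=\max_s\{(\max_{t\in I}\{\lambda_t+\mB_{ts}\})+\mC_{sj}\}$, and the same expression over $J$ gives the identical value coordinatewise. Hence the very same witness $(I,J,\{\lambda_t\})$ certifies GM-dependence of the rows of $\mA$ (the scalars remain finite, so the witness is admissible).

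What then remains is the kernel of the argument: any $k+1$ vectors in $\tropalg^{k}$ are Gondran--Minoux dependent, i.e.\ the GM rank never exceeds the ambient dimension. I expect this to be the main obstacle, because tropical convexity is genuinely different from the classical setting and the bound cannot be obtained by a naive counting-of-coordinates argument. The natural route is the determinantal characterization of GM-independence: a family of rows is GM-independent precisely when the matrix it forms contains a square submatrix that is nonsingular in the bideterminant sense (its maximum-weight even and odd permutation sums differ), so the number of independent rows is at most the largest order of a square submatrix, which for a $(k+1)\times k$ matrix is at most $k$. This is exactly the foundational fact supplied by the cited works of \citet{gondran1984linear}, \citet{akian2009linear}, and \citet{guillon2015ultimate}. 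Combining this kernel with the transfer lemma gives $\rankGMr(\mA)\le\rankSB(\mA)$, and transposition then delivers the remaining inequalities.
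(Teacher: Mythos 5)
Your proposal is correct, but there is an important point of comparison: the paper does not actually prove this theorem at all --- it is imported wholesale from \citet{guillon2015ultimate} and \citet{akian2009linear}, with no internal argument to check against. What your reconstruction adds is a clean separation of the routine from the deep. The transposition reduction (via $(\mB\tropprod\mC)^T = \mC^T\tropprod\mB^T$, so $\rankSB$ is transpose-invariant while the GM and row/column ranks swap), the upper bound $\rankSB(\mA)\le\rankRW(\mA)$ (each row of $\mA$ lies in the hull of a minimal generating set, and writing these memberships coordinatewise is literally a factorization of inner dimension $\rankRW(\mA)$), and the transfer lemma (a Gondran--Minoux witness for the rows of $\mB$ passes through the factorization, since $+$ distributes over $\max$ and the two maxima commute) are all correct and elementary. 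The one place where genuine content enters is your ``kernel'': that any $k+1$ vectors of $\tropalg^k$ are GM-dependent. You are right that this cannot be obtained by naive coordinate counting; it is the Gondran--Minoux dimension theorem, which follows from the square bideterminant criterion (rows of a square matrix are GM-dependent if and only if the maximal even and odd permutation weights coincide) --- for instance, border the $(k+1)\times k$ matrix with a column of $-\infty$ entries to force a balanced bideterminant, then restrict the resulting witness to the original coordinates. Since that criterion is exactly what the cited works supply, your proof is sound as a reduction. In short: the paper's approach buys brevity and delegates all correctness to the literature, while yours makes visible that the entire chain of inequalities hinges on precisely one nontrivial theorem, everything else being bookkeeping.
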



The row and column ranks of an \by{n}{n} tropical matrix can be computed in $O(n^3)$ time \citep{butkovivc2010max}, allowing us to bound the Schein/Barvinok rank from above. Unfortunately, no efficient algorithm for the Gondran--Minoux rank is known. On the other hand, \citet{guillon2015ultimate} presented what they called the \emph{ultimate tropical rank} that lower-bounds the Gondran--Minoux rank and can be computed in time $O(n^3)$. We can also check if a matrix has full Schein/Barvinok rank in time $O(n^3)$ \citep[see][]{butkovivc1985condition}, even if computing any other value is \NP-hard.

These bounds, together with Lemma~\ref{lemma:brank_vs_strank} yield the following corollary regarding the bounding of the \emph{Boolean rank} of a square matrix:

\begin{corollary}
  \label{corollary:brank_bounds}
  Given an \by{n}{n} binary matrix $\mA$, it's Boolean rank can be bound from below, using the ultimate rank, and from above, using the tropical column and row ranks, in time $O(n^3)$.
\end{corollary}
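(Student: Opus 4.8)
The plan is to assemble the corollary entirely from results already established, using the subtropical–tropical isomorphism as the bridge; the key observation is that for a binary matrix the Boolean rank coincides with exactly the quantity that all the tropical rank bounds control. First I would invoke Lemma~\ref{lemma:brank_vs_strank}: since $\mA$ is binary, its Boolean rank equals its subtropical Schein/Barvinok rank. Then I would pass to the tropical side. Because the subtropical and tropical algebras are isomorphic via the logarithm (with $\log 0 = -\infty$), the binary matrix $\mA$ maps to the tropical matrix $\log\mA$ whose entries lie in $\{-\infty,0\}\subset\tropalg$, and, crucially, this isomorphism commutes with the matrix product and hence carries the subtropical Schein/Barvinok rank of $\mA$ onto the tropical Schein/Barvinok rank $\rankSB(\log\mA)$. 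Thus the Boolean rank of $\mA$ equals $\rankSB(\log\mA)$.

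With this identification, both bounds fall out of Theorem~\ref{thm:rank:relations} applied to $\log\mA$. The upper bound is immediate, since $\rankSB(\log\mA)\le\min\{\rankRW(\log\mA),\rankCL(\log\mA)\}$, so the tropical row and column ranks bound the Boolean rank from above. For the lower bound I would chain the ultimate-rank inequality with the Gondran--Minoux inequality: the ultimate tropical rank of $\log\mA$ lower-bounds its Gondran--Minoux rank, which by Theorem~\ref{thm:rank:relations} lower-bounds $\rankSB(\log\mA)$; hence the ultimate rank bounds the Boolean rank from below. Finally, for the running time, forming $\log\mA$ costs $O(n^2)$; the row and column ranks of an $n\times n$ tropical matrix are computable in $O(n^3)$ by \citet{butkovivc2010max}, and the ultimate rank in $O(n^3)$ by \citet{guillon2015ultimate}, both dominating the preprocessing, so the overall bound is $O(n^3)$.

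There is no serious obstacle here — the corollary is a bookkeeping combination of earlier results — but the one point deserving care is that the logarithm isomorphism preserves the Schein/Barvinok rank \emph{exactly} rather than merely up to an inequality. This is what licenses transferring the sandwich on $\rankSB(\log\mA)$ into a sandwich on the Boolean rank of $\mA$, and it holds because $\log$ is an algebra isomorphism that commutes with the (max-)matrix product, making $\mA=\mB\maxprod\mC$ equivalent to $\log\mA=\log\mB\tropprod\log\mC$ and thereby equating the minimal inner dimensions. Everything else is a direct substitution into the established inequalities and complexity statements.
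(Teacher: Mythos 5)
Your proposal is correct and follows essentially the same route as the paper, which derives the corollary by combining Lemma~\ref{lemma:brank_vs_strank} with the rank sandwich of Theorem~\ref{thm:rank:relations}, the ultimate-rank lower bound of \citet{guillon2015ultimate}, and the $O(n^3)$ computability of the tropical row/column ranks from \citet{butkovivc2010max}. Your extra care in verifying that the logarithm isomorphism preserves the Schein/Barvinok rank exactly is a sound elaboration of a step the paper leaves implicit.
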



\section{Algorithms}
\label{sec:algorithms}
The problem of subtropical matrix factorization has some unique challenges that stem from the lack of linearity and smoothness of the max-times algebra. One of such issues is that dominated elements in a decomposition have no impact on the final result. Namely, if we consider the subtropical product of two matrices $\matr{B}\in \Region^{n\times k}$ and $\matr{C}\in \Region^{k\times m}$, we can see that each entry $(\matr{B} \maxprod \matr{C})_{ij} = \max_{1 \le s \le k}{\matr{B}_{is} \matr{C}_{sj}}$ is completely determined by a single element with index $\argmax_{1 \le s \le k}{\matr{B}_{is} \matr{C}_{sj}}$. This means that all entries $t$ with $\matr{B}_{it} \matr{C}_{tj}<\max_{1 \le s \le k}{\matr{B}_{is} \matr{C}_{sj}}$ do not contribute at all to the final decomposition. To see why this is a problem, observe that many optimization methods used in matrix factorization algorithms rely on local information to choose the direction of the next step (e.g. various forms of gradient descent). In the case of the subtropical algebra, however, the local information is practically absent, and hence we need to look elsewhere for effective optimization techniques.

A common approach to matrix decomposition problems is to update factor matrices alternatingly, which utilizes the fact that the problem $\min_{\matr{B}, \matr{C}}{\norm{\matr{A} - \matr{B}\matr{C}}}_F$ is biconvex. Unfortunately, the subtropical matrix factorization problem does not have the biconvexity property, which makes alternating updates less useful.

Here we present a different approach that, instead of doing alternating factor updates, constructs the decomposition by adding one rank-1 matrix at a time, following the idea by \cite{kolda2000}. The corresponding algorithm is called \Equator (Algorithm~\ref{alg:equator}).

First observe that the max-times product can be represented as an elementwise maximum of rank-1 matrices (blocks)
\begin{equation}\label{blockwise}
\matr{B} \maxprod \matr{C} = \max\limits_{1\le s \le k}{\matr{B}^s \matr{C}_s}\;.
\end{equation}
Hence, Problem~\ref{problem:mdecomp} can be split into $k$ subproblems of the following form: given a rank-$(l-1)$ decomposition $\matr{B}\in \Region^{n\times (l-1)}$, $\matr{C}\in \Region^{(l-1)\times m}$ of a matrix $\matr{A} \in \Region^{n \times m}$, find a column vector $\vec{b}\in \Region^{n\times 1}$ and a row vector $\vec{c} \in \Region^{1\times m}$ such that the error
\begin{equation}\label{maxtimesrank1}
\norm{  {\matr{A} - \max{\lbrace \matr{B}\maxprod \matr{C}}, \vec{b} \vec{c} \rbrace} }
\end{equation}
is minimized. We assume by definition that the rank-$0$ decomposition is an all zero matrix of the same size as $\matr{A}$. The problem of rank-$k$ subtropical matrix factorization is then reduced to solving \eqref{maxtimesrank1} $k$ times. One should of course remember that this scheme is just a heuristic and finding optimal blocks on each iteration does not guarantee converging to a global minimum.

One prominent issue with the above approach is that an optimal rank-$(k-1)$ decomposition might not be very good when considered as a part of a rank-$k$ decomposition. This is because for smaller ranks we generally have to cover the data more crudely, whereas when the rank increases we can afford to use smaller and more refined blocks. In order to deal with this problem, we find and then update the blocks repeatedly, in a cyclic fashion. That means that after discovering the last block, we go all the way back to block one. The input parameter $M$ defines the number of full cycles we make. 

\begin{algorithm}[tbp]
  \flushleft
  \caption{\Equator}\label{alg:equator}
  \begin{algorithmic}[1]
    \Input $\matr{A} \in \Region^{n \times m}$, $k>0$, $M>0$
    \Output $\Bbest \in \Region^{n \times k}$, $\Cbest \in \Region^{k \times m}$
    \Function{\Equator}{\matr{A}, k, M}
      \State $\matr{B} \gets 0^{n \times k}$, $\matr{C} \gets 0^{k \times m}$  \label{init}
      \State $\Bbest \gets \matr{B}, \Cbest \gets \matr{C}$  \label{best:factor:init}
      \State $\bestError \gets E(\matr{A}, \matr{B}, \matr{C})$ \label{error:init}
      \For{$\mathit{count} \gets 1$ \textbf{to} $k\times M$} \label{begincyclic}
          \State $l \gets (\mathit{count}-1) \pmod k + 1$                     \Comment{Index of the current block} \label{getcurrentindexequator}
        \State $[\matr{B}^l, \matr{C}_l] \gets \UpdateBlock(\matr{A}, \matr{B}, \matr{C}, count)$ \label{Cancer:UpdateBlock}
        \If{$E(\matr{A}, \matr{B}, \matr{C}) < \bestError$} \label{compbegin}
          \State $\Bbest \gets \matr{B}, \Cbest \gets \matr{C}$
          \State $\bestError \gets E(\matr{A}, \matr{B}, \matr{C})$ \label{compend}
          \EndIf
      \EndFor \label{finishloop} 
      \State \textbf{return} $\Bbest$,  $\Cbest$
    \EndFunction
  \end{algorithmic}
\end{algorithm}



On a high level \Equator works as folows. First the factor matrices are initialized to all zeros (line~\ref{init}). Since the algorithm makes iterative changes to the current solutions that might in some cases lead to worsening of the results, it also stores the best reconstruction error and the corresponding factors found so far. They are initalized with the starting solution on lines~\ref{best:factor:init}--\ref{error:init}. The main work is done in the loop on lines~\ref{begincyclic}--\ref{finishloop}, where on each iteration we update a single rank-1 matrix in the current decomposition using the \UpdateBlock routine (line~\ref{Cancer:UpdateBlock}), and then check if the update improves the best result (lines~\ref{compbegin}--\ref{compend}).

We will present two versions of the \UpdateBlock function, one called \Capricorn and the other one \Cancer. \Capricorn is designed to work with discrete (or flipping) noise, when some of the elements in the data are randomly changed to different values. In this setting the level of noise is the proportion of the flipped elements relative to the total number of nonzeros. \Cancer on the other hand is robust with continuous noise, when many elements are affected (e.g. Gaussian noise). We will discuss both of them in detail in the following subsections. In the rest of the paper, especially when presenting the experiments, we will use names \Capricorn and \Cancer not only for a specific variation of the \UpdateBlock function, but also for the \Equator algorithm that uses it.

\subsection{\Capricorn}
We first describe \Capricorn, which is designed to solve the subtropical matrix factorization problem in the presence of discrete noise, and minimizes the $L1$ norm of the error matrix. 
The main idea behind the algorithm is to spot potential blocks by considering ratios of matrix rows. Consider an arbitrary rank-1 block $\matr{X} = \vec{b} \vec{c}$, where $\vec{b} \in \Region^{n \times 1}$ and $\vec{c} \in \Region^{1 \times m}$. For any indices $i$ and $j$ such that $\vec{b}_i>0$ and $\vec{b}_j>0$, we have $\matr{X}_j = \frac{\vec{b}_j}{\vec{b}_i} \matr{X}_i$. This is a characteristic property of rank-1 matrices -- all rows are multiples of one another. Hence, if a block $\matr{X}$ dominates some region $\Gamma$ of a matrix $\matr{A}$, then rows of $\matr{A}$ should all be multiples of each other within $\Gamma$. These rows might have different lengths due to block overlap, in which case the rule only applies to their common part.

\UpdateBlock starts by identifying the index of the block that has to be updated at the current iteration (line~\ref{getcurrentindex}). In order to find the best new block we need to take into account that some parts of the data have already been covered, and we must ignore them. This is accomplished by replacing the original matrix with a residual $\matr{R}$ that represents what there is left to cover. The building of the residual (line~\ref{capricorn:residual}) reflects the winner-takes-it-all property of the max-times algebra: if an element of $\matr{A}$ is approximated by a smaller value, it appears as such in the residual; if it is approximated by a value that is at least as large, then the corresponding residual element is \NaN, indicating that this value is already covered. We then select a seed row (line~\ref{seedrow}), with an intention of growing a block around it. We choose the row with the largest sum as this increases the chances of finding the most prominent block. In order to find the best block $\matr{X}$ that the seed row passes through, we first find a binary matrix $\matr{H}$ that represents the pattern of $\matr{X}$ (line~\ref{getpattern}). Next, on lines \ref{startgetbc}--\ref{endgetbc} we choose an approximation of the block pattern with index sets ${b\_idx}$ and $c\_idx$, which define what elements of $\vec{b}$ and $\vec{c}$ should be nonzero. The next step is to find the actual values of elements within the block with the function \RecoverBlock (line~\ref{recoverblock}). Finally, we inflate the found core block with \ExpandBlock (line~\ref{expandblock}).

\begin{algorithm}[tbp]
  \flushleft\small%
  \caption{\UpdateBlock (\Capricorn)}\label{alg:updateblock}
  \begin{algorithmic}[1]
    \Input $\matr{A} \in \Region^{n \times m}$, $\matr{B} \in \Region^{n \times k}$, $\matr{C} \in \Region^{k \times m}$, $\mathit{count}>0$
    \Output $\vec{b} \in \Region^{n \times 1}$, $\vec{c} \in \Region^{1 \times m}$ 
    \Parameters $\bucketSize>0$, $\delta>0$, $\theta>0$, $\tau\in[0,1]$ 
    \Function{\UpdateBlock}{\matr{A}, \matr{B}, \matr{C}, \mathit{count}}
    \State $l \gets (\mathit{count}-1) \pmod k + 1$                     \Comment{Index of the current block} \label{getcurrentindex}
                  \State $\matr{R}_{ij} \gets
         \begin{cases}
           \matr{A}_{ij} &  (\matr{B}^{-l} \maxprod \matr{C}_{-l})_{ij} < \matr{A}_{ij}\\
           \NaN &  \text{otherwise}
         \end{cases}$
        \Comment{Residual matrix}  \label{capricorn:residual}
    \State $\idx \gets \argmax_i \sum_j r_{ij}$ \label{seedrow} 
    \State $\matr{H} \gets \CorrelationsWithRow(\matr{R}, \idx, \bucketSize, \delta, \tau)$ \label{getpattern}
    \State $r \gets \argmax_{i} \sum_j h_{ij}$ \label{startgetbc} 
    \State $c \gets \argmax_j \sum_i h_{ij}$ 
    \State ${b\_idx} \gets \lbrace i \setcond \matr{H}_{i c} = 1\rbrace$ 
    \State $c\_idx \gets \lbrace i \setcond \matr{H}_{r i} = 1\rbrace$ \label{endgetbc}
    \State $[\vec{b}, \vec{c}] \gets \RecoverBlock(\matr{R}, {b\_idx}, c\_idx)$ \label{recoverblock}
    \State $\vec{b} \gets \AddRows(\vec{b}, \vec{c}, \matr{A}, \theta, \bucketSize, \delta)$ \label{expandblock}
    \State $\vec{c} \gets \AddRows(\vec{c}^T, \vec{b}^T, \matr{A}^T, \theta, \bucketSize, \delta)^T$
    \State \textbf{return}  $\vec{b}$, $\vec{c}$
    \EndFunction
  \end{algorithmic}
\end{algorithm}

The function \lword{\CorrelationsWithRow} (Algorithm~\ref{alg:correlationsWithRow}) finds the pattern of a new block. It does so by comparing a given seed row to other rows of the matrix and extracting sets where the ratio of the rows is almost constant. As was mentioned before, if two rows locally represent the same block, then one should be a multiple of the other, and the ratios of their corresponding elements should remain level. \lword{\CorrelationsWithRow} processes the input matrix row by row using the function \FindRowSet, which for every row outputs the most likely set of indices, where it is correlated with the seed row (lines \ref{startcorr}--\ref{endcorr}). Since the seed row is obviously the most correlated with itself, we compensate for this by replacing its pattern with that of the second most correlated row (lines \ref{replaceseedbegin}--\ref{replaceseedend}). Finally, we drop some of the least correlated rows after comparing their correlation value $\phi$ to that of the second most correlated row (after the seed row). The correlation function $\phi$ is defined as follows
\begin{equation}
\phi(\matr{H}, \idx, i) = \frac{\langle \matr{H}_i, \matr{H}_{\idx}\rangle}{\langle \matr{H}_i, \matr{H}_i\rangle + 1} \;. \label{eq:phi}
\end{equation}
The parameter $\tau$ is a threshold determining whether a row should be discarded or retained. The auxiliary function \FindRowSet (Algorithm~\ref{alg:FindRowSet}) compares two vectors and finds the biggest set of indices where their ratio remains almost constant. It does so by sorting the log-ratio of the input vectors into buckets of a fixed size and then choosing the bucket with the most elements. The notation $\vec{u} \vecdivide \vec{v}$ on line~\ref{getlogratios} means elementwise ratio of vectors $\vu$ and $\vv$.

It accepts two additional parameters: $\bucketSize$ and $\delta$. If the largest bucket has fewer than $\bucketSize$ elements, the function will return an empty set -- this is done because very small patterns do not reveal much structure and are mostly accidental. The width of the buckets is determined by the parameter $\delta$.

\begin{algorithm}[tbp]
  \flushleft\small%
  \caption{\CorrelationsWithRow}\label{alg:correlationsWithRow}
  \begin{algorithmic}[1]
    \Input $\matr{R} \in \Region^{n \times m}$, $idx \in [n]$, $\bucketSize>0$, $\delta>0$, $\tau\in[0,1]$
    \Output $\matr{H} \in \lbrace 0,\, 1 \rbrace^{n\times m}$
    \Function{\CorrelationsWithRow}{\matr{R}, \idx, \bucketSize, \delta, \tau}
    \State turn all $\NaN$ elements of $\matr{R}$ to 0
    \State $\matr{H} \gets 0^{n \times m}$
    \For {$i \gets 1$ \textbf{to} $n$} \label{startcorr}
    \State $V_i \gets \FindRowSet(\matr{R}_{\idx}, \matr{R}_{i}, \bucketSize, \delta)$
    \State $\matr{H}(i, V_i) \gets 1$  \label{endcorr}
    \EndFor
    \State $s \gets \argmax_{i \setcond i\neq \idx}\sum_j h_{ij}$ \label{replaceseedbegin} 
    \State $\matr{H}_{\idx} \gets \matr{H}_{s}$ \label{replaceseedend}
    \For {$i \gets 1$ \textbf{to} $n$}
    \If {$\phi(\matr{H}, \idx, i) < \phi(\matr{H}, \idx, s) - \tau$}
    \State $\matr{H}_{i} \gets 0$
    \EndIf
    \EndFor
    \State \textbf{return}  $\matr{H}$
    \EndFunction
  \end{algorithmic}
\end{algorithm}

\begin{algorithm}[tbp]
  \flushleft\small%
  \caption{\FindRowSet}\label{alg:FindRowSet}
  \begin{algorithmic}[1]
     \Input $\vec{u} \in \Region^m, \vec{v} \in \Region^m, \bucketSize > 0, \delta > 0$
     \Output $V \subset [m]$
     \Function{\FindRowSet}{\vec{u}, \vec{v}, \bucketSize, \delta}
     \State $\vec{r} \gets \log(\vec{u} \vecdivide \vec{v})$ \label{getlogratios}
     \State $\nBuckets \gets \ceil{(\max\{r\}-\min\{r\})/\delta}$  
    \For {$i \gets 0$ \textbf{to} $\nBuckets$}
    \State $V_i \gets \{ \idx \in [m]\setcond \min\{\vec{r}\}+i\delta \le r_{\idx} < \min\{\vec{r}\}+(i+1)\delta\}$
    \EndFor
    \State $V \gets \argmax\{\abs{V_i} \setcond i=1,\ldots,\nBuckets\}$
    \If {$\abs{V} < \bucketSize$}
    \State $V \gets \emptyset$
    \EndIf
    \State \textbf{return}  $V$
    \EndFunction
  \end{algorithmic}
\end{algorithm}

At this point we know the pattern of the new block, that is, the locations of its non-zeros. To fill in the actual values, we consider the submatrix defined by the pattern, and find the best rank-1 approximation of it. We do this using the \RecoverBlock function (Algorithm~\ref{alg:recoverblock}). It begins by setting all elements outside of the pattern to 0 as they are irrelevant to the block (line \ref{recovercancel}). Then it chooses one row to represent the block (lines~\ref{representbegin}--\ref{representend}), which will be used to find a good rank-1 cover.

Finally, we find the optimal column vector for the block by computing the best weights to be used for covering different rows of the block with its representing row (line \ref{getb}). Here we optimize with respect to the Frobenius norm, rather than $L_1$ matrix norm, since it allows to solve the optimization problem in closed form.

\begin{algorithm}[tbp]
  \flushleft\small%
  \caption{\RecoverBlock}\label{alg:recoverblock}
  \begin{algorithmic}[1]
     \Input $\matr{R} \in \Region^{n\times m}, \bIdx \subset [n], \cIdx \subset [m]$
     \Output $\vec{b} \in \Region^{n\times 1}$,  $\vec{c} \in \Region^{1\times m}$
     \Function{\RecoverBlock}{\matr{R}, \bIdx, \cIdx}
     \State turn $\matr{R}$ to 0 except elements with indices $(\bIdx, \cIdx)$ \label{recovercancel}
     \State $p \gets \RowRepresentingBlock(\matr{R}, \bIdx)$ \label{representbegin}
     \State $\vec{c} \gets \matr{R}_{p}$ \label{representend}
     \State $\vec{b} \gets {{\argmin}_{\vec{t}\in \Region^{n \times 1}}\norm{\matr{R}-\vec{t}\vec{c}}_F}$ \label{getb}
    \State \textbf{return}  $\vec{b}$, $\vec{c}$
    \EndFunction
  \end{algorithmic}
\end{algorithm}

Since blocks often heavily overlap, we are susceptible to finding only fragments of patterns in the data -- some parts of a block can be dominated by another block and subsequently not recognized. Hence, we need to expand found blocks to make them complete. This is done separately for rows and columns in the method called \AddRows (Algorithm~\ref{alg:AddRows}), which, given a starting block $\matr{X}=\vec{b}\vec{c}$ and the original matrix $\matr{A}$, tries to add new nonzero elements to $\vec{b}$. It iterates through all rows of $\matr{A}$ and adds those that would make a positive impact on the objective without unnecessarily overcovering the data. In order to decide whether a given row should be added, it first extracts a set $V_i$ of indices where this row is a multiple of the row vector $\vec{c}$ of the block (if they are not sufficiently correlated, then the row does not belong to the block) (line~\ref{addrowlab1}). A row is added if the evaluation of the following function  (line~\ref{impact})
\begin{equation} \label{myimpact}
\psi(\alpha) =       \frac{\sum_{s \in V_i} \max\lbrace 0, \,\alpha c_s - \matr{A}_{is} \rbrace} {\sum_{s \in V_i} \matr{A}_{is} - \abs{\matr{A}_{is} - \alpha c_s}} 
\end{equation}
is below the threshold $\theta$. 
In \eqref{myimpact} the numerator measures by how much the new row would overcover the original matrix, and the denominator reflects the improvement in the objective compared to a zero row.


\begin{algorithm}[tbp]
  \flushleft\small%
  \caption{\AddRows}\label{alg:AddRows}
  \begin{algorithmic}[1]
     \Input $\vec{b} \in \Region^{n \times 1}$, $\vec{c} \in \Region^{1 \times m}$, $\matr{A} \in \Region^{n\times m}$, $\theta > 0$, $\bucketSize>0$, $\delta>0$
     \Output $\vec{b} \in \Region^{n\times 1}$
     \Function{\AddRows}{\vec{b}, \vec{c}, \matr{A}, \theta, \bucketSize, \delta}
     \State ${b\_idx} \gets \lbrace t \setcond \vec{b}_t > 0\rbrace$
     \For {$i \in [n]\setminus {b\_idx}$}
     \State $V_i \gets \FindRowSet(\vec{c}, \matr{R}_i, \bucketSize, \delta)$ \label{addrowlab1}
     \If {$V_i = \emptyset$}
     \State \textbf{continue}
     \EndIf
     \State $\alpha \gets mean(\matr{R}_{iV_i}./\vec{c}_{V_i})$ \label{getalpha}
     \State $\mathit{impact} \gets 
      \frac{\sum_{s \in V_i} \max\{ 0,\, \alpha c_s - \matr{A}_{is} \} } { \sum_{s \in V_i} \matr{A}_{is} - \abs{\matr{A}_{is} - \alpha c_s}}
     $ \label{impact}
     \If {$impact \le \theta$}
          \State $\vec{b}_i \gets \alpha$ \label{getbi}
     \EndIf
     \EndFor
     \State \textbf{return}  $\vec{b}$
    \EndFunction
  \end{algorithmic}
\end{algorithm}

\textbf{Parameters}. \Capricorn has four parameters in addition to the common parameters in the Equator framework:  $\bucketSize>0$, $\delta>0$, $\theta>0$, and $\tau\in[0,1]$. The first one, $\bucketSize$ determines the minimum number of elements in two rows that must have ``approximately'' the same ratio for them to be considered for building a block. The parameter $\delta$ defines the bucket width when computing row correlations. When expanding a block, $\theta$ is used to decide whether to add a row (or column) to it -- the decision is positive whenever the expression~\eqref{myimpact} is at most $\theta$. Finally $\tau$ is used during the discovery of correlated rows. The value of $\tau$ belongs to the closed unit interval, and the higher it is, the more rows will be added.

\subsection{\Cancer}

We now present our second algorithm, \Cancer, which is a counterpart of \Capricorn specifically designed to work in the presence of high levels of continuous noise. The reason why \Capricorn cannot deal with continuous noise is that it expects the rows in a block to have an ``almost'' constant elementwise ratio, which is not the case when too many entries in the data are disturbed. For example, even low levels of Gaussian noise would make the ratios vary enough to hinder \Capricorn's ability to spot blocks. With \Cancer we take a new approach which is based on polynomial approximation of the objective. We also replace the $L_1$ matrix norm, which was used as an objective for \Capricorn, with the Frobenius norm. The reason for that is that when the noise is continuous, its level is defined as the total deviation of the noisy data from the original, rather than a count of the altered elements. This makes the Frobenius norm a good estimator for the amount of noise. \Cancer conforms to the general framework of \Equator (Algorithm~\ref{alg:equator}), and differs from \Capricorn only in how it finds the blocks and in the objective function.

Observe that in order to solve the problem \eqref{maxtimesrank1} we need to find a column vector $\vec{b} \in \Region^{n\times 1}$ and a row vector $\vec{c} \in \Region^{1\times m}$ such that they provide the best rank-1 approximation of the input matrix given the current factorization. The objective function is not convex in either $\vec{b}$ or $\vec{c}$ and is generally hard to optimize directly, so we have to simplify the problem, which we do in two steps. First, instead of doing full optimization of $\vec{b}$ and $\vec{c}$ simultaneously, we update only a single element of one of them at a time. This way the problem is reduced to single variable optimization. Even then the objective is hard to minimize, and we replace it with a polynomial approximation, which is easy to optimize directly.

The \Cancer version of the \UpdateBlock function is described in Algorithm~\ref{alg:updateblockcancer}. It alternatingly updates the vectors $\vec{b}$ and $\vec{c}$ using the \AdjustOneElement routine. Both $\vec{b}$ and $\vec{c}$ will be updated $\lfloor f (n+m)/2\rfloor$ times. \UpdateBlock  starts by finding the index of the block that has to be changed (line~\ref{getcurrentindexcancer}). Since the purpose of \UpdateBlock is to find the best rank-1 matrix to replace the current block, we also need to compute the reconstructed matrix without it, which is done on line~\ref{findN}. We then find the number of times \AdjustOneElement will be called (line~\ref{getfraccancer}) and change the degree of polynomials used for objective function approximation (line~\ref{computedegree}). This is needed because high degree polynomials are better at finalizing a solution that is already reasonably good, but tend to overfit the data and cause the algorithm to get stuck in local minima at the beginning. It is therefore beneficial to start with polynomials of lower degrees and then gradually increase it. The actual changes to $\vec{b}$ and $\vec{c}$ happen in the loop (lines~\ref{innercycle}--\ref{updatebcancer}), where we update them using \AdjustOneElement.

The \AdjustOneElement function (Algorithm~\ref{alg:adjustoneelement}) updates a single entry in either a column vector $\vec{b}$ or a row vector $\vec{c}$. Let us consider the case when $\vec{b}$ is fixed and $\vec{c}$ varies. In order to decide which element of $\vec{c}$ to change, we need to compare the best changes to all $m$ entries and then choose the one that yields the most improvement to the objective. 
A single element $\vec{c}_l$ only has an effect on the error along the column $l$. Assume that we are currently updating block with index $q$ and let $\matr{N}$ denote the reconstruction matrix without this block, that is $\matr{N} = \matr{B}^{-q} \maxprod \matr{C}_{-q}$. Minimizing $E(\matr{A}, \matr{B}, \matr{C})$ with respect to $\vec{c}_l$ is then equivalent to minimizing 
\begin{equation}
  \label{eq:gamma}
  \gamma(\matr{A}_l, \matr{N}_l, \vec{b}, \vec{c}_l) = \sum_{i=1}^n (\matr{A}_{il} - \max \lbrace \matr{N}_{il}, \vec{b}_i \vec{c}_l\rbrace)^2\; . 
\end{equation}

Instead of minimizing~\eqref{eq:gamma} directly, we use polynomial approximation in the \PolyMin routine (line~\ref{line:polymin}). It returns the (approximate) error $\mathit{err}$ and the value $x$ achieving that.
Since we are only interested in the improvement of the objective achieved by updating a single entry of $\vec{c}$, we compute the improvement of the objective after the change (line~\ref{improvement}). After trying every column of $\vec{c}$, we update only the column that yield the largest improvement. 

\begin{algorithm}[tbp]
  \flushleft
  \caption{\UpdateBlock (\Cancer)}\label{alg:updateblockcancer}
  \begin{algorithmic}[1]
        \Input $\matr{A} \in \Region^{n \times m}$, $\matr{B} \in \Region^{n \times k}$, $\matr{C} \in \Region^{k \times m}$, $\mathit{count}>0$
    \Output $\vec{b} \in \Region^{n \times 1}$, $\vec{c} \in \Region^{1 \times m}$ 
    \Parameters $t>2$, $0<f<1$ 
    \Function{\UpdateBlock}{\matr{A}, \matr{B}, \matr{C}, \mathit{count}}
    \State $l \gets (\mathit{count}-1) \pmod k + 1$                     \Comment{Index of the current block} \label{getcurrentindexcancer}
            \State $\matr{N} \gets \matr{B}^{-l} \maxprod \matr{C}_{-l}$   \Comment{Reconstructed matrix without the $i$-th block} \label{findN}
            \State $\mathit{niters} \gets \lfloor f(n+m)/2 \rfloor$ \label{getfraccancer}
    \State $\mathit{deg} \gets 2 + \lfloor(\mathit{count}-1) / k\rfloor \pmod t$  \label{computedegree}
            \State $\vec{b} \gets \matr{B}^l$, $\vec{c} \gets \matr{C}_l$ \label{initbc}
    \For{$\mathit{iter} \gets 1$ \textbf{to} $\mathit{niters}$} \label{innercycle}
      \State $\vec{c} = \AdjustOneElement(\matr{A}, \matr{N}, \vec{b}, \vec{c}, \mathit{deg})$
      \State $\vec{b} = \AdjustOneElement(\matr{A}^T, \matr{N}^T, \vec{c}^T, \vec{b}^T, \mathit{deg})^T$ \label{updatebcancer}
    \EndFor
      \State \textbf{return} $\vec{b}$,  $\vec{c}$
    \EndFunction
  \end{algorithmic}
\end{algorithm}

\begin{algorithm}[tbp]
  \flushleft
  \caption{\AdjustOneElement}\label{alg:adjustoneelement}
  \begin{algorithmic}[1]
    \Input $\matr{A} \in \Region^{n \times m}$, $\matr{N} \in \Region^{n \times m}$, $\vec{b} \in \Region^{n \times 1}$, $\vec{c} \in \Region^{1 \times m}$,                  $\mathit{deg} \ge 2$
    \Output $\vec{c} \in \Region^{1 \times m}$
    \Function{\AdjustOneElement}{\matr{A}, \matr{N}, \vec{b}, \vec{c}, deg}
      \For{$j \gets 1$ \textbf{to} $m$}
        \State $\mathit{baseError} \gets \sum_{i=1}^n \left(\matr{A}_{ij} - \max\lbrace\matr{N}_{ij}, \vec{b}_i \vec{c}_j \rbrace \right)^2$ \label{baseerror}
        \State $[err, \vec{x}_i] \gets \PolyMin(\matr{A}^j, \matr{N}^j, \vec{b}, \mathit{deg})$ \label{line:polymin}
        \State $\vec{u}_i \gets \mathit{baseError} - \mathit{err}$ \label{improvement}
      \EndFor
      \State $i \gets $ the index $i$ of largest value of $\vec{u}$
      \State $\vec{c}_i \gets \vec{x}_i$
      \State \textbf{return} $\vec{c}$
    \EndFunction
  \end{algorithmic}
\end{algorithm}

The function $\gamma$ that we need to minimize in order to find the best change to the vector $\vec{c}$ in \AdjustOneElement is hard to work with directly since it is not convex, and also not smooth because of the presence of the maximum operator. To alleviate this, we approximate the error function $\gamma$ with a polynomial $g$ of degree $deg$. Notice that when updating $\vec{c}_l$, other variables of $\gamma$ are fixed and we only need to consider function $\gamma'(x) = \gamma(\matr{A}_l, \matr{N}_l, \vec{b}, x)$. To build $g$ we sample $deg+1$ points from $(0,1)$ and fit $g$ to the values of $\gamma'$ at these points. We then find the $x\in\Region$ that minimizes $g(x)$ and return $g(x)$ (the approximate error) and $x$ (the optimal value).



\textbf{Parameters}. \Cancer has two parameters, $t>2$ and $0<f<1$, that control its execution. The first one, $t$, is the maximum allowed degree of polynomials used for approximation of the objective, which we set to 16 in all our experiments. The second parameter, $f$, determines the number of single element updates we make to the row and column vectors of a block in \UpdateBlock.

\textbf{Generalized Cancer}\label{generalcancer}. The \Cancer algorithm can be adapted to optimize other objective functions. Its general polynomial approximation framework allows for a wide variety of possible objectives, the only constraint being that they have to be additive (we call a function $E(\matr{A}, \matr{R})$ \textit{additive} if there exists a mapping $\phi\colon \Region \times \Region \rightarrow \Region$ such that for all $\matr{A} \in \Region^{n \times m}$ and $\matr{R} \in \Region^{n \times m}$ we have $E(\matr{A}, \matr{R}) = \sum_{ij}\phi(\matr{A}_{ij}, \matr{R}_{ij})$). Some examples of such functions are $L_1$ and Frobenius matrix norms, as well as Kullback--Leibler and Jensen--Shannon divergences. In order to use the generalized form of \Cancer one simply has to replace the Frobenius norm with another cost function wherever the error is evaluated.

\subsection{Time complexity}
The main work in \Equator is performed inside the \UpdateBlock routine, which is called $M k$ times. Since $M$ is a constant parameter, the complexity of \Equator is $k$ times the complexity of \UpdateBlock. In the following we find the theoretical bounds on the execution time of \UpdateBlock for both \Capricorn and \Cancer.

\textbf{Capricorn.}
In the case of \Capricorn there are three main contributors to \UpdateBlock (Algorithm~\ref{alg:updateblock}): \CorrelationsWithRow, \RecoverBlock, and \AddRows. \lword{\CorrelationsWithRow} compares every row to the seed row, each time calling \FindRowSet, which in turn has to process all $m$ elements of both rows. This results in the total complexity of \lword{\CorrelationsWithRow} being $O(nm)$. To find the complexity of \RecoverBlock, first observe that any ``pure'' block $\matr{X}$ can be represented as $\matr{X}=\vec{b}\vec{c}$, where $\vec{b}\in \Region^{n'\times 1}$ and $\vec{c}\in \Region^{1\times m'}$ with $n'\le n$  and $m'\le m$. \RecoverBlock selects $\vec{c}$ from the rows of $\matr{X}$ and then finds the corresponding column vector $\vec{b}$ that minimizes $\norm{\matr{X}-\vec{b}\vec{c}}_F$. In order to select the best row, we have to try each of the $n'$ candidates, and since finding the corresponding $\vec{b}$ for each of them takes time $O(n'm')$, this gives the runtime of \RecoverBlock as $O(n')O(n'm') = O(n^2m)$. The most computationally expensive parts of \AddRows are \FindRowSet (line \ref{addrowlab1}), finding the mean (line \ref{getalpha}), and computing the impact (line \ref{impact}), which all run in $O(m)$ time. All of these operations have to be repeated $O(n)$ times, and hence the runtime of \AddRows is $O(nm)$. Thus, we can now estimate the complexity of \UpdateBlock to be $O(nm)+O(n^2m)+O(nm) = O(n^2m)$, which leads to the total runtime of \Capricorn to be $O(n^2mk)$.

\textbf{Cancer.}
Here \UpdateBlock (Algorithm~\ref{alg:updateblockcancer}) is a loop that calls \AdjustOneElement $\lfloor f(n+m) \rfloor$ times. In \AdjustOneElement the contributors to the complexity are computing the base error (line~\ref{baseerror}) and a call to \PolyMin (line~\ref{line:polymin}). Both of them are performed $n$ or $m$ times depending on whether we supplied the column vector $\vec{b}$ or the row vector $\vec{c}$ to \AdjustOneElement. Finding the base error takes time $O(m)$ for $\vec{b}$ and $O(n)$ for $\vec{c}$. The complexity of \PolyMin boils down to that of evaluating the max-times objective at $deg+1$ points and then minimizing a degree $deg$ polynomial. Hence, \PolyMin runs in time $O(m)$ or $O(n)$ depending on whether we are optimizing $\vec{b}$ or $\vec{c}$, and the complexity of \AdjustOneElement is $O(nm)$.

Since \AdjustOneElement is called $\lfloor f(n+m)/2 \rfloor$ times and $f$ is a fixed parameter, this gives the complexity $O\bigl((n+m)nm\bigr)$ for \UpdateBlock and $O\bigl((n+m)nmk\bigr) = O(\max\{n,m\}nmk)$ for \Cancer.


\section{Experiments}
\label{sec:experiments}

We tested both \Capricorn and \Cancer on synthetic and real-world data. In addition we also compare against a variation of \Cancer that optimizes the Jensen--Shannon divergence, which we call \CancerJS. The purpose of the synthetic experiments is to evaluate the properties of the algorithm in controlled environments where we know the data has the max-times structure. They also demonstrate on what kind of data each algorithm excels and what their limitations are. The purpose of the real-world experiments is to confirm that these observations also hold true in real-world data, and to study what kinds of data sets actually have max-times structure. The source code of \Capricorn and \Cancer and the scripts that run the experiments in this paper are freely available for academic use.\!\footnote{\url{http://people.mpi-inf.mpg.de/~pmiettin/tropical/}}

\paragraph{Parameters of \Capricorn.} In both synthetic and real-world experiments we used the following default set of parameters: $M=4$, $\bucketSize=3$, $\delta=0.01$, $\theta=0.5$, and $\tau=0.5$.
\paragraph{Parameters of \Cancer.} Both variations of \Cancer use the same set of parameters. For the synthetic experiments we used $M=14$, $t=16$, and $f=0.1$. For the real world experiments we set $t=16$, $f=0.1$, and $M=40$ (except for \Eigenfaces, where we used $M=50$).

\subsection{Other methods.}
\label{sec:exp:other-methods}

We compared our algorithms against \SVD and six versions of NMF. For \SVD, we used Matlab's built-in implementation. The first NMF method, called simply \NMF, by \cite{kim2008toward}, is based on the block principal pivoting algorithm. The second form of NMF is a sparse NMF algorithm by \cite{hoyer04non-negative},\!\footnote{\url{https://github.com/aludnam/MATLAB/tree/master/nmfpack}, accessed 18 July 2017} which we call \SNMF. It defines the sparsity of a vector $\vec{x}\in\Region^n$ as
\begin{equation}
  \label{eq:hoyer}
  \text{sparsity}(\vec{x}) = \frac{\sqrt{n} - \left(\sum_i\abs{\vec{x}_i}\right)/\sqrt{\sum_i\vec{x}_i^2}}{\sqrt{n}-1}\; ,
\end{equation}
and returns factorizations where the sparsity of the factor matrices is user-controllable. In all of our experiments, we used the sparsity of \Cancer's factors as the sparsity parameter of \SNMF. We also compare against a standard alternating least squares algorithm  called \ALS \citep{cichocki09nonnegative}. Next we have two versions of NMF that are essentially the same as \ALS, but they use $L_1$ regularization for increased sparsity~\citep{cichocki09nonnegative}, that is, they aim at minimizing
\[
\norm{\mA - \mB\mC}_F + \alpha\norm{\mB}_1 + \beta\norm{\mC}_1\; .
\]
The first method is called \ALSR and uses regularizer coefficient $\alpha=\beta=1$, and the other, called \ALSRfive, has  regularizer coefficient $\alpha=\beta=5$. The last NMF algorithm, \WNMF by \citet{li2013non}, is designed to work with missing values in the data.

\subsection{Synthetic experiments.}
\label{sec:synth-exper}
The purpose of synthetic experiments is to prove the concept, that is that our algorithms are capable of identifying the max-times structure when it is there. In order to test this, we first generate the data with the pure max-times structure, then pollute it with some level of noise, and finally run the methods. The noise-free data is created by first generating  random factors of some density with  nonzero elements drawn from a uniform distribution on the $[0, 1]$ interval and then multiplying them using the max-times matrix product. 

We distinguish two types of noise. The first one is the discrete (or tropical) noise, which is introduced in the following way. Assume that we are given an input matrix $\matr{A}$ of size \by{n}{m}. We first generate an \by{n}{m} noise matrix  $\matr{N}$ with elements drawn from a uniform distribution on the $[0, 1]$ interval. Given a level of noise $l$, we then turn $\lfloor (1 - l)nm \rfloor$ random elements of $\matr{N}$ to 0, so that its resulting density is $l$. Finally, the noise is applied by taking elementwise maximum between the original data and the noise matrix $\matr{F} = \max \lbrace \matr{A}, \matr{N}\rbrace$. This is the kind of noise that \Capricorn was designed to handle, so we expect it to be better than \Cancer and other comparison algorithms.

We also test against continuous noise, as it is arguably more common in the real world. For that we chose Gaussian noise with 0 mean, where the noise level is defined to be its standard deviation. Since adding this noise to the data might result in negative entries, we truncate all values in a resulting matrix that are below zero. 

Unless specified otherwise, all matrices in the synthetic experiments are of size \by{1000}{800} with true max-times rank 10. All results presented in this section are averaged over 10 instances. For reconstruction error tests, we compared our algorithms \Capricorn, \Cancer, and \CancerJS against \SVD, \NMF, \SNMF, \ALS, \ALSR, and \ALSRfive. The error is measured as the relative Frobenius norm $\norm*{\matr{\tilde{A}} - \matr{A}}_F/\norm{\matr{A}}$, where $\matr{A}$ is the data and $\matr{\tilde{A}}$ its approximation, as that is the measure both \SVD and \NMF aim at minimizing. We also report the sparsity $s$ of factor matrices obtained by algorithms, which is defined as a fraction of zero elements in the factor matrices, 
 \begin{equation}
   \label{eq:sparsity}
   s(\mA) = \abs{\{(i, j) \setcond \mA_{ij} = 0\}}/(nm)\; ,
 \end{equation}
 for an \by{n}{m} matrix $\mA$. For the experiments with tropical noise, the reconstruction errors are reported in Figure~\ref{fig:synth:reconstruct:frob} and factor sparsity in Figure~\ref{fig:synth:sparsity}. For the Gaussian noise experiments, the reconstruction errors and factor sparsity are shown in Figure~\ref{fig:synth:err} and Figure~\ref{fig:synth:sparsity:gaussian} respectively.

\paragraph{Varying density with tropical noise.}
In our first experiment we studied the effects of varying the density of the factor matrices in presence of the tropical noise. We changed the density of the factors from 10\% to 100\% with an increment of 10\%, while keeping the noise level at 10\%. Figure~\ref{density:cap} shows the reconstruction error and Figure~\ref{density:frob:sparse} the sparsity of the obtained factors. \Capricorn is consistently the best method, obtaining almost perfect reconstruction; only when the density approaches 100\% does its reconstruction error deviate slightly from 0. This is expected since the data was generated with the tropical (flipping) noise that \Capricorn is designed to optimize. Compared to \Capricorn all other methods clearly underperform, with \Cancer being the second best. With the exception of \ALSRfive, all NMF methods obtain results similar to those of \SVD, while having a somewhat higher reconstruction error than \Cancer. That \SVD and NMF methods (except \ALSRfive) start behaving better at higher levels of density indicates that these matrices can be explained relatively well using standard algebra. \Capricorn and \Cancer also have the highest sparsity of factors, with \Capricorn exhibiting a decrease in sparsity as the density of the input increases. This behaviour is desirable since ideally we would prefer to find factors that are as close to the original ones as possible. For NMF methods there is a trade-off between the reconstruction error and the sparsity of the factors -- the algorithms that were worse at reconstruction tend to have sparser factors.

\paragraph{Varying tropical noise.}
The amount of noise is always with respect to the number of nonzero elements  in a  matrix, that is, for a matrix $\matr{A}$ with $\kappa(\matr{A})$ nonzero elements and noise level  $\alpha$, we flip  $\alpha \kappa(\matr{A})$ elements to random values. There are two versions of this experiment -- one with factor density 30\% and the other with 60\%. In both cases we varied the noise level from 0\% to 110\% with increments of 10\%. Figure~\ref{noise:cap} and Figure~\ref{noise:frobhd} show the respective reconstruction errors and Figure~\ref{noise:frob:sparse} and Figure~\ref{noise:frobhd:sparse} the corresponding sparsities of the obtained factors.
In the low-density case, \Capricorn is consistently the best method with essentially perfect reconstruction for up to $80\%$ of noise. In the high-density case, however, the noise has more severe effects, and in particular after $60\%$ of noise, \Cancer, \SVD, and all versions of NMF are better than \Capricorn. The severity of the noise is, at least partially, explained by the fact that in the denser data we flip more elements than in sparser data: for example when the data matrices are full, at 50\% of noise, we have already replaced half of the values in the matrices with random values. Further, the quick increase of the reconstruction error for \Capricorn hints strongly that the max-times structure of the data is mostly gone at these noise levels. 
\Capricorn also produces clearly the sparsest factors for the low density case, and is mostly tied with \Cancer and \ALSRfive when the density is high. It should be noted however that \ALSRfive generally has the highest reconstruction error among all the methods, which suggests that its sparse factors come at the cost of recovering little structure from the data. 

\paragraph{Varying rank with tropical noise.}
Here we test the effects of the (max-times) rank, with the assumption that higher-rank matrices are harder to reconstruct. The true max-times rank of the data varied from 2 to 20 with increments of 2. There are three variations of this experiment: with 30\% factor density and 10\% noise (Figure~\ref{dim:frob}), with 30\%  factor density and 50\% noise (Figure~\ref{dim:frobhn}), and with  60\% factor density and 10\% noise (Figure~\ref{dim:frobhd}). The corresponding sparsities are shown on Figures~\ref{dim:frob:sparse:sparse}, \ref{dim:frobhn:sparse}, and \ref{dim:frobhd:sparse}. \Capricorn has a clear advantage for all settings, obtaining nearly perfect reconstruction. \Cancer is generally second best, except for the high noise case, where it is mostly tied with a bunch of NMF methods. Interestingly, on the last two plots the reconstruction error actually drops for \Cancer, \SVD, and NMF-based methods. This is a strong indication that at this point they no longer can extract meaningful structure in the data, and the improvement of the reconstruction error is largely due to uniformization of the data caused by high density and high noise levels.

\begin{figure} [tp]  
  \centering
  \subfigure[Varying density test.] {%
       \includegraphics[width=\subfigwidth]{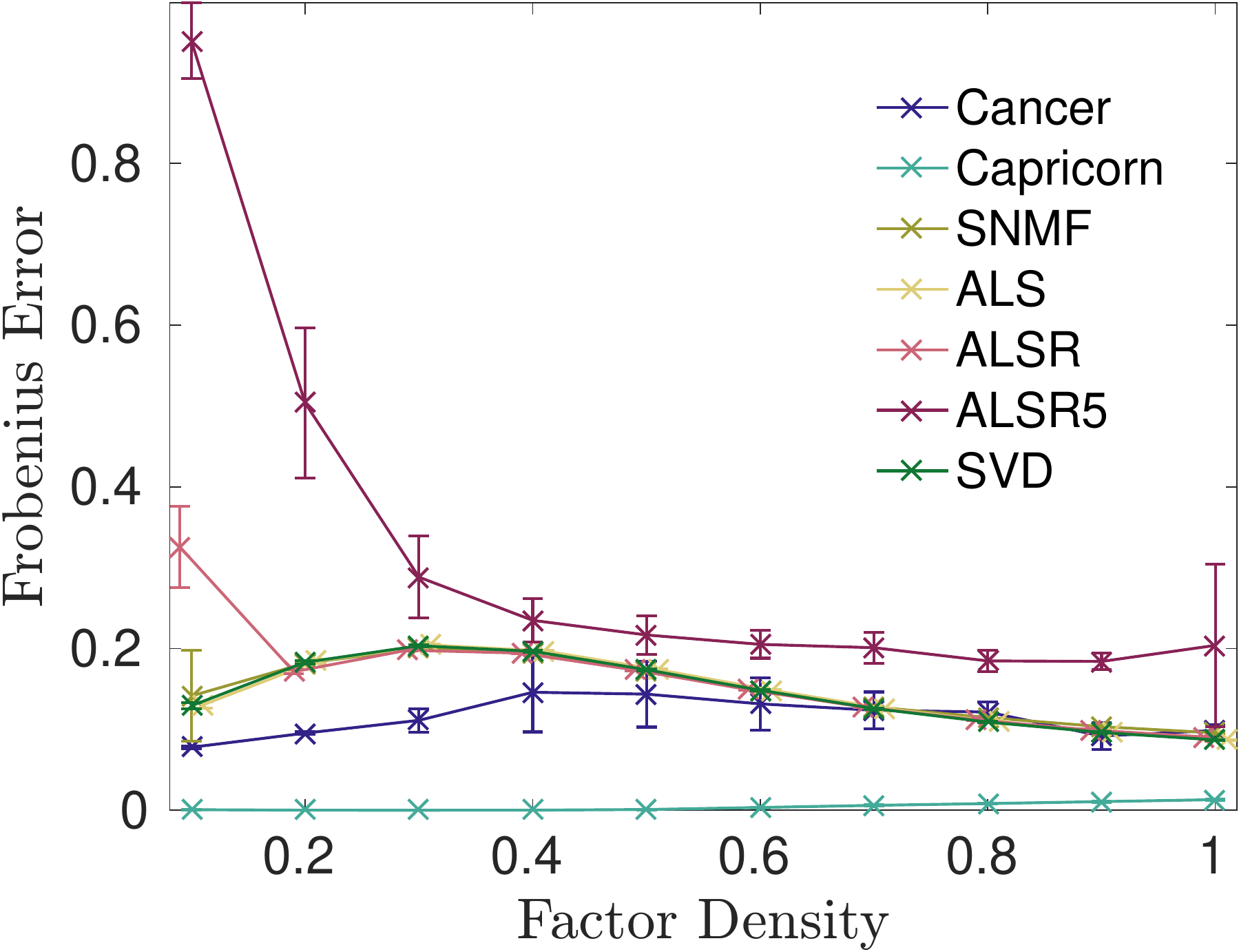}%
       \label{density:cap}%
       }
       \hspace{\subfigspace}
       \hspace{\subfigspace}
         \subfigure[Varying noise test.] {%
       \includegraphics[width=\subfigwidth]{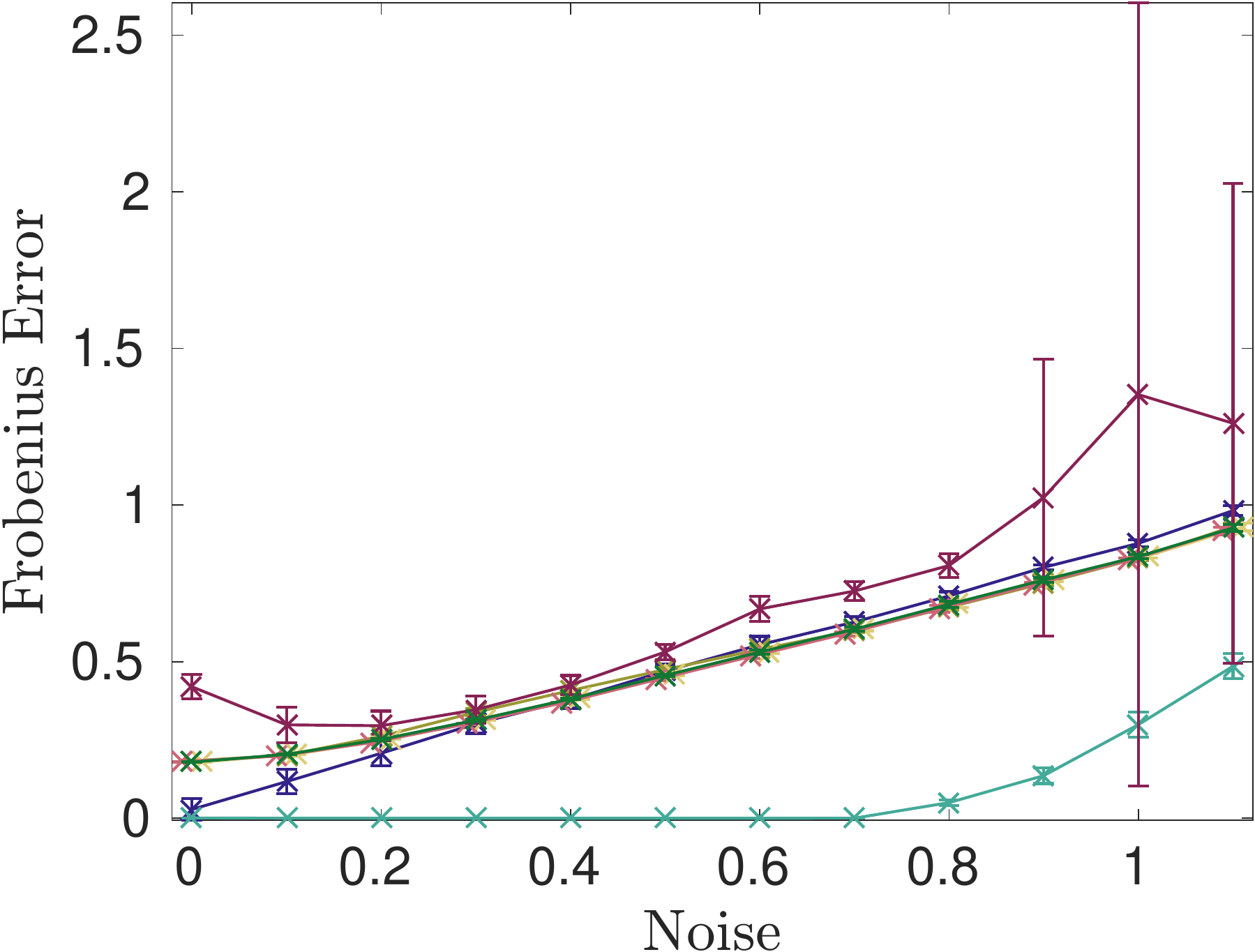}%
       \label{noise:cap}%
       }
       \hspace{\subfigspace}
          \subfigure[Varying noise with high density.] {%
        \includegraphics[width=\subfigwidth]{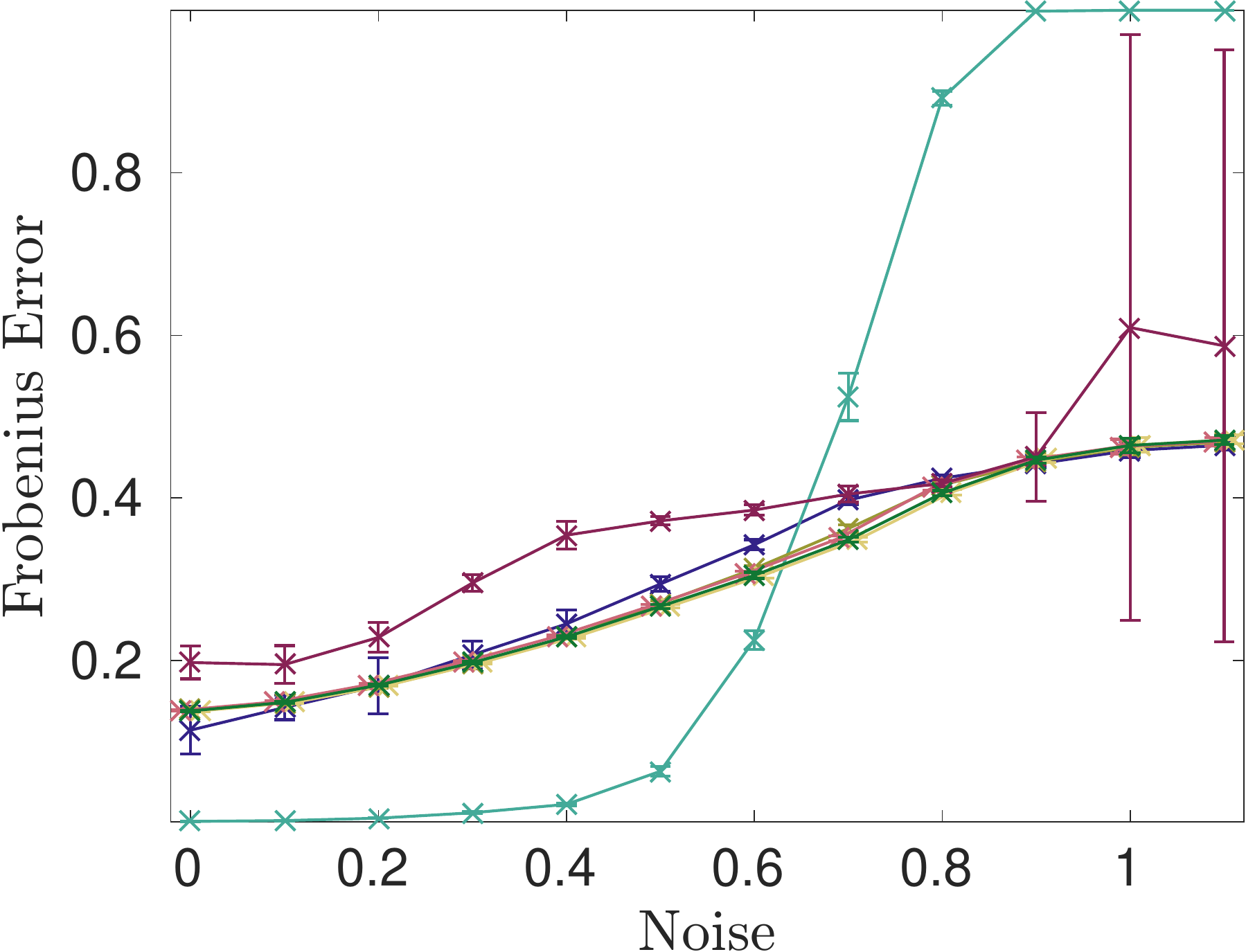}%
        \label{noise:frobhd}%
        }
        \hspace{\subfigspace}
  \\
           \subfigure[Varying rank test with 10\% noise and 30\% factor density.] {%
       \includegraphics[width=\subfigwidth]{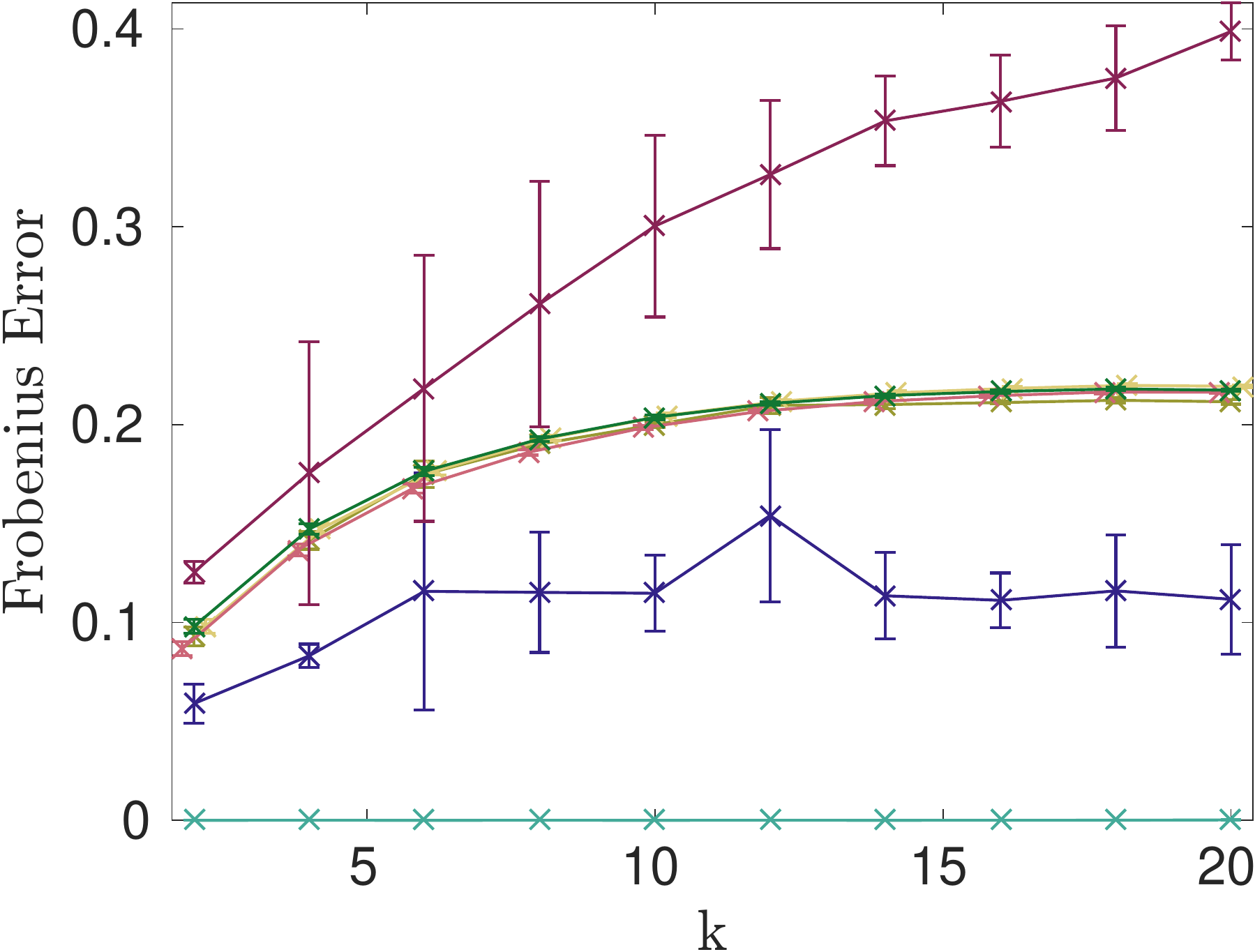}%
       \label{dim:frob}%
       }
       \hspace{\subfigspace}
           \subfigure[Varying rank test with 50\% noise and 30\% factor density.] {%
       \includegraphics[width=\subfigwidth]{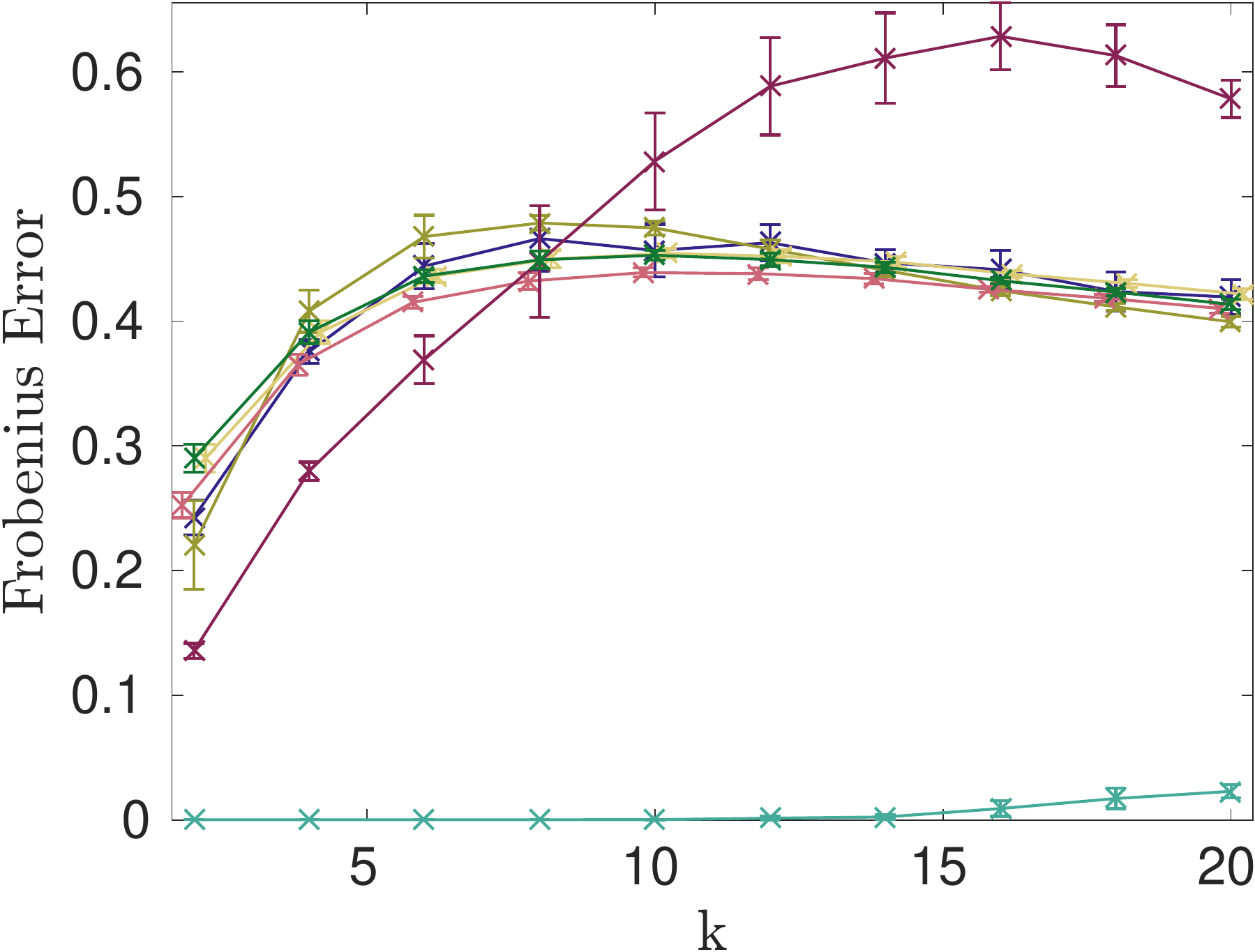}%
       \label{dim:frobhn}%
       }
       \hspace{\subfigspace}
           \subfigure[Varying rank test with 10\% noise and 60\% factor density.] {%
       \includegraphics[width=\subfigwidth]{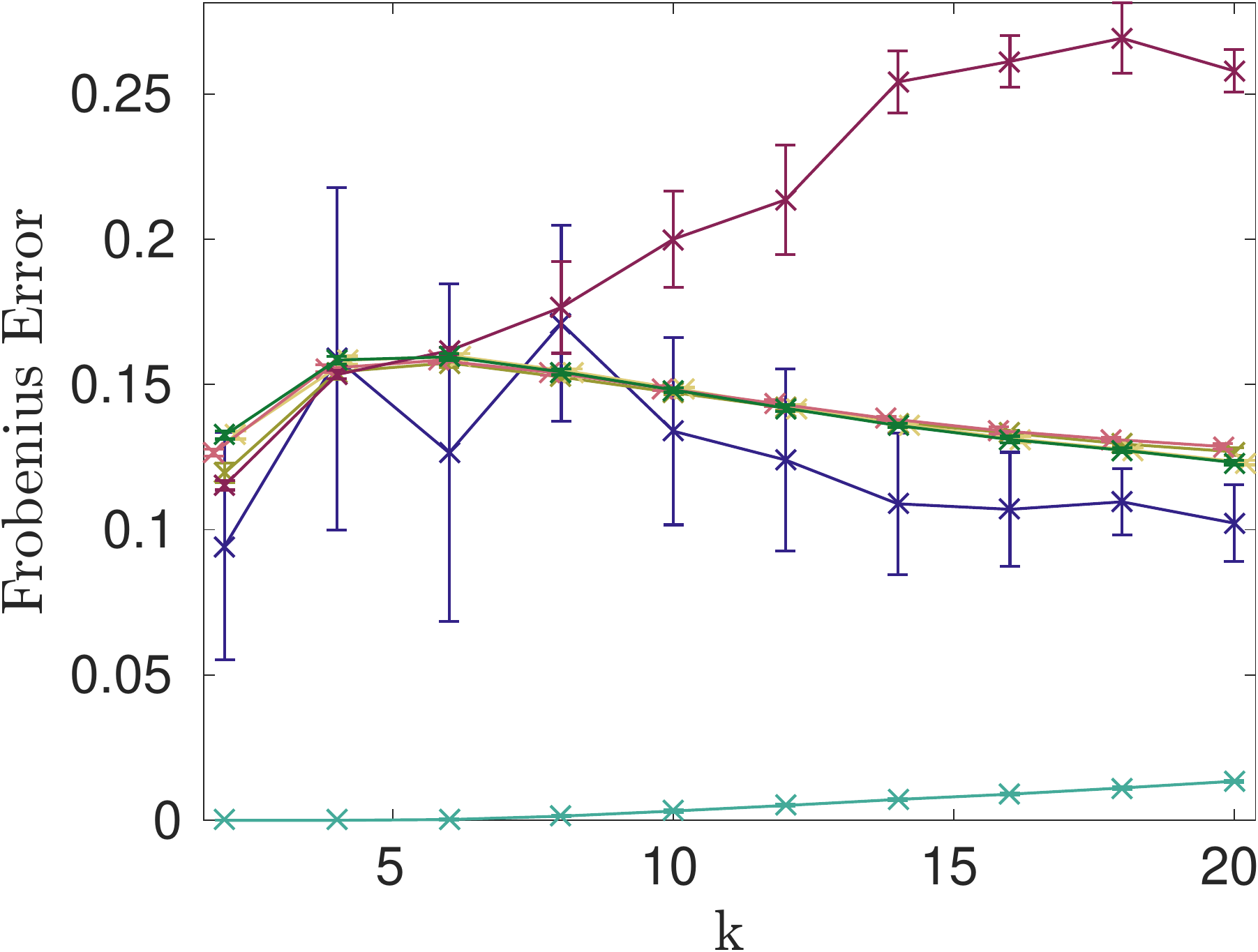}%
       \label{dim:frobhd}%
     }
       \caption{\textbf{Reconstruction errors on synthetic data with tropical noise}. $x$-axis is the parameter varied and $y$-axis is the relative Frobenius norm. All results are averages over 10 random matrices and the width of the error bars is twice the standard deviation. }
       \label{fig:synth:reconstruct:frob}
     \end{figure}

\begin{figure} [tp]  
  \centering
  \subfigure[Varying density test.] {%
       \includegraphics[width=\subfigwidth]{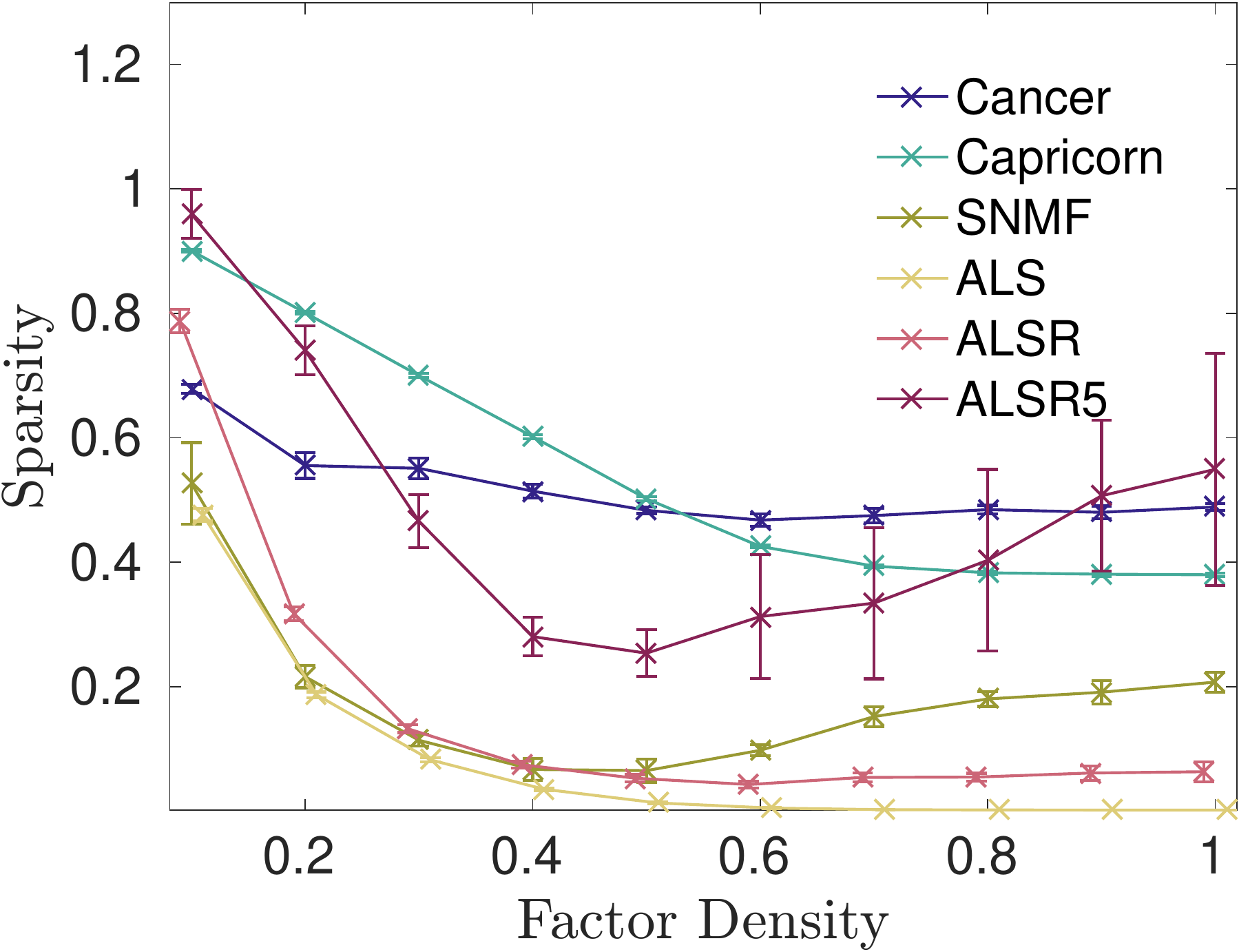}%
       \label{density:frob:sparse}%
       }
       \hspace{\subfigspace}
       \hspace{\subfigspace}
         \subfigure[Varying noise test.] {%
       \includegraphics[width=\subfigwidth]{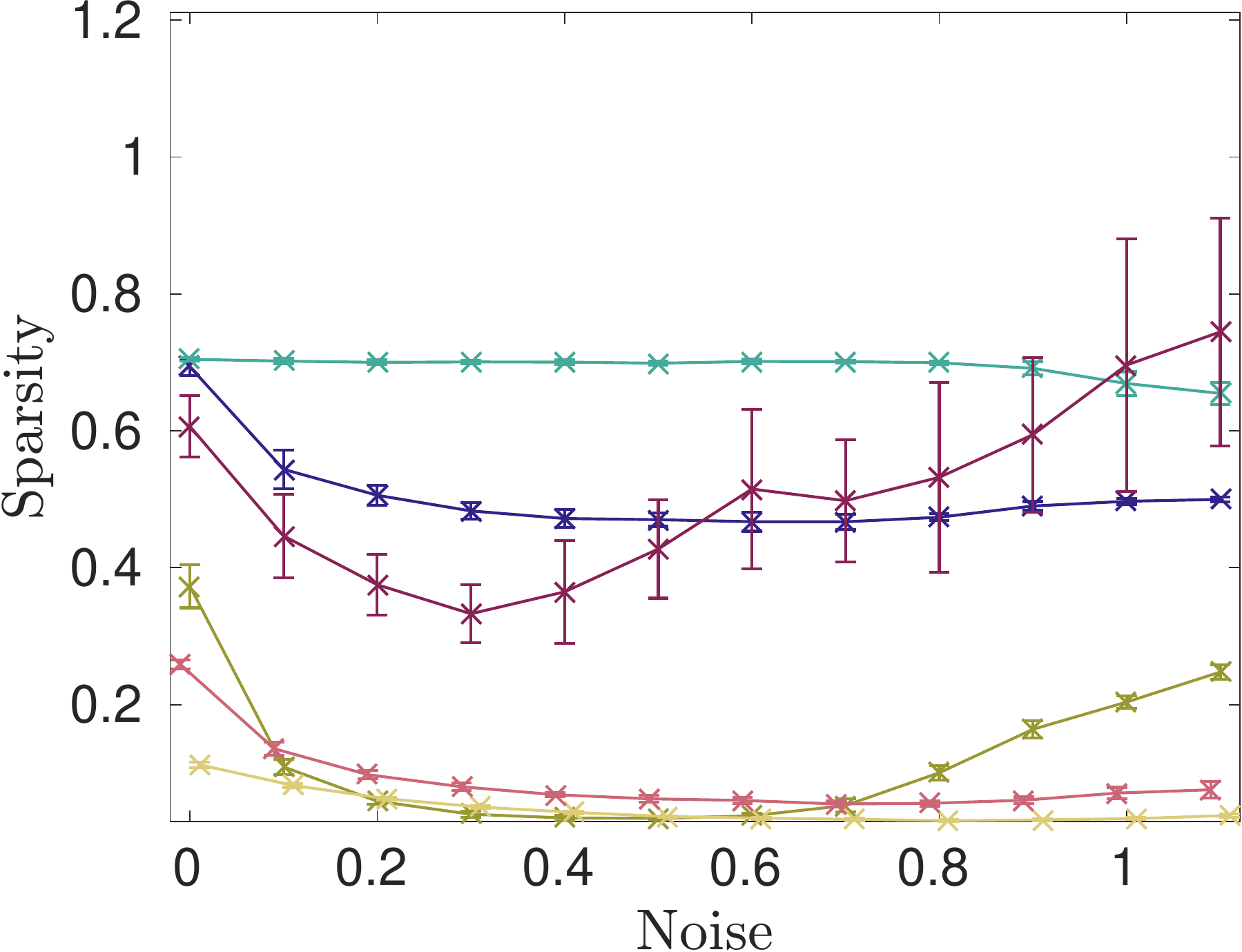}%
       \label{noise:frob:sparse}%
       }
       \hspace{\subfigspace}
          \subfigure[Varying noise with high density.] {%
        \includegraphics[width=\subfigwidth]{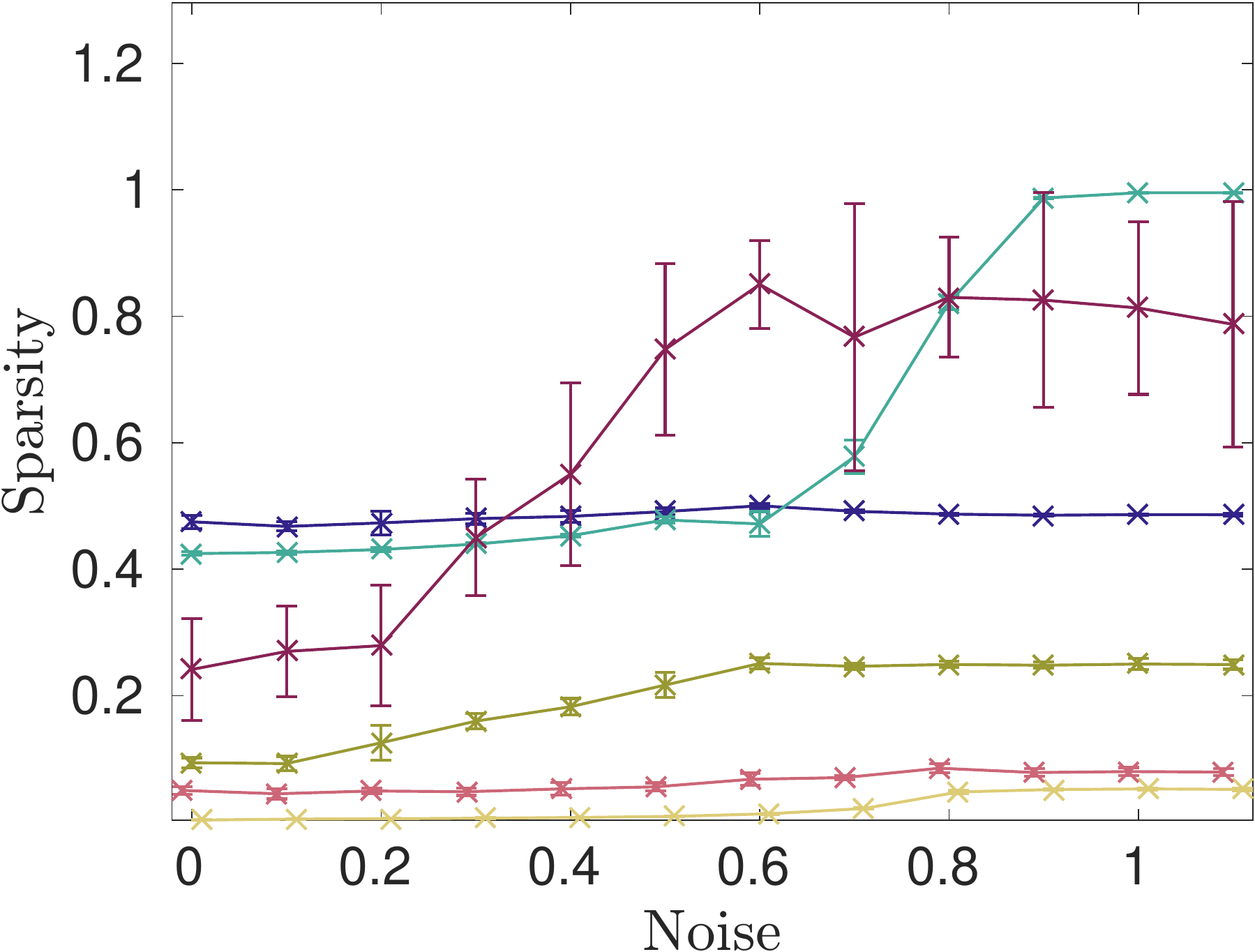}%
        \label{noise:frobhd:sparse}%
        }
        \hspace{\subfigspace}
  \\
           \subfigure[Varying rank test with 10\% noise and 30\% factor density.] {%
       \includegraphics[width=\subfigwidth]{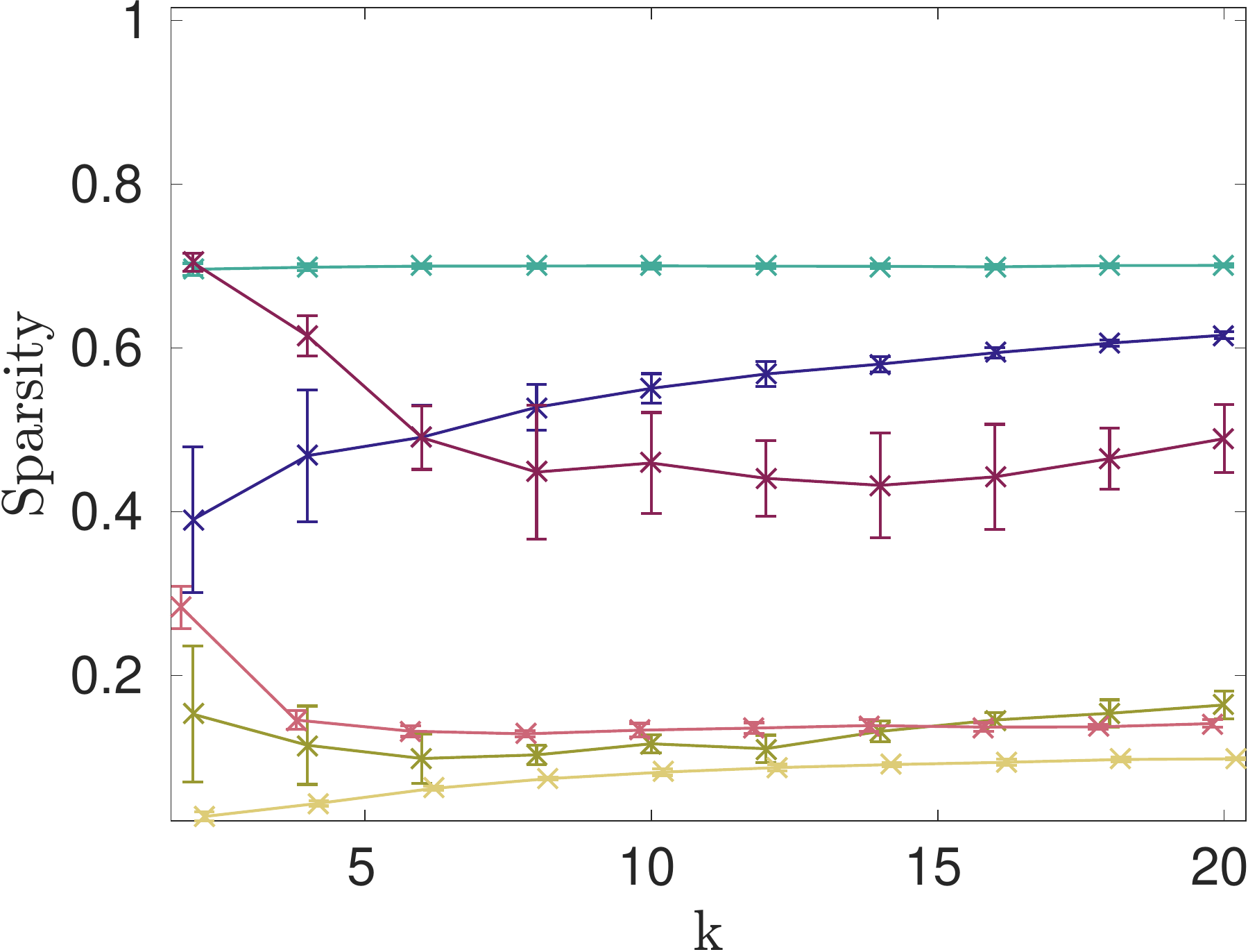}%
       \label{dim:frob:sparse:sparse}%
       }
       \hspace{\subfigspace}
           \subfigure[Varying rank test with 50\% noise and 30\% factor density.] {%
       \includegraphics[width=\subfigwidth]{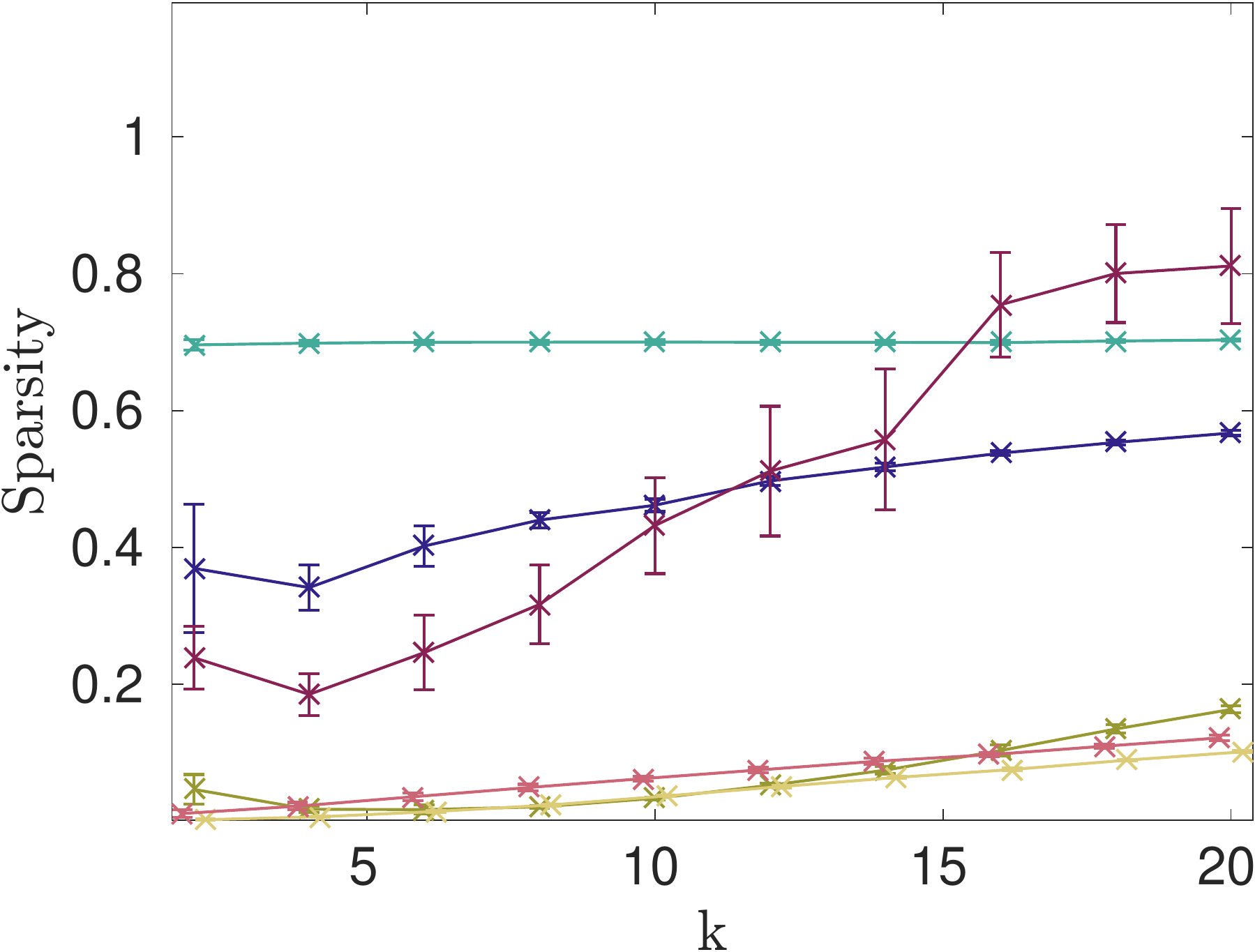}%
       \label{dim:frobhn:sparse}%
       }
       \hspace{\subfigspace}
           \subfigure[Varying rank test with 10\% noise and 60\% factor density.] {%
       \includegraphics[width=\subfigwidth]{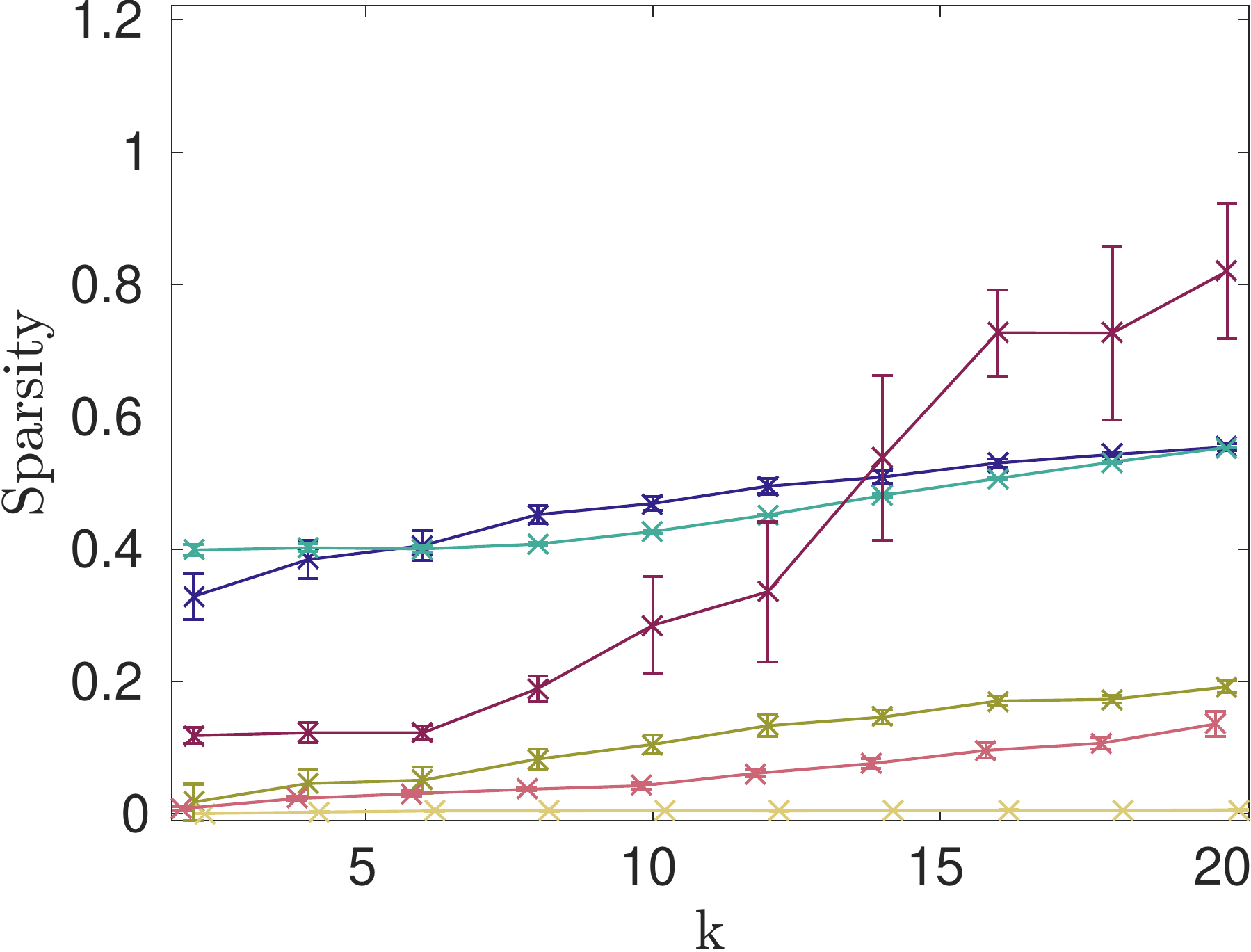}%
       \label{dim:frobhd:sparse}%
     }
     \caption{\textbf{Sparsity (fraction of zeroes) of the factor matrices for synthetic data with tropical noise.} $x$-axis is the parameter varied and $y$-axis is the sparsity of the factors. The markers are averages of 10 random matrices and the width of the error bars is twice the standard deviation.}
       \label{fig:synth:sparsity}
     \end{figure}

\paragraph{Varying Gaussian noise.}
Here we investigate how the algorithms respond to different levels of Gaussian noise, which was varied from 0 to 0.14 with increments of 0.01. A level of noise is a standard deviation of the Gaussian noise used to generate the noise matrix as described earlier. The factor density was kept at 50\%. The results are given on Figure~\ref{fig:err:noise} (reconstruction error) and Figure~\ref{fig:sparsity:noise} (sparsity of factors).

Here \Cancer is generally the best method in reconstruction error, and second in sparsity only to \Capricorn. The only time it loses to any method is when there is no noise, and \Capricorn obtains a perfect decomposition. This is expected since \Capricorn is by design better at spotting pure subtropical structure.

\paragraph{Varying density with Gaussian noise.}
In this experiment we studied what effects the  density of factor matrices used in data generation has on the algorithms' performance. For this purpose we varied the density from 10\% to 100\% with increments of 10\% while keeping the other parameters fixed. There are two versions of this experiment, one with low noise level of 0.01 (Figures~\ref{fig:err:density} and \ref{fig:sparsity:density}), and a more noisy case at 0.08 (Figures~\ref{fig:err:densityHN} and~\ref{fig:sparsity:densityHN}). 

\Cancer provides the least reconstruction error in this experiment, being clearly the best until the density is $0.7$, from which point on it is tied with \SVD and the NMF-based methods (the only exception being the least-dense high-noise case, where \ALSR obtains a slightly better reconstruction error). \Capricorn is the worst by a wide margin, but this is not surprising, as the data does not follow its assumptions. On the other hand, \Capricorn does produce generally the sparsest factorization, but these are of little use given its bad reconstruction error. \Cancer produces the sparsest factors from the remaining methods, except in the first few cases where \ALSRfive is sparser (and worse in reconstruction error), meaning that \Cancer produces factors that are both the most accurate and very sparse.

\paragraph{Varying rank with Gaussian noise.}
The purpose of this test is to study the performance of algorithms on data of different max-times ranks. We varied the true rank of the data from 2 to 20 with increments of 2. The factor density  was fixed at 50\% and Gaussian noise at 0.01. The results are shown on Figure~\ref{fig:err:dim} (reconstruction error) and Figure~\ref{fig:sparsity:dim} (sparsity of factors). The results are similar to those considered above, with \Cancer returning the most accurate and second sparsest factorizations.

\paragraph{Optimizing the Jensen--Shannon divergence.}
By default \Cancer  optimizes the Frobenius reconstruction error, but it can be replaced by an arbitrary additive cost function. We performed experiments with Jensen--Shannon divergence, which is given by the formula
\begin{equation} \label{obj:JS}
J(\mA, \mB) = \sum_{ij} \mA_{ij}\log\left(\frac{2\mA_{ij}}{\mA_{ij}+\mB_{ij}}\right) + \mB_{ij}\log\left(\frac{2\mB_{ij}}{\mA_{ij}+\mB_{ij}}\right)\;.
\end{equation}
It is easy to see that \eqref{obj:JS} is an additive function, and hence can be plugged into \Cancer.
Figure~\ref{fig:synth:reconstruct:js}, shows how this version of \Cancer compares to other methods. The setup is the same as in the corresponding experiments on Figure~\ref{fig:synth:err}, except that we have removed \ALSRfive because of its overall bad performance. In all these experiments it is apparent that this version of \Cancer is inferior to that optimizing the Frobenius error, but is generally on par with \SVD and NMF-based methods. Also for the varying density test (Figure~\ref{density:js}) it produces better reconstruction errors than \SVD and all the NMF methods, until the density reaches 50\%, after which they become tied.

\begin{figure}[tp]
  \centering  
 \subfigure[Varying noise with density 50\%]{%
    \includegraphics[width=\subfigwidth]{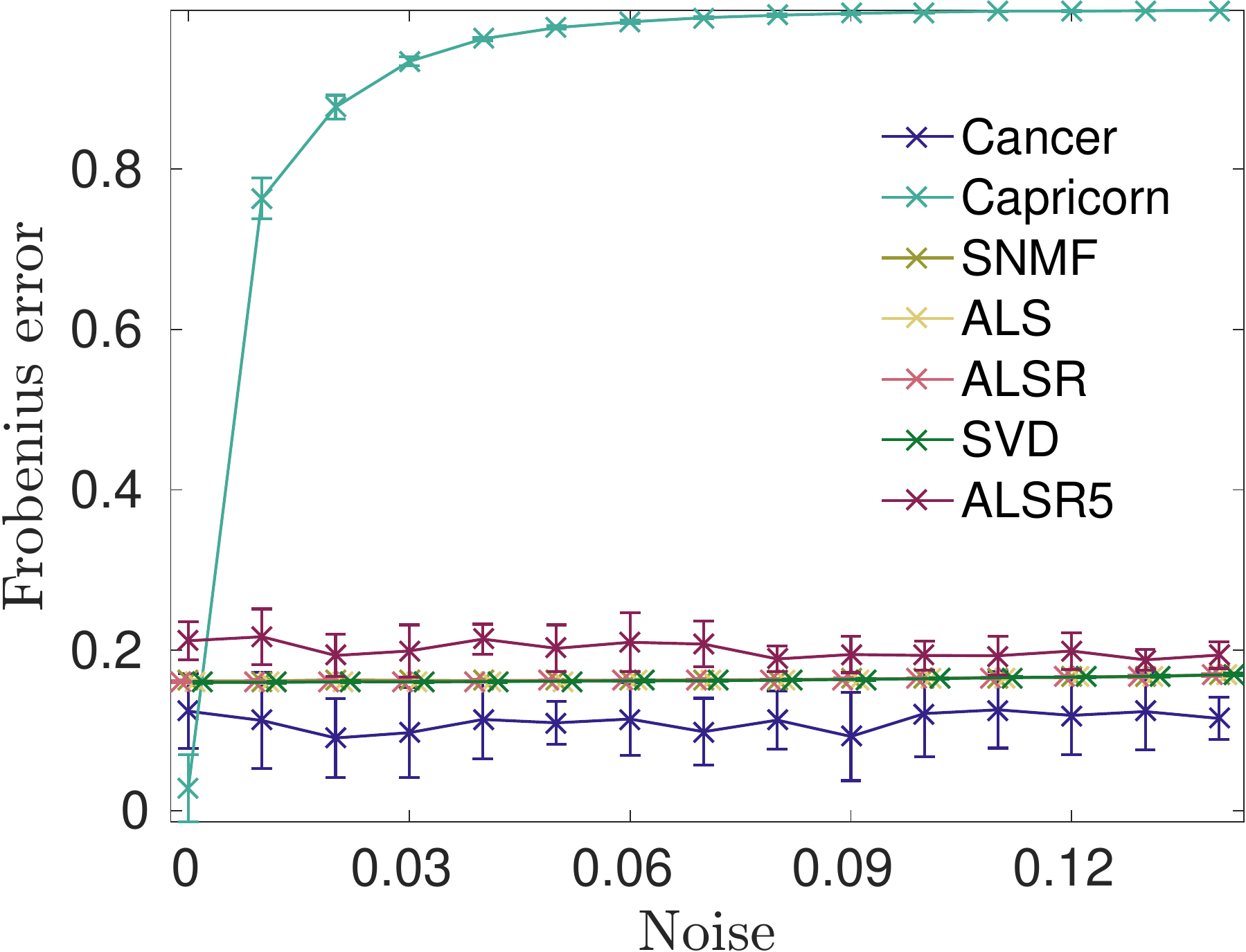}%
    \label{fig:err:noise}%
  }  
   \hspace{\subfigspace}
    \subfigure[Varying density with low Gaussian noise]{%
      \includegraphics[width=\subfigwidth]{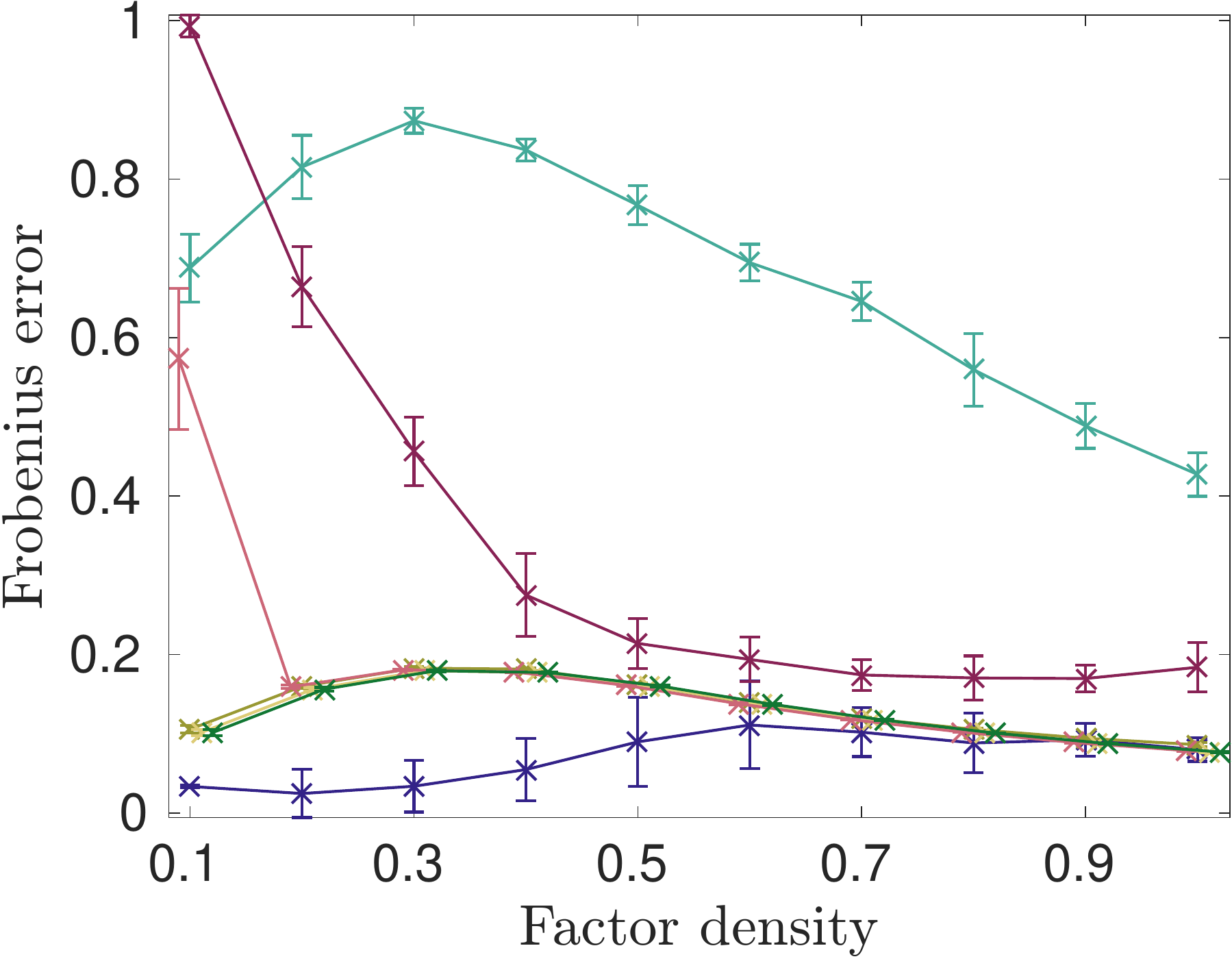}%
     \label{fig:err:density}%
   }
  \\
      \subfigure[Varying density with high Gaussian noise]{%
    \includegraphics[width=\subfigwidth]{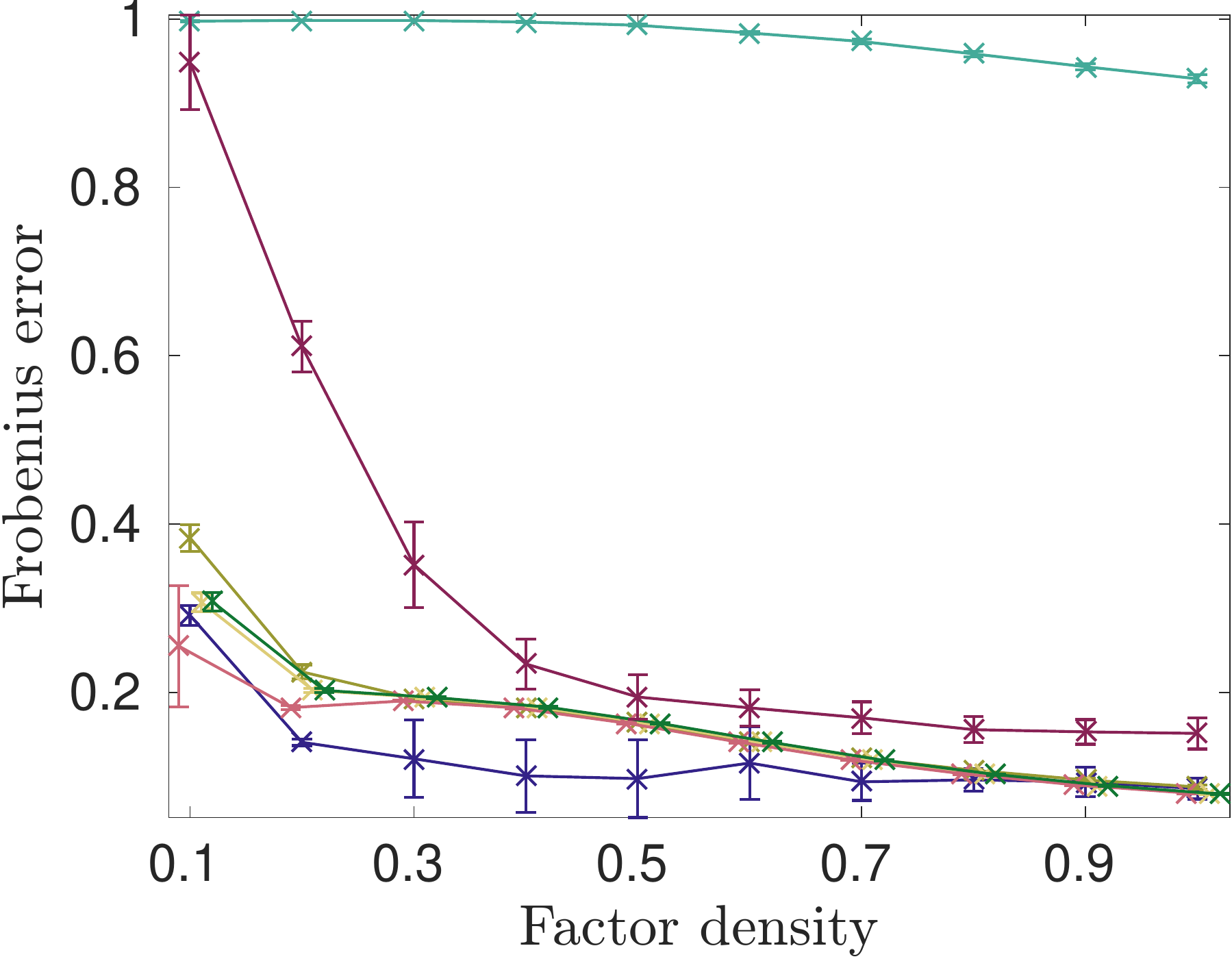}%
    \label{fig:err:densityHN}%
  }
    \hspace{\subfigspace}
  \subfigure[Varying rank; 50\% density and low Gaussian noise]{%
    \includegraphics[width=\subfigwidth]{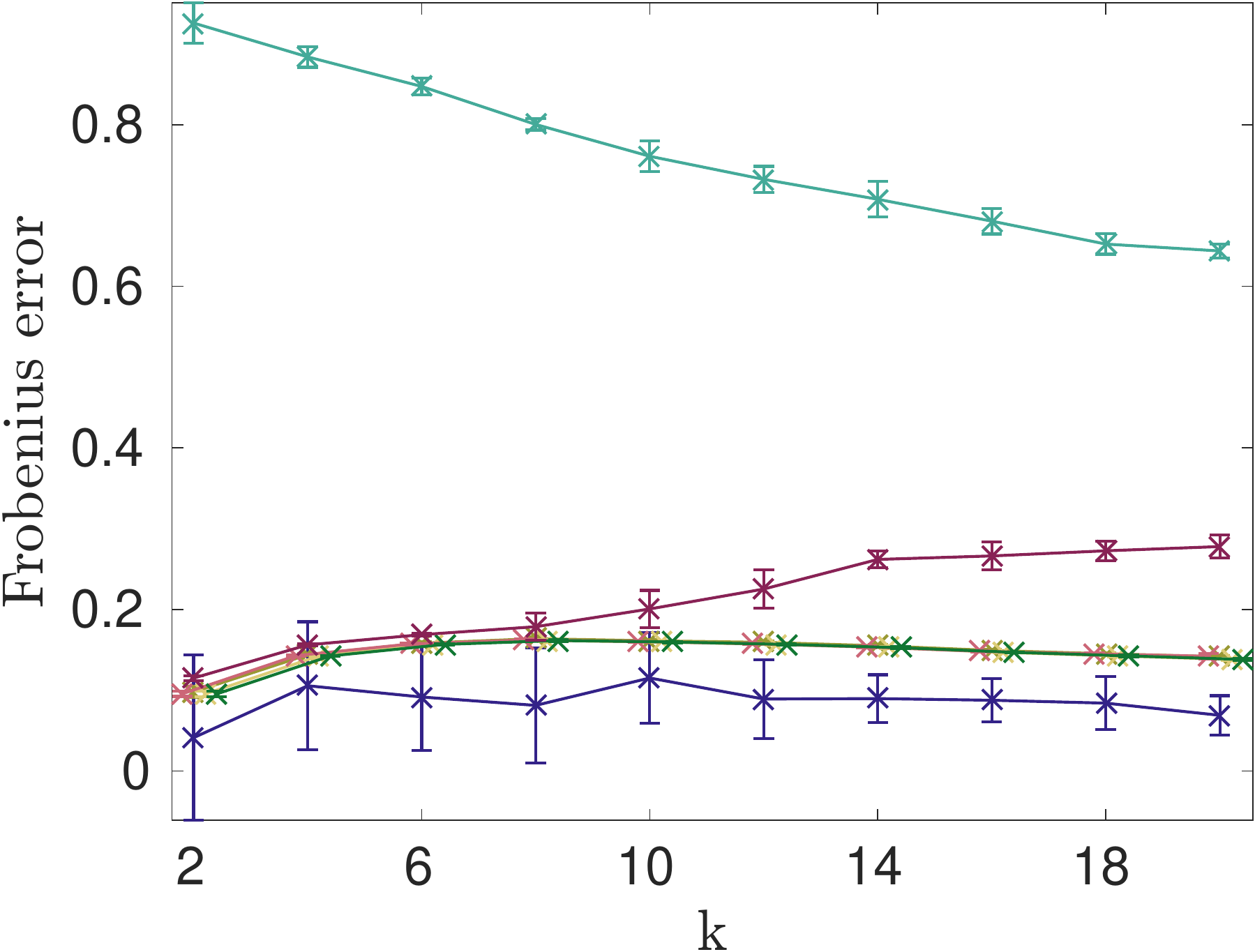}%
    \label{fig:err:dim}%
  }
  \caption{\textbf{Reconstruction error (Frobenius norm) for synthetic data with Gaussian noise noise.} The markers are averages of 10 random matrices and the width of the error bars is twice the standard deviation.}
  \label{fig:synth:err}
\end{figure}

\begin{figure}
    \centering
    \subfigure[Varying noise with density 50\%]{%
    \includegraphics[width=\subfigwidth]{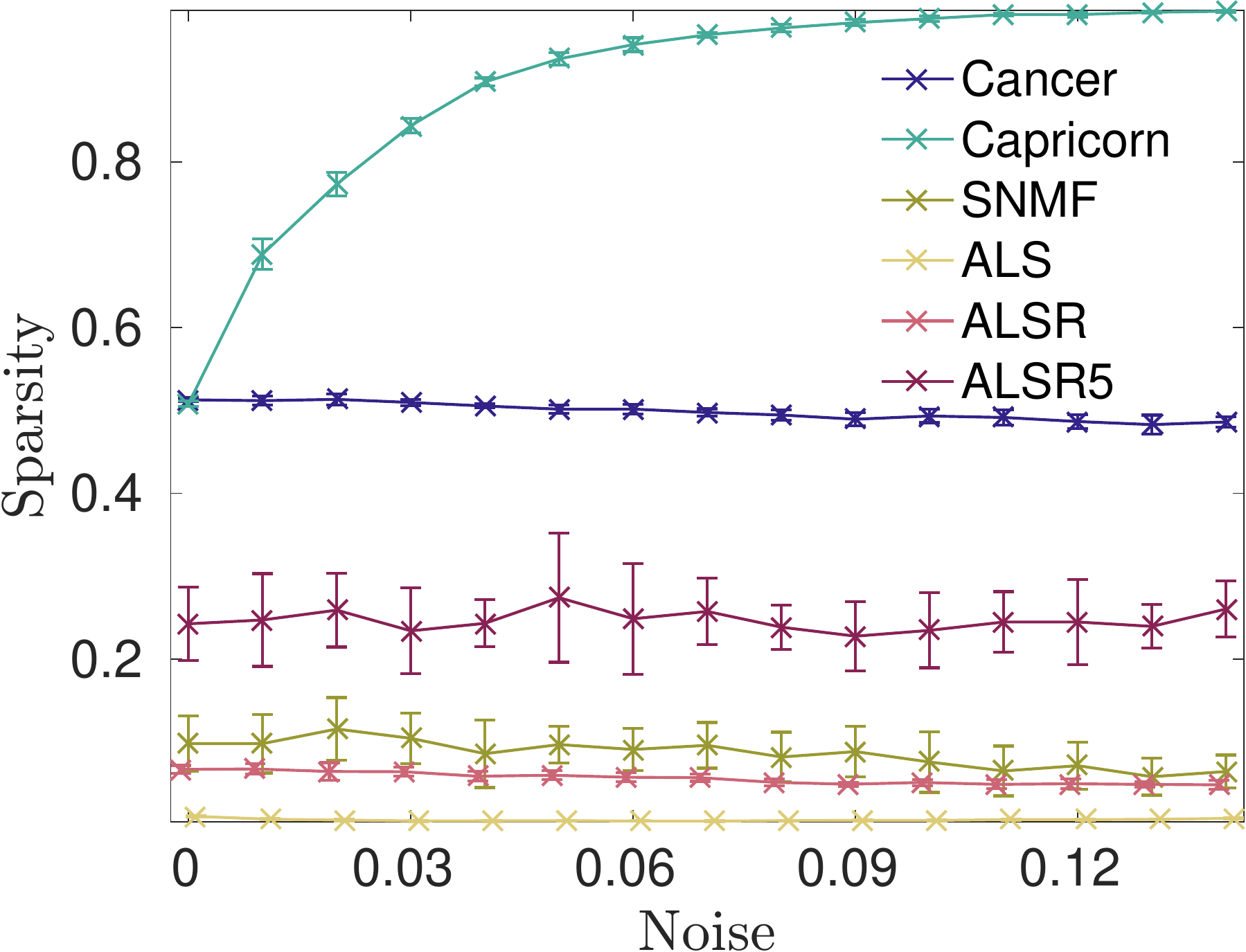}%
    \label{fig:sparsity:noise}%
  }  
  \hspace{\subfigspace}  
    \subfigure[Varying density with low Gaussian noise]{%
    \includegraphics[width=\subfigwidth]{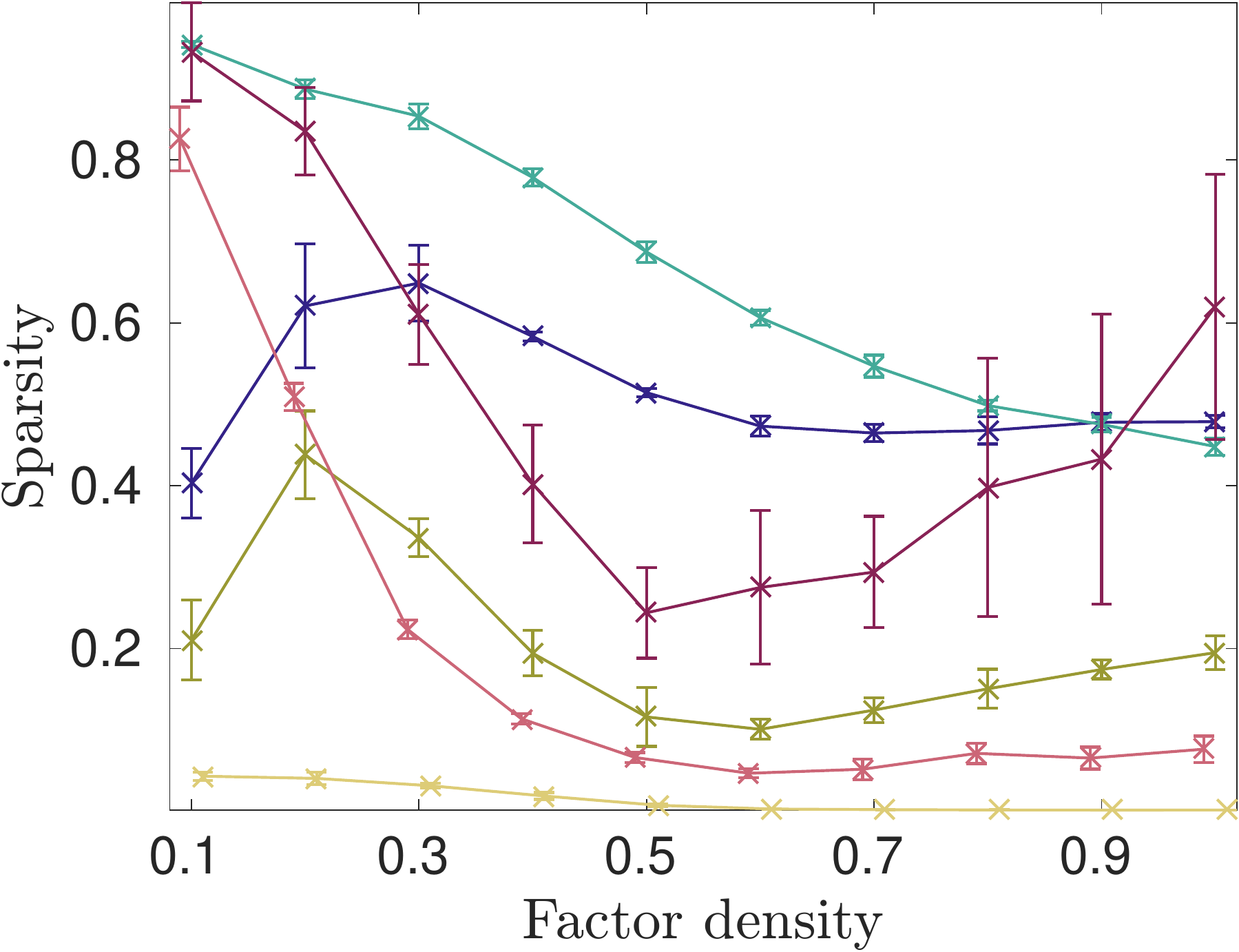}%
    \label{fig:sparsity:density}
  }
  \\
  \hspace{\subfigspace}
  \subfigure[Varying density with high Gaussian noise]{%
    \includegraphics[width=\subfigwidth]{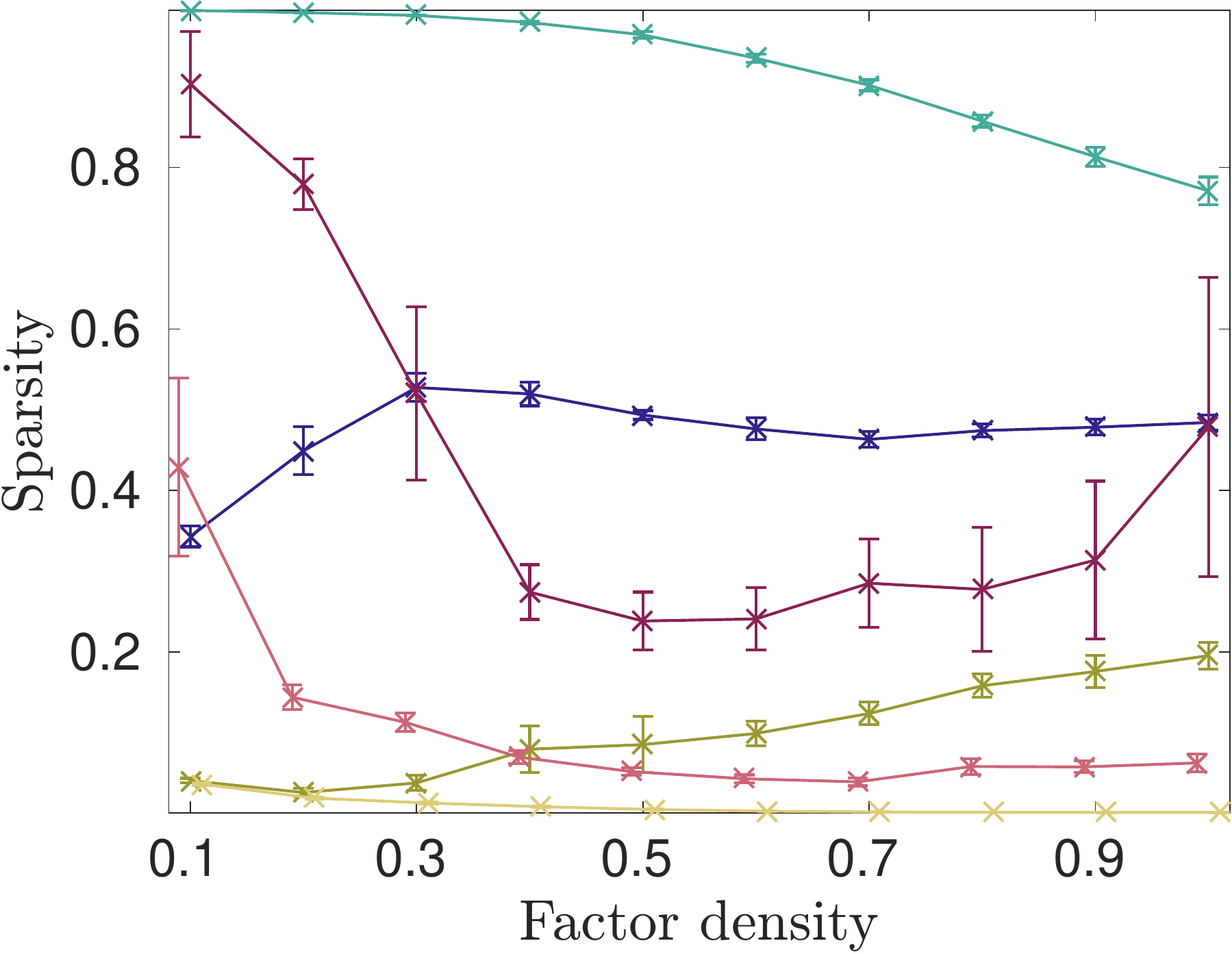}%
    \label{fig:sparsity:densityHN}%
  }
  \subfigure[Varying rank; 50\% density and low Gaussian noise]{%
    \includegraphics[width=\subfigwidth]{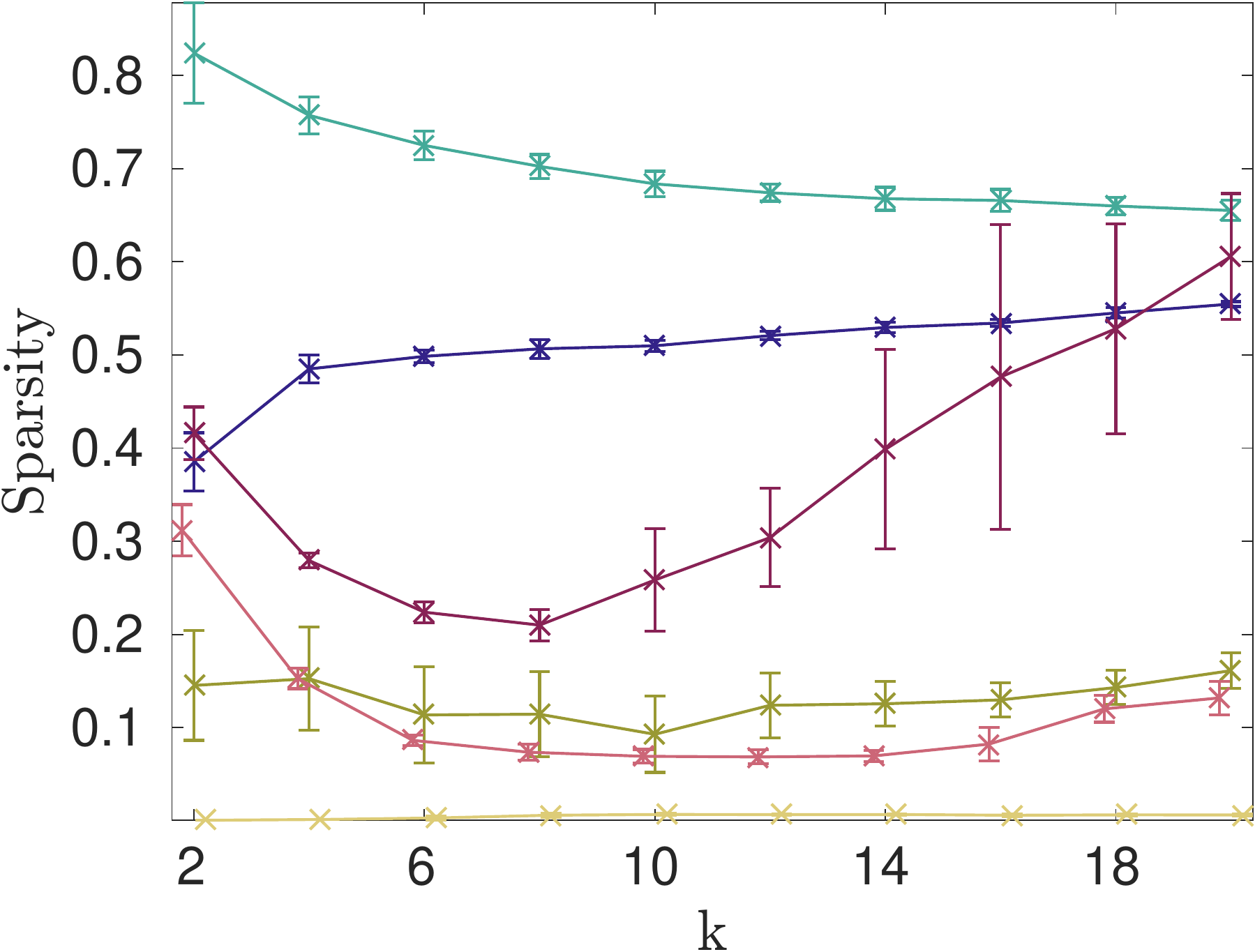}%
    \label{fig:sparsity:dim}%
  }
  \caption{\textbf{Sparsity (fraction of zeroes) of the factor matrices for synthetic data with Gaussian noise.} The markers are averages of 10 random matrices and the width of the error bars is twice the standard deviation.}
  \label{fig:synth:sparsity:gaussian}
\end{figure}

\begin{figure} [tp]  
  \centering
         \subfigure[Varying noise test.] {%
       \includegraphics[width=\subfigwidth]{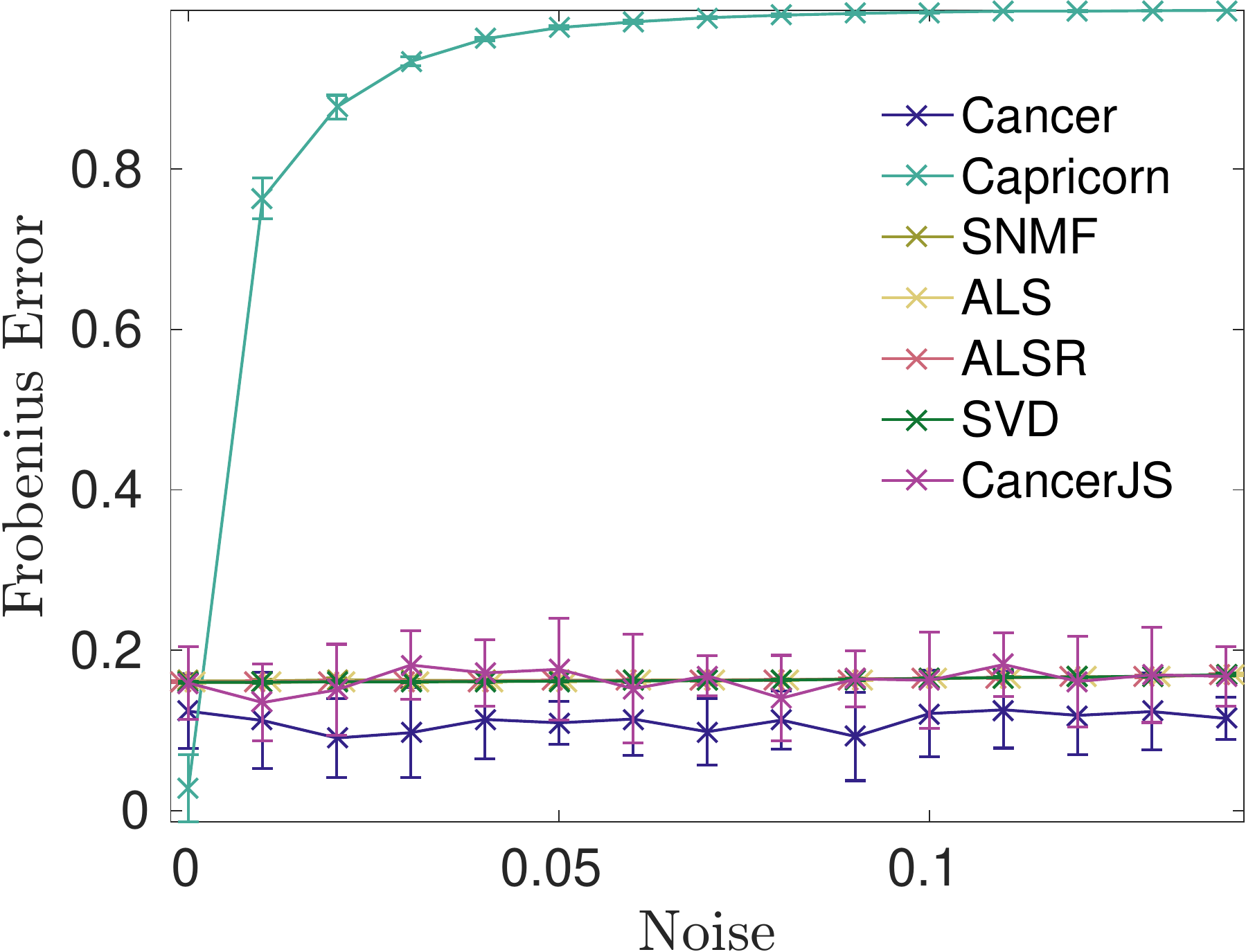}%
       \label{noise:js}%
       }
       \hspace{\subfigspace}
  \subfigure[Varying density test.] {%
       \includegraphics[width=\subfigwidth]{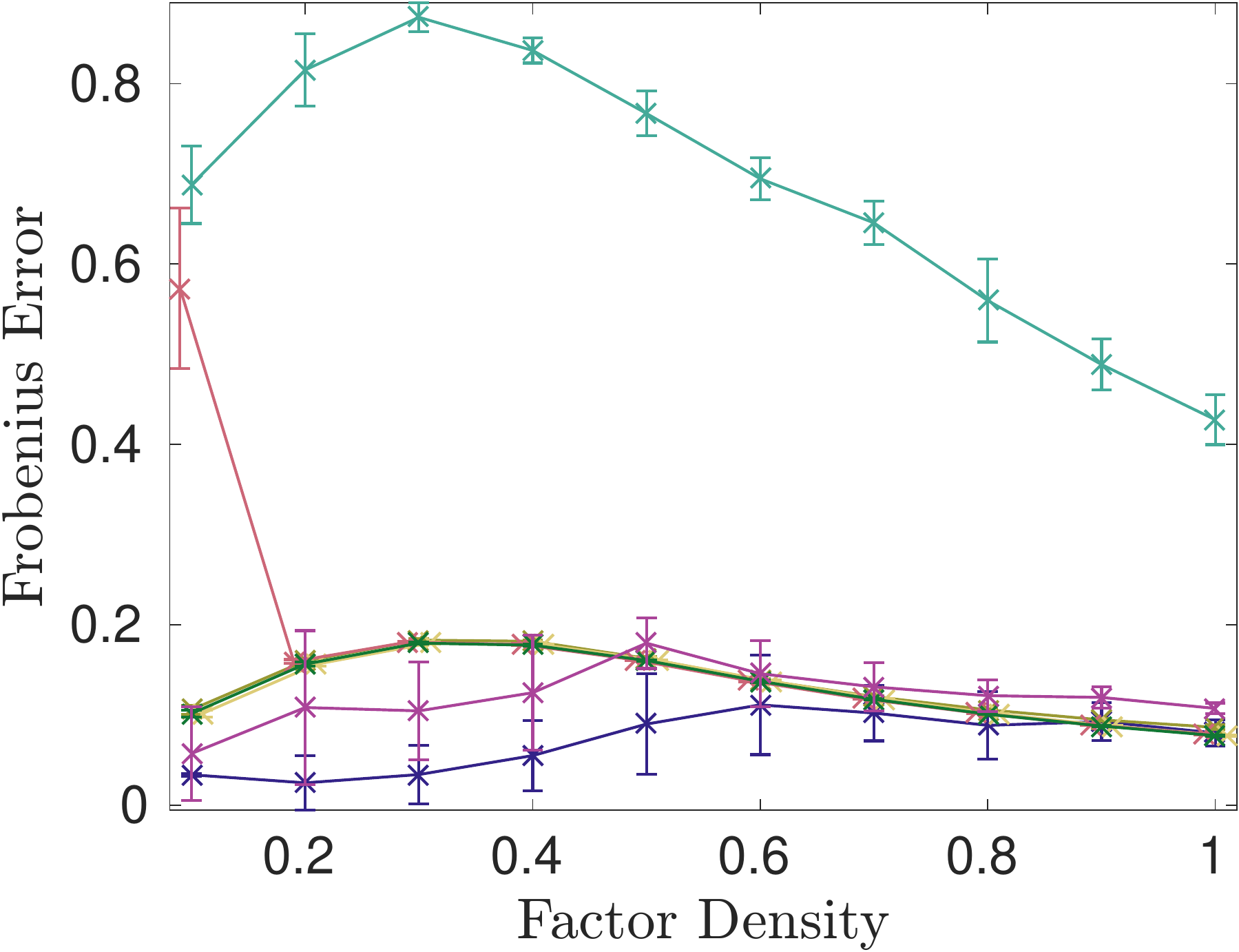}%
       \label{density:js}%
       }
       \hspace{\subfigspace}
           \subfigure[Varying rank test with 10\% noise and 30\% factor density.] {%
       \includegraphics[width=\subfigwidth]{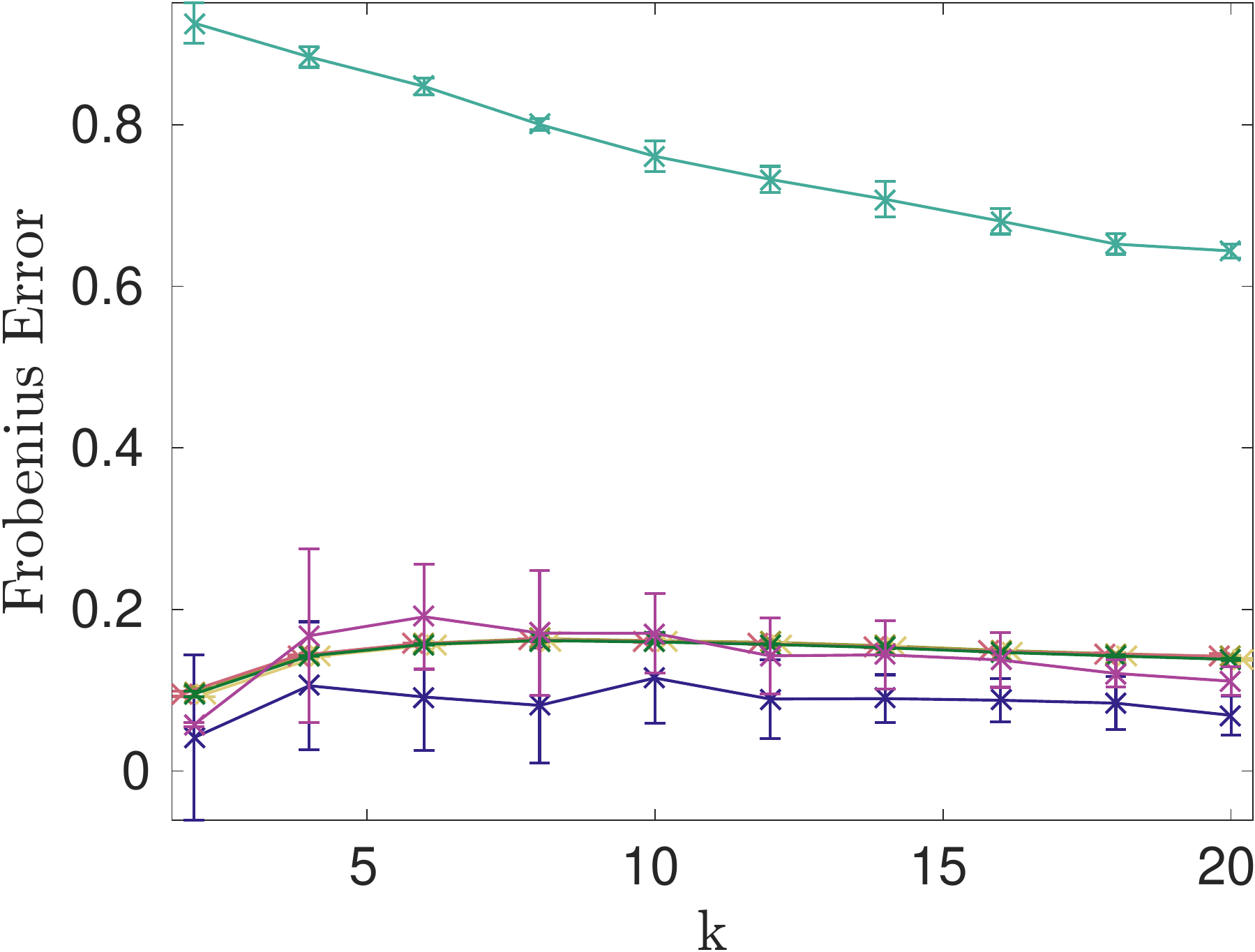}%
       \label{dim:js}%
       }
       \caption{\textbf{Comparison of \Cancer with Jensen--Shannon objective and other methods on synthetic data with Gaussian noise.} $x$-axis is the parameter varied and $y$-axis is the relative Frobenius error. All results are averages over 10 random matrices and the width of the error bars is twice the standard deviation. }
       \label{fig:synth:reconstruct:js}
     \end{figure}

\paragraph{Prediction.}
In this experiment we choose a random holdout set and remove it from the data (elements of this set are marked as missing values). We then try to learn the structure of the data from its remaining part using \Capricorn and \WNMF, and finally test how well they predict the values inside the holdout set. All input matrices are integer-valued and since the recovered data produced by the algorithms can be continuous-valued, we round it to the nearest integer. The quality of the prediction is measured as the fraction of correct values in the hold-out set, and the results are reported in Figure~\ref{fig:synth:predict}.  
\begin{figure} [tp]
  \centering
    \includegraphics[width=\subfigwidth]{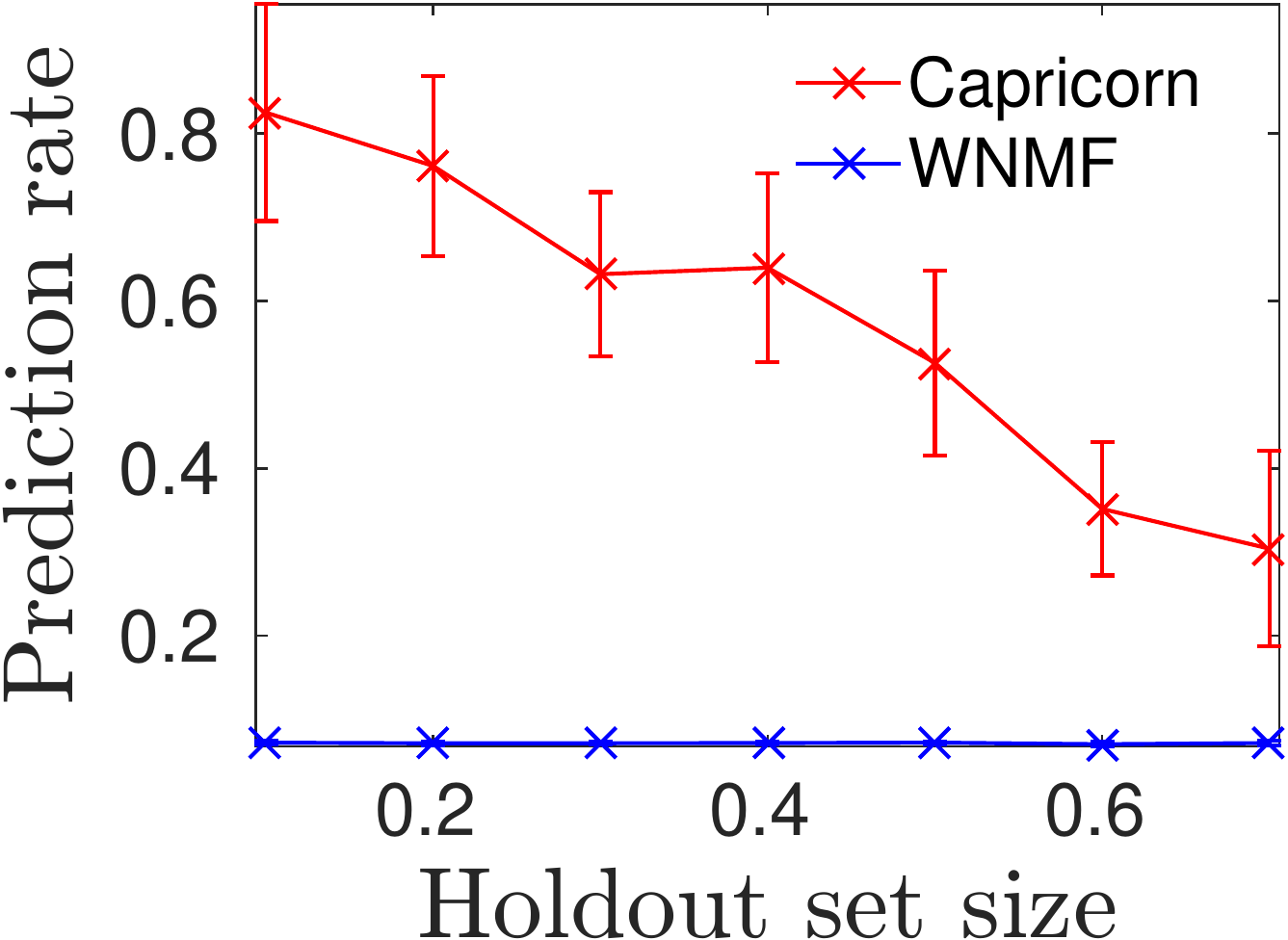}
      \caption{\textbf{Prediction rate on synthetic data}. $x$-axis represents the size of the holdout set and $y$-axis is the correct prediction rate (higher is better). All results are averages over 10 random matrices and the width of the error bars is twice the standard deviation.}\label{fig:synth:predict}
  \end{figure}
It is easy to see that as the fraction of held-out data increases, \Capricorn's results get worse, as expected, but it still is consistently better than \WNMF that does not seem to be able to recover any specific structure.

\paragraph{Discussion.}
The synthetic experiments confirm that both \Capricorn and \Cancer are able to recover matrices with max-times structure. The main practical difference between then is that \Capricorn is designed to handle the tropical (flipping) noise, while \Cancer is meant for the data that is perturbed with white (Gaussian) noise. While \Capricorn is clearly the best method when the data has only the flipping noise -- and is capable of tolerating very high noise levels -- its results deteriorate when we apply Gaussian noise. Hence, when the exact type of noise is not known a priori, it is advisable to try both methods. It is also important to note that \Cancer is actually a framework of algorithms as it can optimize various objective. In order to demonstrate that, we performed experiments with Jensen--Shannon divergence as objective and obtained results that are, while inferior to \Cancer that optimizes the Frobenius error, still slightly better than the rest of the algorithms. Overall we can conclude that \SVD and the NMF-based methods generally cannot recover the structure from subtropical data, that is,  we cannot use existing methods as a substitute to find the max-times structure neither for the reconstruction nor for the prediction tasks.

\subsection{Real-world experiments.}
\label{sec:real-world-exper}
The main purpose of the real-world experiments is to study to which extend \Capricorn and \Cancer can find max-times structure from various real-world data sets. Having established with the synthetic experiments that both algorithms are capable of finding the structure when it is present, here we look at what kind of results they obtain in the real-world data. 

It is probably unrealistic to expect real-world data sets to have ``pure'' max-times structure, as in the synthetic experiments. Rather, we expect \SVD to be the best method (in reconstruction error's sense), and our algorithms to obtain reconstruction error comparable to the NMF-based methods. We will also verify that the results from the real-world data sets are intuitive.

\subsubsection*{The datasets} 
\label{sec:real:data}
\BasLP represents a linear program.\!\footnote{Submitted to the matrix repository by Csaba Meszaros.} It is available from the University of Florida Sparse Matrix Collection\footnote{\url{http://www.cise.ufl.edu/research/sparse/matrices/}, accessed 18 July 2017} \citep{davis11university}. 

\Trec  is a  brute force disjoint product matrix in tree algebra on $n$ nodes.\!\footnote{Submitted by Nicolas Thiery.} It can be obtained from the same repository as \BasLP.

\Worldclim was obtained from the global climate data repository.\!\footnote{The raw data is available at \url{http://www.worldclim.org/}, accessed 18 July 2017.} It describes historical climate data across different geographical locations in Europe. Columns represent minimum, maximum, and average temperatures and precipitation, and rows are \by{50}{50} kilometer squares of land where measurements were made. We preprocessed every column of the data by first subtracting its mean,  dividing by the standard deviation, and then subtracting its minimum value, so that the smallest value becomes 0.

\NPAS is a nerdiness personality test that uses different attributes to determine the level of nerdiness of a person.\!\footnote{Tha dataset can be obtained on the online personality website \url{http://personality-testing.info/_rawdata/NPAS-data.zip}, accessed 18 July 2017.} It contains answers by 1418 respondents to a set of 36 questions that asked them to self-assess various statements about themselves on a scale of 1 to 7. We preprocessed \NPAS  analogously to \Worldclim.

\Eigenfaces is a subset of the Extended Yale Face collection of face images~\citep{georghiades2000few}. It consists of \by{32}{32} pixel images under different lighting conditions. We used a preprocessed data by Xiaofei He et al.\!\footnote{\url{http://www.cad.zju.edu.cn/home/dengcai/Data/FaceData.html}, accessed 18 July 2017} We selected a subset of pictures with lighting from the left and then preprocessed the input matrix by first subtracting from every column its smallest element and then dividing it by its standard deviation.

\News is a subset of the {20Newsgroups} dataset,\!\footnote{\url{http://qwone.com/~jason/20Newsgroups/}, accessed 18 July 2017} containing the usage of 800 words over 400 posts for 4 newsgroups.\!\footnote{The authors are  grateful to Ata Kab{\'a}n for pre-processing the data, see~\cite{miettinen09matrix}.} Before running the algorithms we represented the dataset as a TF-IDF matrix, and then scaled it by dividing each entry by the greatest entry in the matrix.

\HPI is a land registry house price index.\!\footnote{Available at \url{https://data.gov.uk/dataset/land-registry-house-price-index-background-tables/}, accessed 18 July 2017} Rows represent months, columns are locations, and entries are residential property price indices. We preprocessed the data by first dividing each column by its standard deviation and then subtracting its minimum, so that each column has minimum 0.

\Movielense is a collection of user ratings for a set of movies. The original dataset\footnote{Available at \url{http://grouplens.org/datasets/movielens/100k/}, accessed 18 July 2017} consists of 100000 ratings  from 1000 users on 1700 movies, with ratings ranging from 1 to 5. In order to be able to perform cross-validation on it, we had to preprocess \Movielense by removing users that rated fewer than 10 movies and movies that were rated less than 5 times. After that we were left with 943 users, 1349 movies and 99287 ratings. 

The basic properties of these data sets are listed in Table~\ref{tab:real:specs_all}. 

\setlength{\tabcolsep}{0.5em}
   \begin{table}[tb]
   \centering
   \caption{Real world datasets properties.}
   \label{tab:real:specs_all}
   \begin{tabular}{@{}lRRR@{}}
     \toprule
     Algorithm & $Rows$ & $Columns$ & $Density$ \\
     \midrule
     \BasLP  & 9825 & 5411  & 1.1\%     \\
     \Trec  & 2726 & 551  & 10.0\%     \\
     \Worldclim  & 2575 & 48  & 99.9\%     \\
     \NPAS         & 1418 & 36  & 99.6\%     \\
     \Eigenfaces & 1024 & 222 & 97.0\%     \\
     \News       & 400  & 800 & 3.5\%      \\
     \HPI            & 253  & 177 & 99.5\%     \\
     \Movielense  & 943 & 1349  & 7.8\%     \\
     \bottomrule
   \end{tabular}
 \end{table}

 \subsubsection*{Quantitative results: reconstruction error, sparsity, and convergence}
 \label{sec:real:quantitative}
 The following experiments are meant to test \Cancer and \Capricorn, and how they compare versus other methods, such as \SVD and NMF. Table~\ref{tab:real:world:error} provides the relative Frobenius reconstruction errors for various real-world data sets. We omitted \ALSRfive from these experiments due to its bad performance with the synthetic data. \SVD is, as expected, consistently the best method. Somewhat surprisingly, Hoyer's \SNMF is usually the second-best method, even though it did not show any advantage over other methods in the synthetic experiments. \Cancer is usually the third-best method (with the exception of \News and \NPAS), and often very close to \SNMF in reconstruction error. Overall, it seems \Cancer is capable of finding max-times structure that is comparable to what NMF-based methods provide. Consequently, we can study the max-times structure found by \Cancer, knowing that it is (relatively) accurate. On the other hand \Capricorn has a high reconstruction error. The discrepancy between \Cancer's and \Capricorn's results indicates that the datasets used cannot be represented using ``pure'' subtropical structure. Rather they are either a mix of NMF and subtropical patterns or have relatively high levels of continuous noise.

 \begin{table}[tb] 
   \centering
   \caption{Reconstruction error for various real-world datasets.}
\label{tab:real:world:error}
   \begin{tabular}{@{}lRRRRR@{}}
     \toprule
                      & \text{\Worldclim} & \text{\NPAS} & \text{\Eigenfaces} & \text{\News} & \text{\HPI}  \\
     $k=$         & 10      & 10      & 40      & 20      & 15 \\
     \midrule
     \Cancer     & 0.071 & 0.240 & 0.204 & 0.556 & 0.027     \\
     \Capricorn  & 0.392 & 0.395 & 0.972 & 0.987 & 0.217     \\
     \SNMF       & 0.046 & 0.225 & 0.178 & 0.546 & 0.023     \\
     \ALS        & 0.087 & 0.227 & 0.313 & 0.538 & 0.074     \\
     \ALSR       & 0.122 & 0.226 & 0.294 & 1.000 & 0.045     \\
     \SVD        & 0.025 & 0.209 & 0.140     & 0.533 & 0.015     \\
     \bottomrule
   \end{tabular}
 \end{table}

The sparsity of the factors for real-world data is presented in Table~\ref{tab:real:sparsity_all}, except for \SVD. Here, \Cancer often returns the second-sparsest factors (being second only to \Capricorn), but with \News and \HPI, \ALSR obtains sparser decompositions. 

  \begin{table}[tb]
   \centering
   \caption{Factor sparsity for various real-world datasets.}
   \label{tab:real:sparsity_all}
   \begin{tabular}{@{}lRRRRR@{}}
     \toprule
                      & \text{\Worldclim} & \text{\NPAS} & \text{\Eigenfaces} & \text{\News} & \text{\HPI}  \\
     $k=$         & 10      & 10      & 40      & 20      & 15 \\
     \midrule
     \Cancer     & 0.645 & 0.528 & 0.571 & 0.812 & 0.422     \\
     \Capricorn  & 0.795 & 0.733 & 0.949 & 0.991 & 0.685     \\
     \SNMF       & 0.383 & 0.330 & 0.403 & 0.499 & 0.226     \\
     \ALS        & 0.226 & 0.120 & 0.434 & 0.513 & 0.331     \\
     \ALSR       & 0.275 & 0.117 & 0.480 & 1.000 & 0.729     \\
     \bottomrule
   \end{tabular}
 \end{table}

We also studied the convergence behavior of \Cancer using some of the real-world data sets. The results can be seen in Figure~\ref{fig:convergence}, where we plot the relative error with respect to the iterations over the main for-loop in \Cancer. As we can see, in both cases \Cancer has obtained a good reconstruction error already after few full cycles, with the remaining runs only providing minor improvements. We can deduce that \Cancer reaches quickly an acceptable solution.

 \begin{figure*}[tp]
  \centering  
 \subfigure[\NPAS]{
    \includegraphics[width=\subfigwidth]{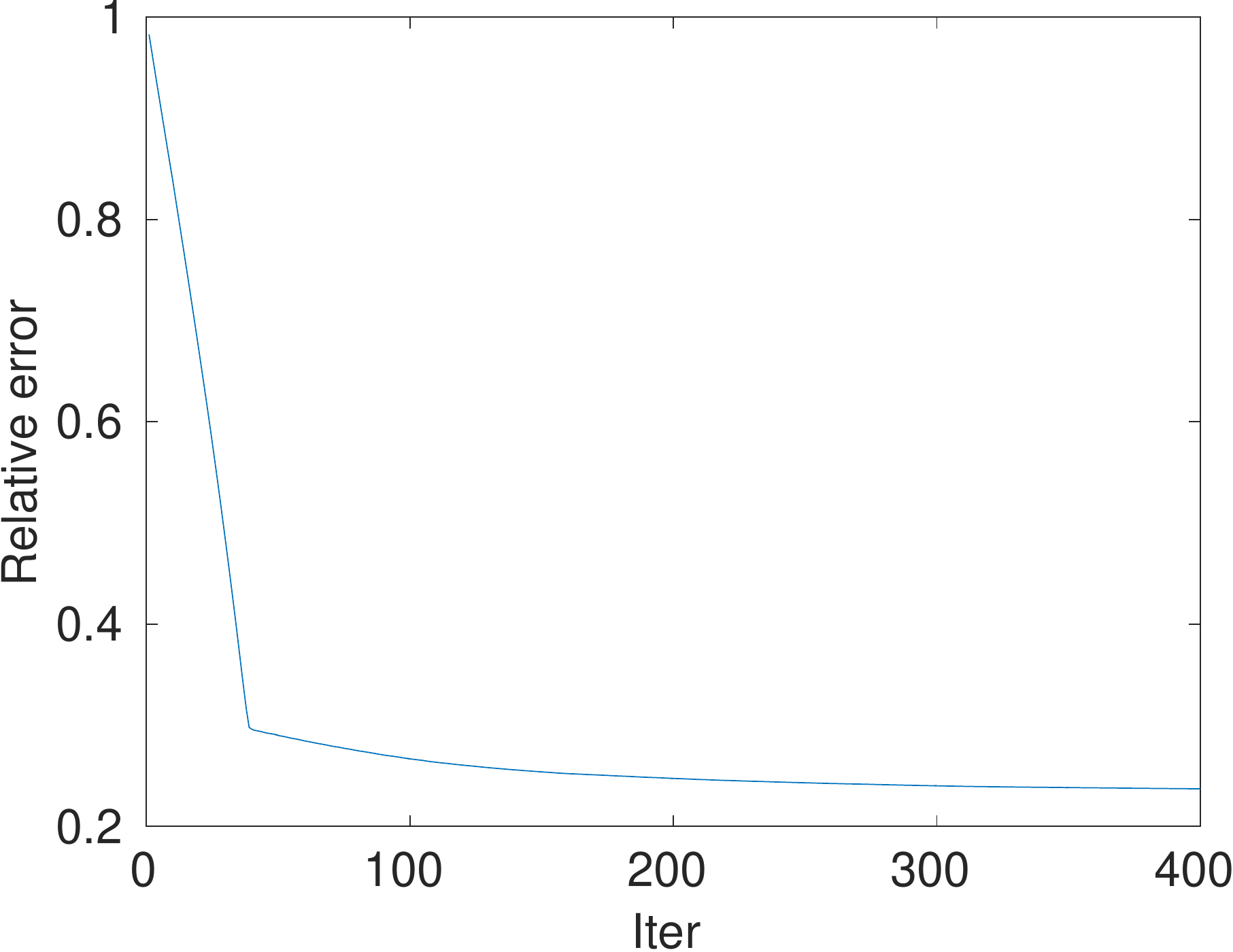}
    \label{noise}
  }  
  \hspace{\subfigspace}
   \subfigure[\HPI]{
     \includegraphics[width=\subfigwidth]{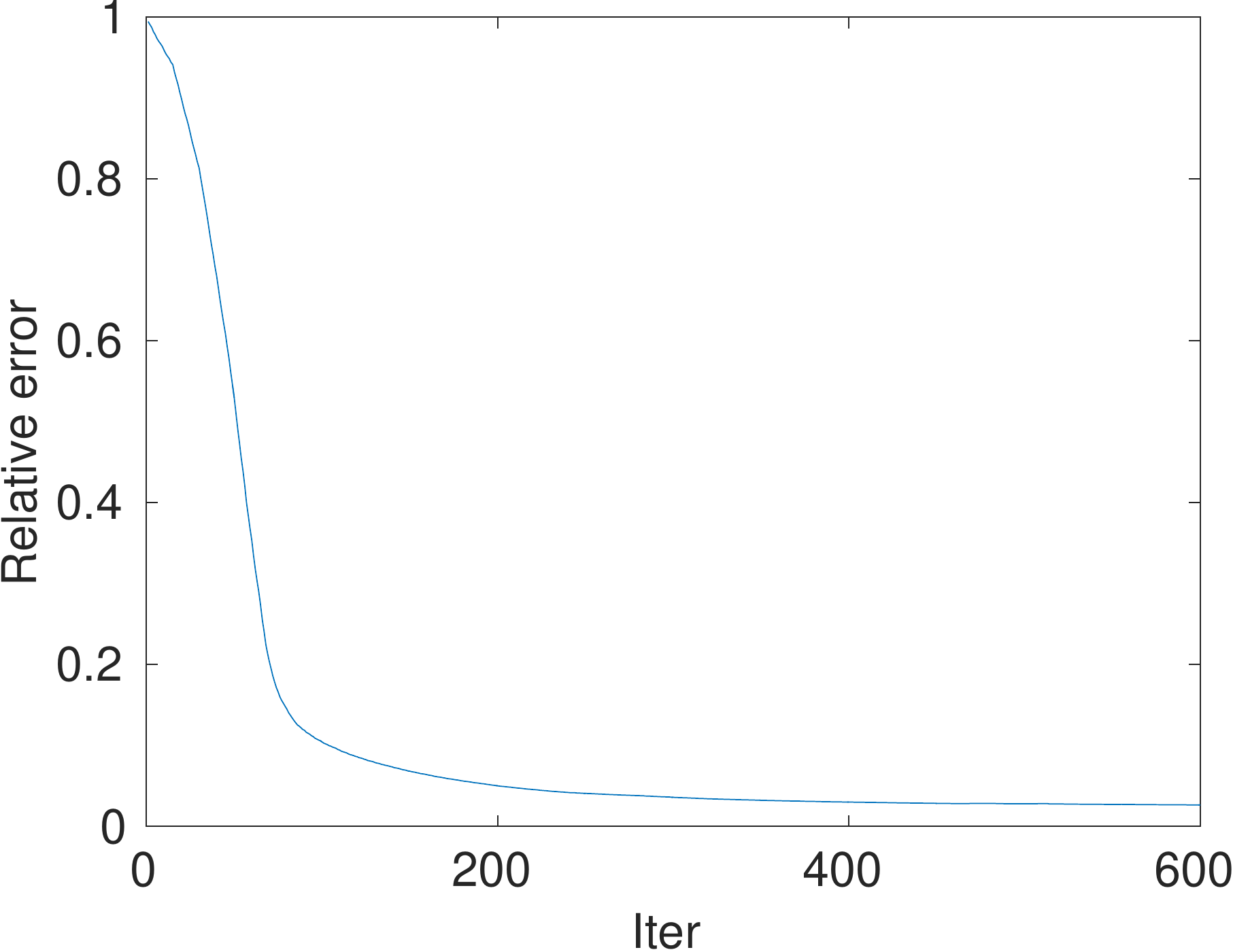}
    \label{density}
  }  
  \caption{Convergence rate of \Cancer for two real-world datasets. Each iteration is a single run of \UpdateBlock, that is if a factorization has rank $k$, then one full cycle would correspond to $k$ iterations.}
  \label{fig:convergence}
\end{figure*}



\subsubsection*{Prediction}
\label{sec:real:prediction}

Here we investigate how well both \Capricorn and \Cancer can predict missing values in the data.

In order to test \Capricorn, we ran missing value prediction tests on \BasLP and \Trec datasets, and compare it against \NMF, \WNMF, and \SVD. The setup is as follows. A random holdout set is chosen that comprises 10\% of the nonzero elements and then removed from the data. Since the input matrices are integer valued, we round the output of the algorithms to the nearest integer and report the fraction of correctly predicted values. There are two versions of this experiment -- one where all elements in the data are taken into account and one where zero entries are ignored, that is, they do not contribute to the error. The motivation for this test is that \Capricorn always aims to extract subtropical patterns, sometimes even at the expense of covering zeros with nonzero values. We therefore want to see how well it performs when only the ``significant'' part of the data is counted. It is worth noting though that while \Capricorn and \WNMF have an option to ignore certain entries in an input matrix, \NMF does not. Hence the \NMF algorithm is at a disadvantage here, though we still show its result for completeness. The results for both prediction experiments where zeros ``count'' and ``don't count'' are shown in Table~\ref{tab:real:accuracy_all}, left and right, respectively. 
In both cases \WNMF is the best method, whereas \Capricorn is normally the second-best. As expected, \Capricorn's results improve greatly when zero elements are ignored.

\begin{table}[tb]
  \centering
  \caption{Prediction accuracy on \BasLP and \Trec datasets. Left: accuracy is computed over all entries. Right: accuracy is computed over the non-zero entries.}\label{tab:real:accuracy_all}
  \begin{tabular}{@{}lRR@{}}
    \toprule
    Algorithm & $Bas1LP$ & \text{Trec12} \\
    \midrule
    \Capricorn &   74.0        &   19.8        \\
    \NMF &     23.4      &       18.3          \\
    \WNMF &   85.2       &      39.9          \\
    \SVD &   28.2        &        20.5         \\
    \bottomrule
  \end{tabular}
  \hspace*{3em}
  \begin{tabular}{@{}lRR@{}}
    \toprule
    Algorithm & $Bas1LP$ & $Trec12$ \\
    \midrule
    \Capricorn &   85.2        &   39.3        \\
    \NMF &     29.1      &       19.6          \\
    \WNMF &   93.1        &      49.8          \\
    \SVD &   29.1        &        22.5         \\
    \bottomrule
  \end{tabular}  
\end{table}


Next we conduct prediction experiments with \Cancer. We tested it on the \Movielense dataset and compared against \WNMF. The choice of \WNMF is motivated by its ability to ignore elements in the input data and its generally good performance on the previous tests. To get a more complete view on how good the predictions are, we report various measures of quality: Frobenius error, root mean square error (RMSE), reciprocal rank, Spearman's $\rho$, mean absolute error (MAE), Jensen--Shannon divergence (JS), optimistic reciprocal rank, and Kendall's $\tau$. The tests can be divided into two categories. The first one, which comprises Frobenius error, root mean square error, mean absolute error, and Jensen--Shannon divergence, aims to quantify the distance between the original data and the reconstructed matrix. The second group of tests finds the correlation between rankings of movies for each user. It includes Spearman's $\rho$, Kendall's $\tau$, reciprocal rank, and optimistic reciprocal rank. All these measures are well known, with perhaps only the reciprocal rank requiring some explanation. Let us first denote by $U$ the set of all users. In the following, for each user $u\in U$ we only consider the set of movies $M(u)$ that this user has rated that belong to the holdout set. The ratings by user $u$ induce a natural ranking on $M(u)$. On the other hand both \Cancer and \WNMF produce approximations $r'(u, m)$ to the true ratings $r(u, m)$, which also induce a corresponding ranking of the movies. The reciprocal rank is a convenient way of comparing the rankings obtained by the algorithms to the original one. For any user $u \in U$, denote by $H(u)$  a set of movies that this user ranked the highest (that is $H(u) = \lbrace m\in M(u) \, \vert \,  r(u, m) = \max_{m'\in M(u)} r(u, m') \rbrace$). The reciprocal rank for user $u$ is now defined as

\begin{equation} \label{recip:rank}
  RR(u) = \frac{1}{\min\limits_{m\in H} R(u, m)}\;,
  \end{equation}
where $R(u, m)$ is the rank of the movie $m$ within $M(u)$ according to the rating approximations given by the algorithm in question. Now the mean reciprocal rank is defined as the average of the reciprocal ranks for each individual user $MRR = \frac{1}{\abs{U}} \sum_{u\in U} RR(u)$. When computing the ranks $R(u, m)$, all tied elements receive the same rank, which is computed by averaging. That means that if, say, movies $m_1$ and $m_2$ have tied ranks of 2 and 3, then they both receive the rank of 2.5. An alternative way is to always assign the smallest possible rank. In the above example both $m_1$ and $m_2$ will receive rank 2. When ranks $R(u, m)$ are computed like this, the equation \eqref{recip:rank} defines the optimistic reciprocal rank. 

We perform standard cross-validation tests where a random selection of elements is chosen as a holdout set and removed from the data. The data has 943 users, each having rated from 19 to 648 movies. A holdout set is chosen by sampling uniformly at random 5 ratings from each user. We run the algorithms, while treating the elements from the holdout set as missing values, and then compare the reconstructed matrices to the original data. This procedure is repeated 10 times.

For each test, Table~\ref{tab:movielens} shows the mean and the standard deviation of the results of each algorithm. In addition we report the $p$-value based on the Wilcoxon signed-rank test. It shows if an advantage of one method over the other is statistically significant. We say that a method $A$ is significantly better than method $B$ if the $p$-value is $<0.05$. \Cancer is significantly better for the Frobenius error, root mean square error, mean absolute error, and Jensen--Shannon divergence. For the remaining tests the results are less clear, with \Cancer winning on both version of the reciprocal rank, and \WNMF being better on Spearman's $\rho$ and Kendall's $\tau$ tests. None of these results are statistically significant as the $p$-values are quite high. In summary, our experiments show that \Cancer is significantly better in tests that measure the direct distance between the original and the reconstructed matrices, whereas for the ranking experiments it is difficult to give any of the algorithms an edge.

\begin{table}
  \centering
  \caption{\label{tab:movielens}Comparison between the predictive power of \Cancer and \WNMF on the \Movielense data. The arrow after the value indicates whether higher or lower values are preferable. The $p$-values are computed using the Wilcoxon signed-rank test.} 
  \begin{tabular}{@{}lRRRR@{}}
    \toprule
    & \multicolumn{2}{c}{Frobenius} & \multicolumn{2}{c}{RMSE} \\ 
    \cmidrule{2-3} \cmidrule{4-5}
    & \text{value} (\downarrow) & \text{$p$-value} & \text{value} (\downarrow) & \text{$p$-value}\\
    \midrule
    \Cancer & \mathbf{0.2851}\pm0.003 & \multirow{2}{*}{$<0.0001$} & \mathbf{1.0724}\pm 0.013 & \multirow{2}{*}{$<0.0001$}\\
    \WNMF & 0.2969\pm0.003 & & 1.1169\pm 0.011 & \\
    \midrule
    & \multicolumn{2}{c}{Recip. rank} & \multicolumn{2}{c}{Spearman's $\rho$} \\
    \cmidrule{2-3} \cmidrule{4-5}
    & \text{value} (\uparrow) & \text{$p$-value} & \text{value} (\uparrow) & \text{$p$-value} \\
    \midrule
        \Cancer & \mathbf{0.7472}\pm 0.011 & \multirow{2}{*}{$0.0994$} & 0.3097\pm 0.016 & \multirow{2}{*}{$0.2124$}\\
        \WNMF & 0.7423\pm 0.009 & & \mathbf{0.3133}\pm 0.015 & \\
            \midrule
    & \multicolumn{2}{c}{MAE} & \multicolumn{2}{c}{JS} \\
    \cmidrule{2-3} \cmidrule{4-5}
    & \text{value} (\downarrow) & \text{$p$-value} & \text{value} (\downarrow) & \text{$p$-value} \\
    \midrule
        \Cancer & \mathbf{0.8158}\pm 0.008 & \multirow{2}{*}{$<0.0001$} & \mathbf{0.0198}\pm 0.001 & \multirow{2}{*}{$<0.0001$}\\
        \WNMF & 0.8503\pm 0.007 & & 0.0206\pm 0.000 & \\
                    \midrule
    & \multicolumn{2}{c}{Recip. rank opt.} & \multicolumn{2}{c}{Kendall's $\tau$} \\
    \cmidrule{2-3} \cmidrule{4-5}
    & \text{value} (\uparrow) & \text{$p$-value} & \text{value} (\uparrow) & \text{$p$-value} \\
    \midrule
        \Cancer & \mathbf{0.7472}\pm 0.011 & \multirow{2}{*}{$0.0994$} & 0.2685\pm 0.014 & \multirow{2}{*}{$0.2204$}\\
    \WNMF & 0.7423\pm 0.0093 & & \mathbf{0.2712}\pm 0.013 & \\
    \bottomrule
   \end{tabular}
\end{table}


\subsubsection*{Interpretability of the results}
\label{sec:real:interpretability}
The crux of using max-times factorizations instead of standard (nonnegative) ones is that the factors (are supposed to) exhibit the ``winner-takes-it-all'' structure instead of the ``parts-of-whole'' structure. To demonstrate this, we plotted the left factor matrices for the \Eigenfaces data for \Cancer and \ALS in Figure~\ref{fig:faces}. At first, it might look like \ALS provides more interpretable results, as most factors are easily identifiable as faces. This, however, is not very interesting result: we already knew that the data has faces, and many factors in the \ALS's result are simply some kind of `prototypical' faces. The results of \Cancer are harder to identify on the first sight. Upon closer inspection, though, one can see that they identify areas that are lighter in the different images, that is, have higher grayscale values. These factors tell us the variances in the lightning in the different photos, and can reveal information we did not know a priori. Further, as seen in Table~\ref{tab:real:accuracy_all}, \Cancer obtains better reconstruction error than \ALS with this data, confirming that these factors are indeed useful to recreate the data. 

\begin{figure}
  \centering
  \subfigure[\Cancer]{%
    \includegraphics[width=0.7\textwidth]{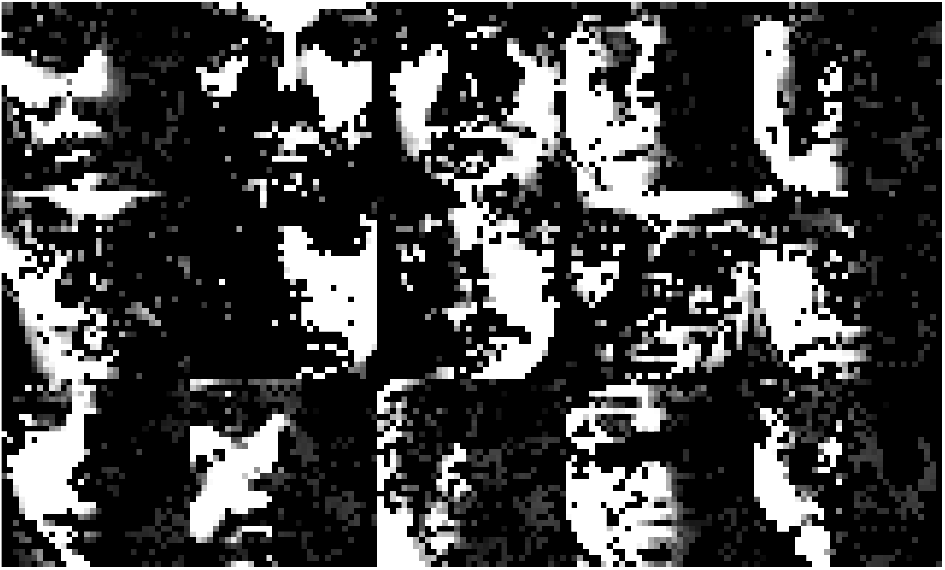}%
    \label{fig:faces:cancer}
  }
  \\
  \subfigure[\ALS]{%
    \includegraphics[width=0.7\textwidth]{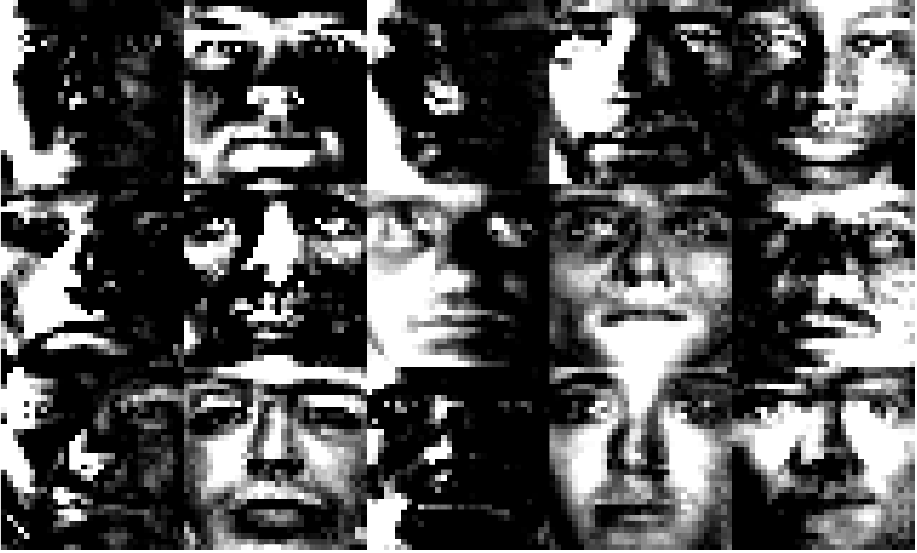}
    \label{fig:faces:als}
  }
  \caption{\Cancer finds the dominant patterns from the \Eigenfaces data. Pictured are the left factor matrices for the \Eigenfaces data.}
  \label{fig:faces}
\end{figure}

In Figure~\ref{fig:wc}, we show some factors from \Cancer when applied to the \Worldclim data. These factors clearly identify different bioclimatic areas from Europe: In Figure~\ref{fig:wc:1} we can identify the mountainous areas in Europe, including the Alps, the Pyrenees, the Scandes, and Scottish Highlands. In Figure~\ref{fig:wc:2} we can identify the mediterranean coastal regions, while in Figure~\ref{fig:wc:3} we see the temperate climate zone in blue, with the green color extending to the boreal zone. In all pictures, red corresponds to (near) zero values. As we can see, \Cancer identifies these areas crisply, making it easy for the analyst to know which areas to look at.

\newlength{\oldsubfiglabelskip}
\setlength{\oldsubfiglabelskip}{\subfiglabelskip}
\subfiglabelskip=0pt
\begin{figure*}[tp]
  \centering
  \subfigure[]{%
    \includegraphics[width=\subfigwidth]{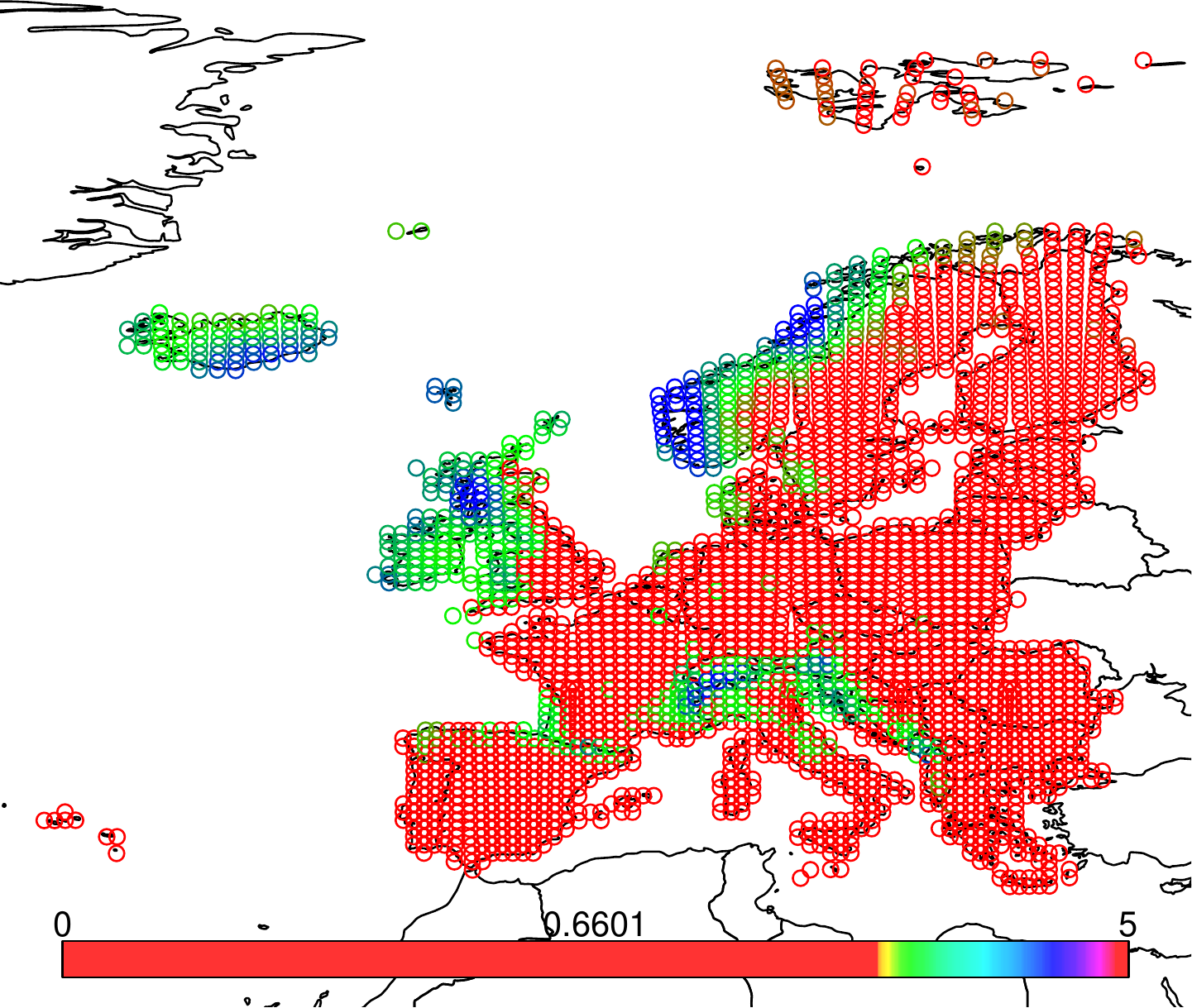}%
    \label{fig:wc:1}%
  }
  \hspace{\subfigspace}
  \subfigure[]{%
    \includegraphics[width=\subfigwidth]{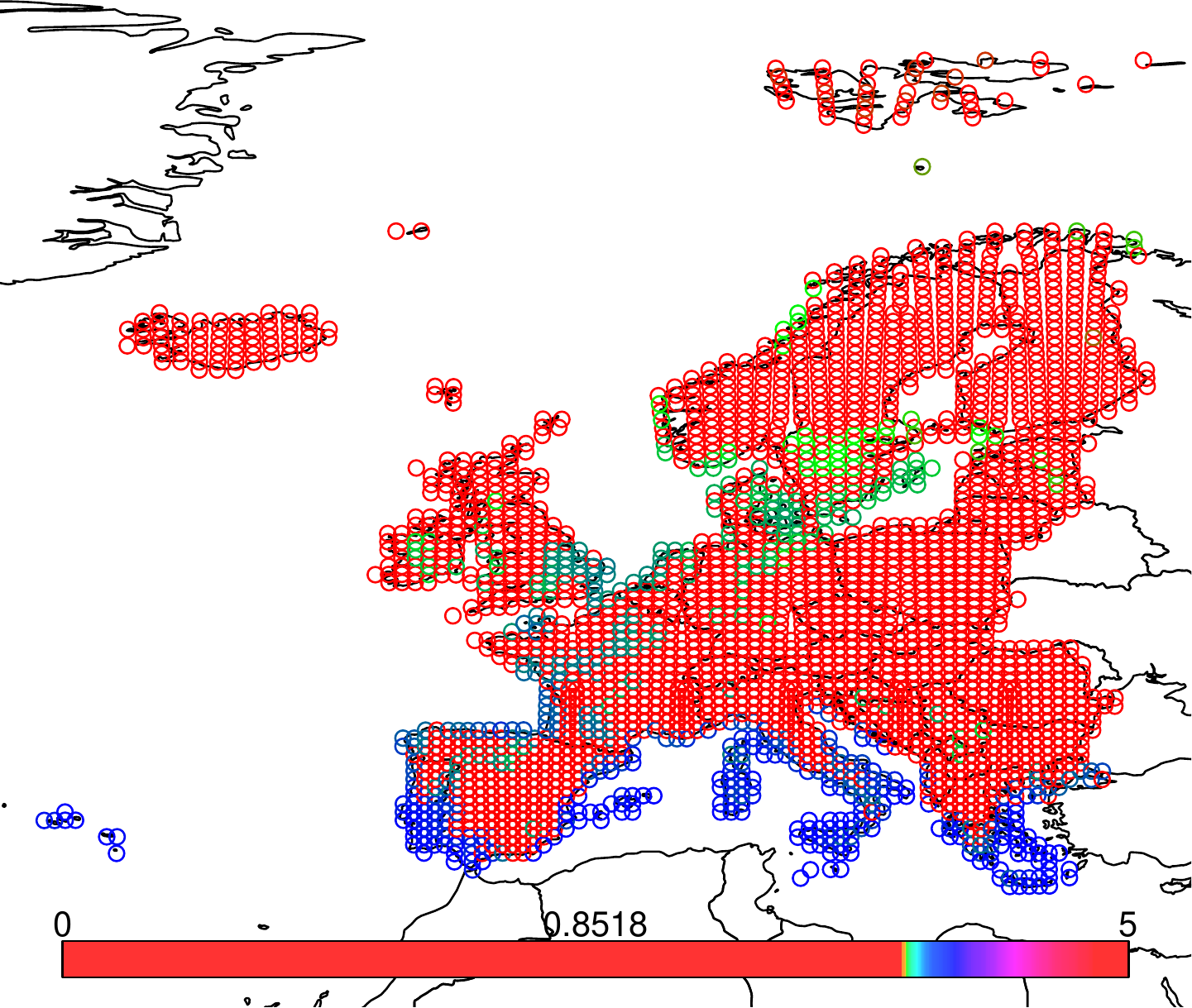}%
    \label{fig:wc:2}%
  }
  \hspace{\subfigspace}
  \subfigure[]{%
    \includegraphics[width=\subfigwidth]{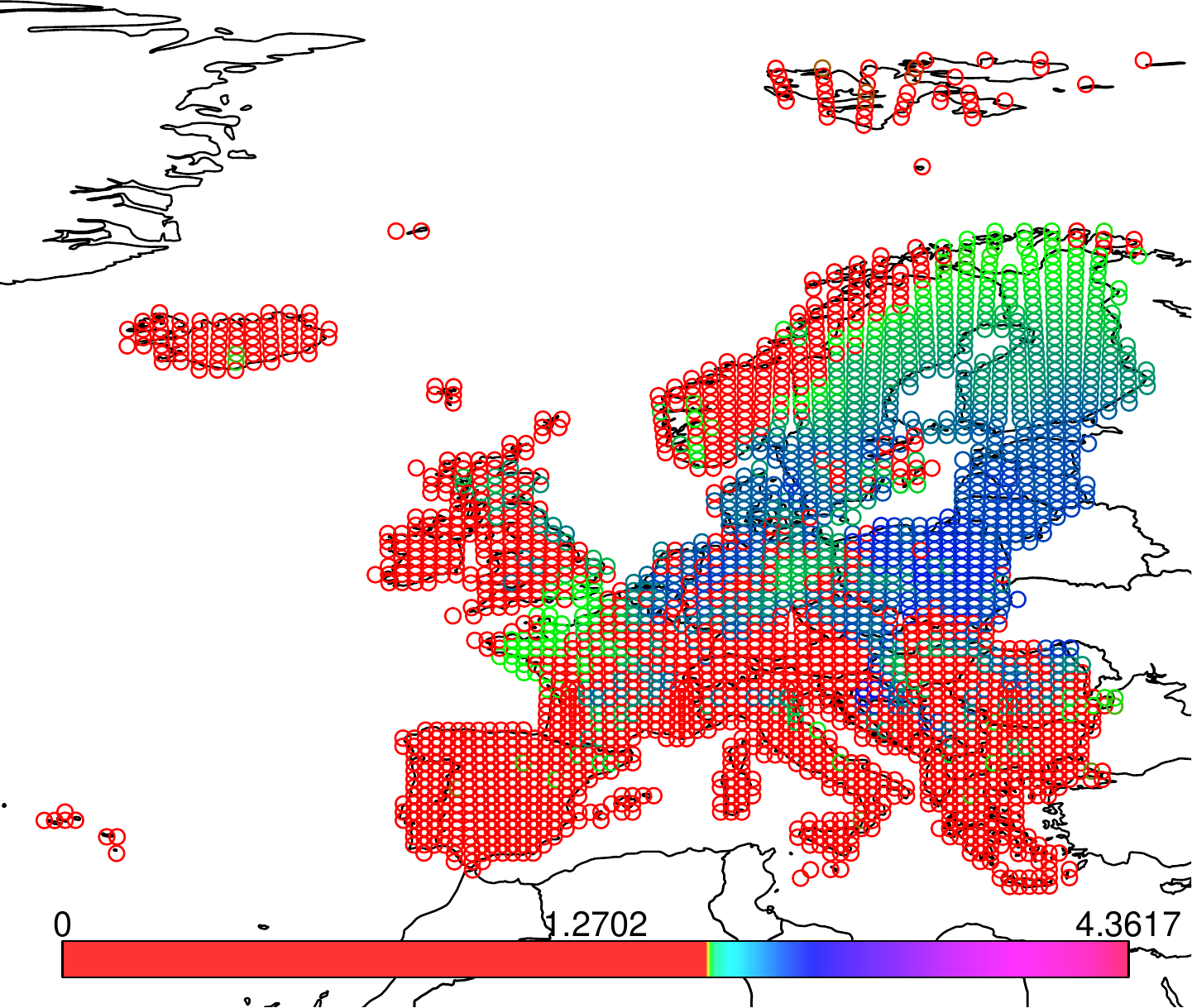}%
    \label{fig:wc:3}
  }
  \caption{\Cancer can find interpretable factors from the \Worldclim data. Shown are the values for three columns in the left-hand factor matrix $\mB$  on a map. Red is zero.}
  \label{fig:wc}
\end{figure*}
\subfiglabelskip=\oldsubfiglabelskip

In order to interpret \NPAS we first observe that each column represents a single personality attribute. Denote by $\matr{A}$ the obtained approximation of the original matrix. For each rank-1 factor $\matr{X}$ and each column $\matr{A}_i$ we define the score $\sigma(i)$ as the number of elements in $\matr{A}_i$ that are determined by $\matr{X}$. By sorting attributes in descending order of $\sigma(i)$ we obtain relative rankings of the attributes for a given factor. The results are shown in Table~\ref{tab:real:npas_interp}. The first factor clearly shows introverted tendencies, while the second one can be summarized as having interests in fiction and games.

\setlength{\tabcolsep}{0.5em}
   \begin{table}[tb]
   \centering
   \caption{Top three attributes for the first two factors of \NPAS.}
   \label{tab:real:npas_interp}
   \begin{tabular}{ll}
     \toprule
Factor 1 & Factor 2 \\
     \midrule
 I am more comfortable with my hobbies & I have played a lot of video games \\ \hspace{1cm} than I am with other people &           \hspace{1cm} \\
 I gravitate towards introspection   & I collect books      \\
I sometimes prefer fictional people to real ones  & I care about super heroes      \\
     \bottomrule
   \end{tabular}
 \end{table}


\section{Related Work}
\label{sec:related-work}
Here we present earlier research that is related to the subtropical matrix factorization. We start by discussing classic methods, such as SVD and NMF, that have long been used for various data analysis tasks, and then continue with approaches that use idempotent structures. Since the tropical algebra is very closely related to the subtropical algebra, and since there has been a lot of research on it, we dedicate the last subsection to discuss it in more detail. 
\subsection{Matrix factorization in data analysis.}
Matrix factorization methods play a crucial role in data analysis as they help to find low-dimensional representations of the data and uncover the underlying latent structure. A classic example of a real-valued matrix factorization is the singular value decomposition (SVD)  \citep[see e.g.]{golub}, which is very well known and finds extensive applications in various disciplines, such as for example signal processing and natural language processing. The SVD of a real $n$-by-$m$ matrix $\mA$ is a factorization of the form  $\mA = \mU \matr{\Sigma} \mV^T,$ where $\mU\in \mathbb{R}^{n \times n}$ and $\mV \in \mathbb{R}^{m \times m}$ are orthogonal matrices, and $\matr{\Sigma} \in \mathbb{R}^{n \times m}$ is a rectangular diagonal matrix with nonnegative entries. An important property of SVD is that it provides the best low-rank approximation of a given matrix with respect to the Frobenius norm \citep{golub}, giving rise to the so called truncated SVD. This property is frequently used to separate important parts of data from the noise. For example, it was used by \citet{jha2011denoising} to remove the noise from sensor data in electronic nose systems. Another prominent usage of the truncated SVD is in dimensionality reduction \citep[see for example][]{sarwar2000application, deerwester1990indexing}.

Despite SVD being so ubiquitous, there are some restrictions to its usage in data mining due to possible presence of negative elements in the factors. In many applications negative values are hard to interpret, and thus other methods have to be used. Nonnegative matrix factorization (NMF) is a way to tackle this problem. For a given nonnegative real matrix $\mA$, the NMF problem is to find a decomposition of $\mA$ into two matrices $\mA \approx \mB\mC$ such that $\mB$ and $\mC$ are also nonnegative. Its applications are extensive and include text mining \citep{pauca2004text}, document clustering \citep{xu2003document}, pattern discovery \citep{brunet2004metagenes}, and many other. This area drew considerable attention after a publication by \citet{lee_seung}, where they provided an efficient algorithm for solving the NMF problem. It is worth mentioning that even though the paper by \citeauthor{lee_seung} is perhaps the most famous in NMF literature, it was not the first one to consider this problem. Earlier works include \citet{paatero1994positive} \citep[see also][]{paatero1997least}, \citet{paatero1999multilinear}, and \citet{cohen1993nonnegative}. \citet{berry2007algorithms} provide an overview of NMF algorithms and their applications. There exist various flavours of NMF that impose different constraints on the factors; for example \citet{hoyer04non-negative} used sparsity constraints. Though both NMF and SVD perform approximations of a fixed rank, there are also other ways to enforce compact representation of data. For example, in maximum-margin matrix factorization constraints are imposed on the norms of factors.
This approach was exploited by \citet{srebro2004maximum}, who showed it to be a good method for predicting unobserved values in a matrix. The authors also indicate that posing constraints on the factor norms, rather than on the rank, yields a convex optimization problem, which is easier to solve.
\subsection{Idempotent semirings.}
The concept of the subtropical algebra is relatively new, and as far as we know, its applications in data mining are not yet well studied. Indeed, its only usage for data analysis that we are aware of was by \citet{weston13nonlinear}, where it was used as a part of a model for collaborative filtering. The authors modeled users as a set of vectors, where each vector represents a single aspect about the user (e.g. a particular area of interest). The ratings are then reconstructed by selecting the highest scoring prediction using the $\max$ operator. Since their model uses $\max$ as well as the standard plus operation, it stands on the border between the standard and the subtropical worlds.


Boolean algebra, despite being limited to the binary set $\{0,1\}$, is related to the subtropical algebra by virtue of having the same operations, and is thus a restriction of the latter to $\{0,1\}$. By the same token, when both factor matrices are binary, their subtropical product coincides with the Boolean product, and hence the Boolean matrix factorization can be seen as a degenerate case of the subtropical matrix factorization problem. The dioid properties of the Boolean algebra can be checked trivially. The motivation for the Boolean matrix factorization comes from the fact that in many applications data is naturally represented as a binary matrix (e.g. transaction databases), which makes it reasonable to seek decompositions that preserve the binary character of the data. The conceptual and algorithmic analysis of the problem was done by \citet{miettinen09matrix}, with the focus mainly on the data mining perspective of the problem. For a linear algebra perspective see \citet{kim1982boolean}, where the emphasis is put on the existence of exact decompositions. A number of algorithms have been proposed for solving the BMF problem \citep{miettinen2008discrete, lu2008optimal, lucchese2014unifying, karaev2015getting}.

\subsection{Tropical algebra.}
Another close cousin of the max-times algebra is the max-plus, or so called tropical algebra, which uses plus in place of multiplication. It is also a dioid due to the idempotent nature of the $\max$ operation. As was mentioned earlier, the two algebras are isomorphic, and hence many of the properties are identical (see Sections \ref{sec:notation} and \ref{sec:theory} for more details).

Despite the theory of the tropical algebra being relatively young, it has been thoroughly studied in recent years. The reason for this is that it finds extensive applications in various areas of mathematics and other disciplines. An example of such a field is the discrete event systems (DES)~\citep{cassandras08introduction}, where the tropical algebra is ubiquitously used for modeling \citep[see e.g.][]{baccelli92synchronization,cohen99max-plus}. Other mathematical disciplines where the tropical algebra plays a crucial role are optimal control \citep{gaubert1997methods}, asymptotic analysis \citep{dembo2010large, maslov1992idempotent, akian1999densities}, and decidability \citep{simon1978limited, simon1994semigroups}.

Research on tropical matrix factorization is of interest for us because of the above mentioned isomorphism between the two algebras. However as was explained in Section \ref{sec:theory}, the approximate matrix factorizations are not directly transferable as the errors can differ dramatically. It should be mentioned that in the general case the problem of the tropical matrix factorization is NP-complete \citep[see e.g.][]{shitov2014complexity}. \Citet{de2002qr} demonstrated that if the max-plus algebra is extended in such a way that there is an additive inverse for each element, then it is possible to solve many of the standard matrix decomposition problems. Among other results the authors obtained max-plus analogues of QR and SVD. They also claimed that the techniques they propose can be readily extended to other types of classic factorizations (e.g. Hessenberg and LU decomposition). Despite the apparent successes in the realm of tropical matrix factorization, its subtropical counterpart has not received much attention, and to the best of our knowledge the first work on the subject was done by \citet{karaev16capricorn}.

The problem of solving tropical linear systems of equations arises naturally in numerous applications, and is also closely related to matrix factorization. In order to illustrate this connection, assume that we are given a tropical matrix $\mA \in \tropalg^{n\times m}$ and one of the factors $\mB \in \tropalg^{n\times k}$. Then the other factor $\mC \in \tropalg^{k\times m}$ can be found by solving the following set of problems
\begin{equation} \label{alternating_updates}
  \mC_j = \argmin_{\vc \in \tropalg^k} \|\mB \tropprod \vc - \mA_j\|_F, \; j=1,\dots, m\;.
  \end{equation}
  Each problem in \eqref{alternating_updates} requires ``approximately'' solving a system of tropical linear equations. The minus operation in \eqref{alternating_updates} does not belong to the tropical semiring, so the approximation here should be understood in terms of minimizing the classical distance. The general form of tropical linear equations
\begin{equation} \label{trop:lin}
  \mA \vx \tropadd \vb = \mC \vx \tropadd \vd
\end{equation}
is not always solvable \citep[see e.g.][]{gaubert1997methods}; however various techniques exist for checking the existence of the solution for particular cases of \eqref{trop:lin}.

For equations of the form $\mA \vx = \vb$ the feasibility can be established for example through the so called \emph{matrix residuation}. There is a general result that for an $n$-by-$m$ matrix $\mA$ over a complete idempotent semiring, the existence of the solution can be checked in $O(nm)$ time \citep[see][]{gaubert1997methods}. Although the tropical algebra is not complete, there is an efficient way of finding if the solution exists \citep{cuninghame1979minimax, zimmermann2011linear}. It was shown by \citet{butkovivc2003max} that this type of tropical equations is equivalent to the set cover problem, which is known to be NP-hard. This directly affects the max-times algebra through the above-mentioned isomorphism and makes the problem of precisely solving max-times linear systems of the form $\mA \vx = \vb$ infeasible for high dimensions.

Homogeneous equations $\mA\vx = \mB\vx$ can be solved using the \emph{elimination} method, which is based on the fact that the set of solutions of a homogeneous system is a finitely generated semimodule \citep{butkovic1984elimination} \citep[independently rediscovered by][]{gaubert1992theorie}. If only a single solution is required, then according to \citet{gaubert1997methods}, a method by \citet{walkup1998general} is usually the fastest in practice.

Now let $\mA$ be a tropical square matrix of size $n\times n$. For complete idempotent semirings a solution to the equation $\vx = \mA \vx \tropadd \vb$ is given by $\vx = \matr{A^*}\vb$ \citep[see e.g.][]{salomaa2012automata}, where the operator $\matr{A^*}$ is defined as 
\[
\matr{A^*} = \tropadd_{k=1}^\infty \mA^k \;.
\]
Since the tropical semiring is not complete (it is missing the $\infty$ element), $\matr{A^*}$ can not always be computed. However, when there are no positive weight circuits in the graph defined by $\mA$, then we have $\matr{A^*} = \mA^0 \tropadd \dots \tropadd \mA^{n-1}$, and all entries of $\matr{A^*}$ belong to the tropical semiring \citep{baccelli92synchronization}. Computing the operator $\mA^*$ takes time $O(n^3)$ \citep[see e.g.][]{gondran1984graphs, gaubert1997methods}.

Another important direction of research is the eigenvalue problem $\mA\vx = \lambda\vx$. Tropical analogues of the Perron--Frobenius theorem \citep[see e.g.][]{vorobyev2extremal, maslov1992idempotent}, and Collatz--Wielandt formula \citep{bapat1995pattern, gaubert1992theorie} were developed. For a general overview of the results in the $(\max, +)$ spectral theory, see for example \citet{gaubert1997methods}.

Tropical algebra and tropical geometry were used by \citet{gartnertropical} to construct a tropical analogue of an SVM. Unlike in the classical case, tropical SVMs are localized, in the sense that the kernel at any given point is not influenced by all the support vectors. Their work also utilizes the fact that tropical hyperplanes are somewhat more complex than their counterparts in the classical geometry, which makes it possible to do multiple category classification with a single hyperplane.



\section{Conclusions}
\label{sec:conclusions}

Subtropical low-rank factorizations are a novel approach for finding latent structure from nonnegative data. The factorizations can be interpreted using the winner-takes-it-all interpretation: the value of the element in the final reconstruction depends only on the largest of values in the corresponding elements of the rank-1 components (cf.\ NMF, where the value in the reconstruction is the \emph{sum} of the corresponding elements). That the factorizations are different does not necessarily mean that they are better in the terms of reconstruction error, although they can yield lower reconstruction error than even SVD. It does mean, however, that they find different structure from the data. This is an important advantage, as it allows the data analyst to use both the classical factorizations and the subtropical factorizations to get a broader understanding of the kinds of patterns that are present in the data.

Working in the subtropical algebra is harder than in the normal algebra, though. The various definitions for the rank, for example, do not agree, and computing many of them -- including the subtropical Schein rank, which is arguably the most useful one for data analysis -- is computationally hard. That said, our proposed algorithms, \Capricorn and \Cancer, can find the subtropical structure when it is present in the data. Not every data have subtropical structure, though, and due to the complexity of finding the optimal subtropical factorization we cannot distinguish between the cases where our algorithms fail to find the latent subtropical structure, and where it does not exist. Based on our experiments with synthetic data, our hypothesis is that the failure of finding a good factorization indicates the lack of the subtropical structure rather than the algorithms' failure.

That said, the presented algorithms are heuristics. Developing algorithms that achieve better reconstruction error is naturally an important direction of future work. In our \Equator framework, this hinges on the task of finding the rank-1 components. In addition, the scalability of the algorithms could be improved. A potential direction could be to take into account the sparsity of the factor matrices in dominated decompositions. This could allow one to concentrate only on the non-zero entries in the factor matrices. 

The connection between Boolean and (sub-)tropical factorizations raises potential directions for future work. The continuous framework could allow for easier optimization in the Boolean algebra. Also, the connection allows us to model combinatorial structures (e.g.\ cliques in a graph) using subtropical matrices. This could allow for novel approaches on finding such structures using continuous subtropical factorizations.


\bibliographystyle{abbrvnat}
\bibliography{library}  
\end{document}